\title{Robust LLM Alignment via Distributionally Robust Direct Preference Optimization}
\author{\textbf{Zaiyan Xu$^{1}$, Sushil Vemuri$^{1}$, Kishan Panaganti$^{2,}$\thanks{Work done as postdoctoral researcher at the California Institute of Technology.}\;\;, Dileep Kalathil$^{1}$, \qquad\qquad\qquad\qquad\qquad Rahul Jain$^{3}$, Deepak Ramachandran$^{3}$} \\
  $^{1}$Texas A\&M University, $^{2}$Tencent AI Lab, $^{3}$Google DeepMind. \\Emails:  \texttt{\{zxu43, sushil22, dileep.kalathil\}@tamu.edu}, \texttt{kpb.research@gmail.com}, \texttt{\{rahulajain, ramachandrand\}@google.com}
}
\begin{document}

\makeatletter
\patchcmd{\@fnsymbol}{*}{\dagger}{}{}
\makeatother

\maketitle

\begin{abstract}
  A major challenge in aligning large language models (LLMs) with human preferences is the issue of \textit{distribution shift}. LLM alignment algorithms rely on static preference datasets, assuming that they accurately represent real-world user preferences. However, user preferences vary significantly across geographical regions, demographics, linguistic patterns, and evolving cultural trends. This preference distribution shift leads to catastrophic alignment failures in many real-world applications.  We address this problem using the principled framework of distributionally robust optimization, and develop two novel distributionally robust direct preference optimization (DPO) algorithms, namely, Wasserstein DPO (WDPO) and Kullback–Leibler DPO (KLDPO). We characterize the sample complexity of learning the optimal policy parameters for WDPO and KLDPO. Moreover, we propose scalable gradient descent-style learning algorithms by developing suitable approximations for the challenging minimax loss functions of  WDPO and KLDPO.  Our empirical experiments using benchmark data sets and LLMs demonstrate the superior performance of  WDPO and KLDPO in substantially improving the alignment when there is a preference distribution shift. 
\end{abstract}

\section{Introduction}\label{sec:introduction}
The alignment of large language models (LLMs) with human values and preferences is a central objective in machine learning, enabling these models to produce outputs that are useful, safe, and aligned with human intent. Since LLMs are trained on vast, diverse datasets using self-supervised learning, an additional alignment phase is often required to refine their behavior based on human feedback. A widely adopted approach for this is Reinforcement Learning from Human Feedback (RLHF) \citep{christiano2017deep, ziegler2019fine, ouyang2022training}, which involves training a reward model using human preference data and optimizing the LLM  using reinforcement learning (RL)  approaches, such as proximal policy optimization. More recently, Direct Preference Optimization (DPO) has emerged as an alternative that simplifies the alignment process by directly optimizing model parameters based on human preferences without requiring an explicit reward model. These alignment techniques have played a crucial role in improving the ability of LLMs to generate responses that adhere to human expectations and societal norms, leading to today's powerful chat models \citep{achiam2023gpt, touvron2023llama}.

Despite the importance of the LLM alignment problem,  RLHF and DPO remain fundamentally challenging and fragile, mainly due to three reasons. $(i)$ \emph{Diversity of human preferences:} Standard RLHF/DPO approaches implicitly assume that human preferences can be accurately captured by a single reward function.  In reality, human preferences are highly diverse, context-dependent, and distributional, making it infeasible to represent them with a one-size-fits-all optimization framework \citep{zhao2024group, durmus2024towards}. Standard preference-learning methods tend to skew toward the preferences represented in the majority of training data, disproportionately penalizing minority opinions and reinforcing biases \citep{chakraborty2024maxmin}.  $(ii)$ \emph{Reward hacking:} The quality of human preference feedback is inherently noisy, ambiguous, and inconsistent, as they are collected from human annotators who may lack domain expertise, exhibit labeling fatigue, or hold conflicting opinions \citep{zhang2025diverging, wu2025towards}, which can often lead to misaligned preference estimation. This issue is exacerbated by reward hacking, where models learn undesirable shortcuts to maximize the estimated reward function, generating responses that appear aligned but deviate from genuine human intent \citep{amodei2016concrete,skalse2022defining,eisenstein2024helping}. $(iii)$ \emph{Distribution shift:} Alignment algorithms use static preference datasets for training, collected under controlled conditions. However, the preferences of real-world users can often be out-of-distribution from that of the training data, depending on the geographical region, demography,  linguistic patterns, and emerging social trends, among many others. A model aligned using a specific fixed dataset may fail catastrophically when deployed to users whose preference distribution does not match that of the training data \citep{casper2023open,levine2023baseline,kirk2024understanding}.
\begin{figure*}[!ht]
    \centering 
    \includegraphics[width=\linewidth]{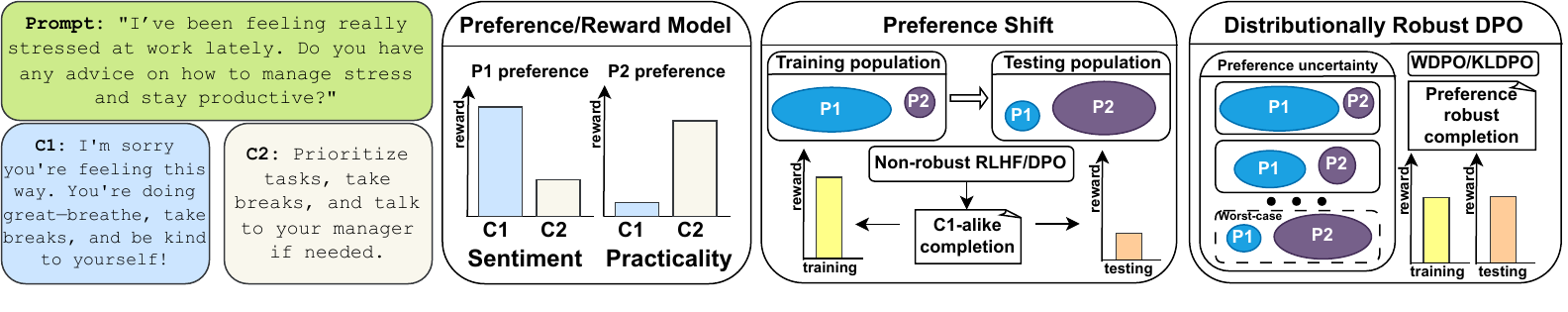}
    \caption{If the training population predominantly uses preference model 1 (P1), a non-robust RLHF/DPO model will favor Completion 1 (C1). However, deploying this model to a test population that prefers model 2 (P2), which favors Completion 2, leads to poor performance. Our distributionally robust DPO (WDPO/KLDPO) addresses this by optimizing across an uncertainty set of preference models, ensuring robust performance under preference shifts.}
    \label{fig:main-diagram}
\end{figure*}

In this paper, we address the fragility of the LLM alignment using DPO, with a particular focus on the challenges arising from the \emph{preference distribution shift}.  DPO reduces the alignment problem to a supervised learning problem. It is known that the performance of supervised learning algorithms degrades significantly in the out-of-distribution setting \citep{taori2020measuring,koh2021wilds}, which is exacerbated due to the realistic distribution shift scenarios arising in the LLM deployment. Distributionally robust optimization/learning framework has been recently used to address the issue of distribution shift in various settings \citep{duchi2021learning, kuhn2019wasserstein, chen2020distributionally}. This framework considers an uncertainty set of data distributions around a nominal distribution (typically the training data distribution) and solves a minimax optimization problem to minimize the expected loss, where the expectation is taken with respect to the distribution in the uncertainty set that maximizes the loss. The distributionally robust learning approach has been successfully applied,  with theoretical guarantees and scalable algorithms,  in supervised learning \citep{chen2018robust, namkoong2016stochastic, levy2020large}, multi-armed bandits \citep{si2020distributionally, yang2023distributionally}, and reinforcement learning \citep{wang22policygradient,panaganti-rfqi, zhou2024natural, xu-panaganti-2023samplecomplexity}. This motivates us to address the following questions:
\begin{quote}
    \textit{Can distributionally robust learning mitigate the impact of distribution shift in DPO-based LLM alignment? What theoretical guarantees can be established for such methods? How can we design tractable, gradient-based algorithms to implement them? How do these approaches empirically improve alignment performance?}
\end{quote}
Distributionally robust learning for LLM alignment presents challenges beyond standard supervised settings. In supervised learning, distributional robustness is often tractable due to well-behaved convex losses. In contrast, RL poses more complex forms of distribution shift, both exogenous (e.g., user preference drift) and endogenous (e.g., mismatch between the learned and logging policies). Although DPO is framed as a supervised objective, its likelihood-ratio formulation, based on pairwise comparisons and a reference policy, derives from KL-regularized reward maximization, linking it closely to RL. Like imitation learning and offline RL, which address RL problems through supervised proxies (e.g., behavior cloning, Q-function regression), DPO inherits the same instability and sensitivity to distribution shift. These challenges are amplified in the distributionally robust setting, where the non-convex min-max objective is especially hard to optimize at LLM scale, making standard alternating-gradient methods unstable and impractical. We answer the above questions affirmatively and address the associated challenges through the following contributions:

\begin{enumerate}
    \item To the best of our knowledge, this is the first work to propose a unified mathematical and algorithmic framework for addressing preference shift in LLM alignment through distributionally robust optimization. Our formulation leads to two robust DPO variants, Wasserstein DPO (WDPO) and Kullback-Leibler DPO (KLDPO), with provable guarantees. In particular, for log-linear policies, we show that the estimation error of the robust policy parameters converges at a rate of $O(n^{-1/4})$.
    \item We develop computationally tractable gradient descent algorithms for WDPO and KLDPO that can be seamlessly integrated into existing LLM alignment pipelines. 
    \item Empirically, we show that standard DPO is sensitive to preference distribution shift, leading to degraded performance when training and evaluation rewards differ. In contrast, our robust variants, WDPO and KLDPO, consistently achieve superior performance across diverse alignment tasks. For example, we fine-tune LLaMA-3.2-1B/3B-Instruct and LLaMA-3.1-8B-Instruct models on prompts from the HelpSteer2 dataset \citep{wang2024helpsteer2} using preferences generated by the ArmoRM reward model \citep{ArmoRM}, and evaluate them on distinct reward objectives from the OpenLLM Leaderboard \citep{open-llm-leaderboard-v2}.
\end{enumerate}

\section{Related Work}

\textbf{Robust RLHF: } \citet{bai2022training} proposed to adjust weights on the combination of loss functions based on different topics (harmless vs. helpful) for robust reward learning. \citet{chakraborty2024maxmin} proposed to learn multiple reward functions for different sub-populations through an expectation-maximization approach,  and a robust policy based on these rewards via a max-min optimization, which is different from our distributional robust learning approach. \citet{padmakumar2024beyond} augmented the existing binary preference datasets with synthetic preference judgments to estimate the diversity of user preferences. \citet{yan2024reward} proposed a Bayesian reward model ensemble to quantify the uncertainty of reward estimation and used it to reduce reward overoptimization. \citet{bukharin2024robust} proposed a robust RLHF approach for addressing the preference data corruption problem. 

\textbf{Robust DPO: } \citet{huang2025correcting} proposed $\chi$PO that implements the principle of pessimism in the face of uncertainty via regularization with the $\chi^{2}$-divergence for avoiding reward hacking/overoptimization w.r.t. the estimated reward. \citet{ramesh2024group} proposed Group Robust Preference Optimization (GRPO) to address the diverse preference problem by modeling the total loss as a weighted sum of individual DPO losses computed on separate preference datasets, and optimizing for the worst-case weighting. In contrast, our approach does not assume access to such a group structure and instead directly models distributional robustness over a single dataset that implicitly aggregates diverse preferences. \citet{chowdhury2024provably} considered the setting where  $\epsilon$-fraction of the preference labels in the training dataset is corrupted and proposed a noise-robust algorithm to mitigate its effect, assuming the knowledge of $\epsilon$. \citet{wu2024beta} focused on adapting the DPO penalty parameter $\beta$ to handle varying data quality within the training set. The most related work is \citet{wu2025towards}, which applies distributional robustness to mitigate data corruption and noise in preference data. Unlike our work, it does not address distribution shift or provide theoretical guarantees, and lacks empirical evaluation on preference distribution shift. Concurrent to our work, \citet{mandal2025distributionally} proposed a distributionally robust version of RLHF and DPO, using total variation (TV) uncertainty sets. However, their theoretical analysis offers the natural policy gradient (NPG) style optimization convergence guarantees for the loss function.  In contrast, we go one step further: by leveraging strong convexity, we establish finite-sample guarantees not just for the learning loss, but for convergence of the policy parameters. Our analysis is algorithm-agnostic and applies to any solver capable of optimizing the robust DPO loss. Additionally, our formulation uses KL and Wasserstein uncertainty sets, which are more standard in large-scale LLM alignment. 

\textbf{Distributionally Robust Learning: } Distributionally robust learning is a statistical learning framework designed to enhance model performance under distributional shifts between training and test data \citep{chen2018robust}. It employs a minimax approach where an adversary maximizes the expected loss by shifting the test distribution within a specified uncertainty set, while the learner minimizes this adversarial loss. This approach using the $f$-divergence \citep{namkoong2016stochastic,duchi2021learning, levy2020large} and the Wasserstein metric \citep{esfahani2015data,kuhn2019wasserstein,gao2022wasserstein} have gained significant attention recently. Distributionally robust algorithms have been developed to address problems in supervised learning \citep{chen2018robust, namkoong2016stochastic, levy2020large},  imitation learning \citep{bashiri2021distributionally, panaganti2023distributionally}, multi-armed bandits \citep{si2020distributionally, yang2023distributionally}, and reinforcement learning \citep{panaganti-rfqi, zhou2024natural, shi2024distributionally,yang2022toward,panaganti2025bridging}.

\section{Preliminaries}

\textbf{Notations: } We use calligraphic letters for sets, e.g., $\states$. $\normns{\cdot}$ denotes the Euclidean norm. When $\Sigma$ is a positive semi-definite matrix, we write $\normns{x}_{\Sigma}=\sqrt{x^{\top}\Sigma x}$ as a semi-norm of $x$. For any measure $\sfP$, we use $\sfP_n$ to denote the empirical distribution constructed using $n$ i.i.d. samples, $x_1,\dots,x_n$, from $\sfP$, i.e., $\sfP_n = (1/n)\sum_{i=1}^n\delta_{x_i}$, where $\delta_x$ is the Dirac measure. We use $\sigma$ to denote the sigmoid (standard logistic) function. We use $l(z;\theta)$ and $l_z(\theta)$ to denote the loss incurred by sample $z$ with policy parameter $\theta$. For any set $\cZ$, $\cP(\cZ)$ is the set of all Borel measures over $\cZ$. For any positive semi-definite matrix $\Sigma$, $\lambdamin(\Sigma)$ and $\lambdamax(\Sigma)$ denote its smallest and largest eigenvalues.

\textbf{Wasserstein Distance: } For a given set $\cZ$, equipped with a metric $d$, the Wasserstein distance of order $p$ between two distributions $\mu,\nu\in\cP(\cZ)$ is defined as (see \citet{villani2009optimal}):
    \begin{equation*}
        \sfW_p(\mu,\nu) = \min_{\gamma\in\cP(\cZ\times\cZ)} \left\{ \int_{\cZ\times\cZ} d^p(x, x')\gamma(dx,dx') \colon \text{$\gamma$ has marginal distributions $\mu,\nu$}\right\}.
    \end{equation*}

\textbf{Kullback-Leibler Divergence: } For any two probability distributions $\sfP$ and $\sfQ$ defined on $\cZ$, the Kullback-Leibker (KL) divergence is defined as $\KLdiverg{\sfP}{\sfQ} = \sum_{z\in\cZ} \sfP(z) \log (\sfP(z)/\sfQ(z)).$

\textbf{Reinforcement Learning from Human Feedback: }\label{sec:prelim-rlhf} The RLHF paradigm consists of three steps:

\textit{Step 1: Supervised Fine-tuning (SFT).} SFT involves fine-tuning a pre-trained LLM through supervised learning on high-quality data,  curated for the downstream tasks.

\textit{Step 2: Reward Modelling.} In the second step, given any context $s \in \states$, two responses $a^1,a^2 \in \actions$ are independently sampled from the behavior policy $\pi^o$ (typically the SFT policy $\piSFT$). Then, a (human) labeler provides a preference response between these responses. We assume that the preference responses are generated according to the  Bradley-Terry (BT) model \citep{bradley1952rank}:
\begin{equation}\label{eq:BT-model}
        P^*(a^1\succ a^2 \mid s) = \frac{\exp{r^*(s,a^1)}}{\exp{r^*(s,a^1)} + \exp{r^*(s,a^2)}},
\end{equation}
       where  $a^1\succ a^2$ denotes $a^1$ being preferred over $a^2$, and  $r^*$ is the underlying unknown reward function. We use $a^w,a^l$ to denote the preferred and dis-preferred responses, respectively. We assume access to a static dataset of comparison, $\cD=\{(s_i,a^w_i,a^l_i)\}_{i=1}^n$, where $s_i$'s are sampled from some initial prompt (context) distribution $\mu^o$, $a^1_i,a^2_i$'s are independently sampled from $\piSFT$, and the preferences responses are sampled from the BT model $P^*$. With $\cD$, we can learn a parameterized reward model $r_\phi(s,a)$ by  minimizing the  maximum likelihood estimation (MLE) loss,
\begin{equation*}
    \cLRLHF(r_\phi; \cD) = - \EE_{(s,a^w,a^l)\sim\cD} [\log\sigma(r_\phi(s,a^w)-r_\phi(s,a^l))].
\end{equation*}

\textit{Step 3: RL Fine-Tuning.} In the final step, the optimal policy $\pi^*$ under the reward $r_\phi$ is obtained by solving the KL-regularized reward maximization problem given by
\begin{equation}\label{eq:rlhf-objective}
    \max_\pi \EE_{s\sim\mu} \left[ \EE_{a\sim\pi(\cdot\mid s)}[r_\phi(s,a)]   -\beta\KLdiverg{\pi(\cdot\mid s)}{\piref(\cdot\mid s)}\right],
\end{equation}
where $\beta$ is a parameter controlling the deviation from the base reference policy $\piref$.

\textbf{Direct Preference Optimization (DPO): } The DPO approach \citep{rafailov2023direct} leverages the fact that the unknown reward function can be expressed in terms of the optimal policy and the reference policy. Formally,  given any reward function $r^*$, the optimal solution  of  \cref{eq:rlhf-objective} takes the form $\pi^*(a\mid s) = \frac{1}{Z^*(s)}  \piref(a\mid s)\exp{r^*(s,a)/\beta}$, where $Z^*(s)$ denotes the partition (normalizing) function. Rearranging the above, we get $ r^*(s,a) = \beta \log \frac{\pi^*(a\mid s)}{\piref(a\mid s)} + \beta\log Z^*(s)$ for all $(s,a)$. Substituting this into \cref{eq:BT-model}, the optimal RLHF policy $\pi^*$  satisfies the preference model:{
\begin{equation*}
    P^*(a^1\succ a^2\mid s) = \sigma\bigg(\beta\log\frac{\pi^*(a^1\mid s)}{\piref(a^1\mid s)} - \beta\log\frac{\pi^*(a^2\mid s)}{\piref(a^2\mid s)}\bigg).
\end{equation*}
}

Using the preference response dataset $\cD$, we can learn the optimal policy directly by minimizing the  MLE loss for a parameterized policy $\pi_\theta$, 
\begin{equation}\label{eq:dpo-loss-dataset}
    \cLDPO(\pi_\theta;\cD)=-\EE_{(s,a^w,a^l)\sim\cD} \bigg[   \log\sigma\bigg( \beta\log\frac{\pi_\theta(a^w\mid s)}{\piref(a^w\mid s)} - \beta \log\frac{\pi_\theta(a^l\mid s)}{\piref(a^l\mid s)}\bigg)\bigg].
\end{equation}
\textbf{Distributional Uncertainty Sets:} Given any $\rho>0$ and $\sfP^o\in\cP(\cZ)$, we define the distributional uncertainty set as
\begin{equation}\label{eq:generic-uncertainty-set}
    \cP(\rho;\sfP^o) \coloneqq \{\sfP\in\cP(\cZ)\colon D(\sfP,\sfP^o)\leq \rho\},
\end{equation}
where $D(\cdot,\cdot)$ is some distance metric between two probability measures, e.g., $\sfW_p$ and $D_{\mathrm{KL}}$. 

\section{Distributionally Robust DPO}

In this section, we formulate our Wasserstein DPO (WDPO) and Kullback-Leibler DPO (KLDPO). 

\textbf{Sampling Procedure: } As described in \cref{sec:prelim-rlhf}, a prompt $s \in \states$ is drawn from an initial distribution $\mu^o$, and two responses $a^1, a^2 \sim_{\mathrm{i.i.d.}} \pi^o(\cdot \mid s)$ are sampled independently (with $\pi^o = \piSFT$ in practice). Following \citet{zhu2023principled}, we define $y\in\{0,1\}$ to indicate preference: $y=1$ if $a^1 \succ a^2\mid s$ and $y=0$ otherwise. The label $y$ is drawn from a Bernoulli distribution defined by the BT model $P^*$. The full data-generating distribution is given below.
\begin{definition}[Joint data-generating distribution]
    Consider the product space $\cZ\coloneqq \states\times\actions\times\actions\times\{0,1\}$. We define the nominal data-generating distribution as
    \begin{equation*}
        \sfP^o(s,a^1,a^2,y) = \mu^o(s)\pi^o(a^1\mid s)\pi^o(a^2\mid s)\cdot[\indic_{\{y=1\}}P^*(a^1\succ a^2\mid s) + \indic_{\{y=0\}}P^*(a^2\succ a^1\mid s)].
    \end{equation*}
\end{definition}
We will also denote $z=(s,a^1,a^2,y)\in\cZ$ and $\sfP^o(z)=\sfP^o(s,a^1,a^2,y)$. We assume that  $\sfP^o$ generates the dataset $\cD=\{z_i\}_{i=1}^n$ used for learning, i.e., $z_i\sim\sfP^o$.

\subsection{Distributionally Robust DPO}
From the DPO objective (\cref{eq:dpo-loss-dataset}), we define the \textit{pointwise} DPO loss function as follows
\begin{equation}\label{eq:pointwise-dpo-loss}
    l(z;\theta) = -y \log\sigma(\beta h_\theta(s,a^1,a^2)) - (1-y) \log\sigma(\beta h_\theta(s,a^2,a^1)),
\end{equation}
where $h_\theta(s,a^1,a^2)\coloneqq \log\frac{\pi_\theta(a^1\mid s)}{\piref(a^1\mid s)}-\log\frac{\pi_\theta(a^2\mid s)}{\piref(a^2\mid s)}$ is the \textit{preference score} of an answer $a^1$ relative to another one $a^2$ (but parameterized in policy parameter $\theta)$. Let $\cP(\rho;\sfP^o)$ be a distributional uncertainty set centered around $\sfP^o$ with radius $\rho>0$. Following the principles of distributionally robust optimization (DRO), we formulate the distributionally robust DPO objective as:
\begin{equation}\label{eq:generic-drdro-objective}
\min_{\theta} \max_{\sfP\in\cP(\rho;\sfP^o)} \EE_{z\sim\sfP} [l(z;\theta)].
\end{equation}
Intuitively, we aim to find the best policy under the worst-case data distribution. 

When we have a Wasserstein uncertainty set $\cP_{\sfW_p}$, i.e., \cref{eq:generic-uncertainty-set} equipped with the $p$-th order Wasserstein distance, we define the Wasserstein DPO (WDPO) loss as follows
\begin{equation}\label{eq:wdpo-loss}
    \cLW(\theta;\rho) = \sup_{\sfP\in\cP_{\sfW_p}(\rho;\sfP^o)}\EE_{z\sim\sfP}[l(\theta; z)],
\end{equation}
Similarly, given a Kullback-Leibler uncertainty set $\cP_{\mathrm{KL}}(\rho;\sfP^o)$, we define the KLDPO loss as follows
\begin{equation}\label{eq:kldpo-loss}
    \cLKL(\theta;\rho) = \sup_{\sfP\in\cP_{\mathrm{KL}}(\rho;\sfP^o)}\EE_{z\sim\sfP}[l(\theta; z)].
\end{equation}
When the nominal distribution $\sfP^o$ is replaced with its empirical counterpart, i.e., $\sfP^o_n\coloneqq(1/n)\sum_{i=1}^n\delta_{z_i}$, where $z_1,\dots,z_n$ are $n$ i.i.d. samples from $\sfP^o$, we use $\cLW_n(\theta;\rho)$ and $\cLKL_n(\theta;\rho)$ to denote the empirical WDPO and KLDPO losses incurred by the policy parameter $\theta$, respectively.

\section{Theoretical Analysis}
In this section, we present the sample complexity guarantees for our WDPO and KLDPO algorithms. We make the following assumptions for the rest of the papers. 

\begin{assumption}[Log-linear policy class]\label{assum:log-linear-assumption}
    Let $\psi\colon\states\times\actions\to\RR^d$ be a known $d$-dimensional feature mapping with $\max_{s,a}\normns{\psi(s,a)}_2\leq 1$. Assume a bounded policy parameter set $\Theta\coloneqq\{\theta\in\RR^d\colon \normns{\theta}_2\leq B\}$. We consider the following class of log-linear policies:
     \begin{equation}\label{eq:log-linear-policy-class}
        \Pi = \bigg\{\pi_\theta\colon \pi_\theta(a\mid s)=\frac{\exp{\theta^{\top}\psi(s,a)}}{\sum_{a'\in\actions}\exp{\theta^{\top}\psi(s,a')}} \bigg\}.
    \end{equation}

\end{assumption}
\begin{remark}
    This is a standard assumption in the  theoretical analysis of the RL algorithms \citep{agarwal2021theory,modi2020sample}, RLHF \citep{zhu2023principled}, and DPO  \citep{nika2024reward,chowdhury2024provably}. Our analysis can be extended to the neural policy class where $\theta^{\top}\psi(s,a)$ is replaced $f_\theta(s,a)$, where $f_\theta$ is a neural network with twice differentiability and smoothness assumptions.

\end{remark}
We also make the following data coverage assumption on the uncertainty set $\cP(\rho;\sfP^o)$.
\begin{assumption}[Regularity condition]\label{assum:uniform-data-cov-assumption}
   There exists $\lambda>0$ such that
    \begin{equation*}
    \Sigma_{\sfP}\coloneqq\EE_{(s,a^1,a^2,y)\sim\sfP}[(\psi(s,a^1)-\psi(s,a^2))(\psi(s,a^1)-\psi(s,a^2))^{\top}] \succeq \lambda I, \quad\forall \sfP\in\cP(\rho;\sfP^o).
    \end{equation*}
\end{assumption}
\begin{remark}
    We note that similar assumptions on data coverage under linear architecture models are standard in the offline RL literature \citep{agarwal2019reinforcement,wang2021what,jin2021pessimism}.
    Implicitly, \cref{assum:uniform-data-cov-assumption} imposes $\lambda\leq \lambdamin(\Sigma_{\sfP^o})$, which means that the data-generating distribution $\sfP^o$ has good coverage. 
\end{remark}

\subsection{Estimation Error for WDPO}
Let $\theta^*\in\argmin_{\theta\in\Theta}\cLDPO(\theta)$ be the ground-truth optimal policy parameter with respect to the true nominal distribution and let its empirical counterpart be $\theta_n\in\argmin_{\theta\in\Theta}\cLDPO_n(\theta)$. Now for the robust policy parameters, we let $\thetaW\in\argmin_{\theta\in\Theta}\cLW(\theta;\rho)$, and let its empirical counterpart be $\thetaW_n\in\argmin_{\theta\in\Theta}\cLW_n(\theta;\rho)$. Now, present our main result on the sample complexity result for the convergence of the robust policy parameter. 

\begin{theorem}[Estimation error of $\thetaW_n$]\label{thm:wdpo-policy-parameter-convergence}
    Let $\delta\in(0,1)$. With probability at least $1-\delta$, we have
    \begin{equation*}
        \normns{\thetaW_n-\thetaW}^2_2 \leq \sqrt{\frac{8K^2\log(2/\delta)}{\gamma^2\lambda^2 n}},
    \end{equation*}
    where $\gamma=\frac{\beta^2e^{4\beta B}}{(1+e^{4\beta B})^2}$ and $K=\absns{\log\sigma(-4\beta B)}$, $\lambda$ is the regularity number defined in \cref{assum:uniform-data-cov-assumption}.
\end{theorem}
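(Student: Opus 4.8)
The plan is a standard M-estimation argument: turn the parameter error into an excess-risk gap via strong convexity of the population WDPO objective, then control that gap by a uniform deviation between $\cLW_n(\cdot;\rho)$ and $\cLW(\cdot;\rho)$. \emph{Step 1 (strong convexity).} Under \cref{assum:log-linear-assumption} the preference score $h_\theta(s,a^1,a^2)$ is affine in $\theta$ with $\nabla_\theta h_\theta(s,a^1,a^2)=v\coloneqq\psi(s,a^1)-\psi(s,a^2)$ and $h_\theta(s,a^2,a^1)=-h_\theta(s,a^1,a^2)$. Writing $\phi(u)=-\log\sigma(u)$ and using $\phi''(u)=\sigma(u)\sigma(-u)=\phi''(-u)$, a direct calculation gives $\nabla_\theta^2 l(z;\theta)=\beta^2\sigma(\beta h_\theta)\sigma(-\beta h_\theta)\,vv^{\top}$. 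Since $\normns{\theta}_2\le B$, $\normns{\psi(s,a)}_2\le 1$ (so $\normns{v}_2\le 2$) and the reference log-odds are bounded by $2B$, we get $\absns{\beta h_\theta}\le 4\beta B$, hence $\nabla_\theta^2 l(z;\theta)\succeq\gamma\,vv^{\top}$ with $\gamma=\beta^2 e^{4\beta B}/(1+e^{4\beta B})^2$. Therefore, for every $\sfP\in\cP_{\sfW_p}(\rho;\sfP^o)$, $\nabla_\theta^2\EE_{z\sim\sfP}[l(z;\theta)]\succeq\gamma\,\Sigma_{\sfP}\succeq\gamma\lambda I$ by \cref{assum:uniform-data-cov-assumption}; i.e. $\theta\mapsto\EE_{z\sim\sfP}[l(z;\theta)]-\tfrac{\gamma\lambda}{2}\normns{\theta}_2^2$ is convex for every such $\sfP$, and since a supremum of convex functions is convex, $\cLW(\cdot;\rho)$ is $\gamma\lambda$-strongly convex on $\Theta$ (and likewise $\cLW_n(\cdot;\rho)$, using the empirical-ball version of \cref{assum:uniform-data-cov-assumption}).

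\emph{Step 2 (parameter error $\to$ excess risk).} By $\gamma\lambda$-strong convexity together with the first-order optimality of $\thetaW$ over the convex set $\Theta$, $\cLW(\theta;\rho)\ge\cLW(\thetaW;\rho)+\tfrac{\gamma\lambda}{2}\normns{\theta-\thetaW}_2^2$ for all $\theta\in\Theta$. Taking $\theta=\thetaW_n$ and inserting $\pm\cLW_n$,
\begin{align*}
\tfrac{\gamma\lambda}{2}\normns{\thetaW_n-\thetaW}_2^2
&\le \cLW(\thetaW_n;\rho)-\cLW(\thetaW;\rho)\\
&= \big(\cLW(\thetaW_n;\rho)-\cLW_n(\thetaW_n;\rho)\big)+\big(\cLW_n(\thetaW_n;\rho)-\cLW_n(\thetaW;\rho)\big)+\big(\cLW_n(\thetaW;\rho)-\cLW(\thetaW;\rho)\big)\\
&\le 2\sup_{\theta\in\Theta}\absns{\cLW_n(\theta;\rho)-\cLW(\theta;\rho)},
\end{align*}
where the middle term is nonpositive since $\thetaW_n$ minimizes $\cLW_n(\cdot;\rho)$ over $\Theta\ni\thetaW$.

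\emph{Step 3 (uniform concentration, the main obstacle).} It remains to establish $\sup_{\theta\in\Theta}\absns{\cLW_n(\theta;\rho)-\cLW(\theta;\rho)}\le K\sqrt{\log(2/\delta)/(2n)}$ with probability at least $1-\delta$, where $0\le l(z;\theta)\le K=\absns{\log\sigma(-4\beta B)}$ from Step 1. Bounding this gap by the Wasserstein deviation $\sfW_p(\sfP^o_n,\sfP^o)$ is too lossy (it gives the dimension-cursed rate $n^{-1/d}$); instead I would invoke Wasserstein strong duality so that the WDPO loss admits a dual representation as an infimum over a single multiplier $\eta\ge 0$ of a penalized average of the functions $\Psi_\eta(z;\theta)=\sup_{z'}\{l(z';\theta)-\eta\,d^p(z,z')\}$, which are again confined to $[0,K]$ (lower bound by taking $z'=z$, upper bound since $l\le K$). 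This turns the deviation into an empirical process of $[0,K]$-bounded functions: a bounded-differences (McDiarmid) argument — each sample perturbs $\cLW_n(\theta;\rho)$ by at most $K/n$, uniformly in $\theta$ — controls the fluctuation of $\sup_{\theta}\absns{\cLW_n(\theta;\rho)-\cLW(\theta;\rho)}$ around its expectation, and the expectation is bounded using that $l(z;\cdot)$ (hence $\Psi_\eta(z;\cdot)$) depends on $\theta$ only through the scalar $\theta^{\top}v$, so Talagrand's contraction yields a dimension-free bound. Substituting into Step 2,
\begin{equation*}
\normns{\thetaW_n-\thetaW}_2^2\le\frac{4K}{\gamma\lambda}\sqrt{\frac{\log(2/\delta)}{2n}}=\sqrt{\frac{8K^2\log(2/\delta)}{\gamma^2\lambda^2 n}}.
\end{equation*}
The crux is exactly Step 3 — recovering the parametric $n^{-1/2}$ rate for the empirical--population WDPO gap despite the slow Wasserstein concentration of the empirical measure, uniformly over $\theta$ and with clean constants. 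Secondary technical points to settle are the attainment of the inner supremum (weak compactness of the Wasserstein ball, needed to push strong convexity and duality through) and ensuring a coverage condition of the form \cref{assum:uniform-data-cov-assumption} for the empirical uncertainty set $\cP_{\sfW_p}(\rho;\sfP^o_n)$, so that $\thetaW_n$ lies in a well-conditioned landscape.
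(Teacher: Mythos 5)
Your overall architecture is the same as the paper's: the same Hessian computation giving $\grad_\theta^2 l(z;\theta)=\beta^2\sigma(\beta h_\theta)\sigma(-\beta h_\theta)vv^{\top}\succeq \gamma vv^{\top}$, the same passage to $\gamma\lambda$-strong convexity of $\cLW(\cdot;\rho)$ via $\Sigma_{\sfP}\succeq\lambda I$ and the fact that a supremum of uniformly strongly convex functions is strongly convex, the same three-term decomposition with the middle term killed by optimality of $\thetaW_n$, and the same use of Wasserstein strong duality to replace the robust loss gap by a deviation of the Moreau--Yosida--type envelope (your $\Psi_\eta(z;\theta)$ is exactly $-l_\eta(z;\theta)$ in the paper, with the same $[0,K]$ bounds). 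Up to and including Step 2, your argument is the paper's argument.

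The divergence, and the gap, is in Step 3. You upper-bound the decomposition by $2\sup_{\theta\in\Theta}\absns{\cLW_n(\theta;\rho)-\cLW(\theta;\rho)}$ and then claim this supremum is at most $K\sqrt{\log(2/\delta)/(2n)}$. That constant cannot come out of the route you sketch: McDiarmid controls the fluctuation of the supremum around its \emph{expectation}, and the expectation of the supremum of this empirical process is an additional, nonzero complexity term (a Rademacher/covering contribution, at best of order $K\sqrt{d/n}$ even after contraction through the scalar $\theta^{\top}v$ — and note the dual function involves $\eta$ and the inner $\sup_{z'}$, so the contraction step is not as immediate as for $l$ itself). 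With that extra term your final bound would not match the theorem's clean constant. The paper avoids this entirely: its concentration lemma (\cref{lem:convergence-of-wdpo-loss}) is a \emph{fixed-}$\theta$ Hoeffding bound on $\EE_{\sfP^o}[l_\eta]-\EE_{\sfP^o_n}[l_\eta]$, which is uniform over the dual multiplier $\eta\geq 0$ for free (the envelope $K$ does not depend on $\eta$, and the $\inf_\eta$ passes through $\absns{\inf f-\inf g}\leq\sup\absns{f-g}$), and it is then applied only at the two endpoints $\thetaW$ and $\thetaW_n$ of the decomposition rather than uniformly over $\Theta$. So to reproduce the stated bound you should drop the uniform supremum and instead instantiate the pointwise concentration at those two parameters (as the paper does); if you insist on uniformity — which is the more careful treatment, since $\thetaW_n$ is data-dependent — you must either accept the extra complexity term in the constant or supply a covering/chaining argument the paper does not carry out. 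Your secondary remarks (attainment of the inner supremum, coverage for the empirical ball) are reasonable hygiene but are not load-bearing in the paper's proof.
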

\begin{proof}[Proof sketch]
    Strong duality of Wasserstein DRO (see \citet{gao-2022-distributionally} and \cref{cor:strong-duality-holds-for-dpo-loss}) helps us  reduce the difference $\abs{\cLW(\theta;\rho)-\cLW_n(\theta;\rho)}$ to the concentration $\absns{\EE_{z\sim \sfP^o}[l_\eta(z;\theta)] - \EE_{z\sim \sfP^o_n}[l_\eta(z;\theta)]}$, where $l_\eta(z;\theta)=\inf_{z\in\cZ} [\eta d^p(z,z') - l(z;\theta)]$ is called the \textit{Moreau-Yosida regularization} of $-l$ with parameter $1/\eta$. We show that, for all $\eta\geq 0$, all $l_\eta$ are uniformly bounded. We then use Hoeffding's inequality to obtain concentration. Detailed proof is in \cref{sec:proof-of-loss-function-convergence}.

    Next, when \cref{assum:uniform-data-cov-assumption} is in place, we can show that $g(\theta)\coloneqq \EE_{z\sim\sfP}[l(z;\theta)]$ is $\gamma$-strongly convex w.r.t. the positive definite norm $\normns{\cdot}_{\Sigma_{\sfP}}$. Further, by the property of supremum, we can show that $\cLW$ is $\gamma\lambda$-strongly convex but w.r.t. $\normns{\cdot}_{2}$. A detailed proof is provided in \cref{sec:proof-of-sup-dpo-strongly-convex}.
    
    Decompose $\cLW(\thetaW_n)-\cLW(\thetaW)$ into three terms: $\cLW(\thetaW_n;\rho)-\cLW_n(\thetaW_n;\rho)$, $\cLW_n(\thetaW_n;\rho)-\cLW_n(\thetaW;\rho)$, and $\cLW_n(\thetaW;\rho)-\cLW(\thetaW;\rho)$. The second term is non-positive since $\thetaW_n$ is the minimizer of $\cLW_n$. Now we apply the concentration of the WDPO loss function (see \cref{lem:convergence-of-wdpo-loss} in \cref{sec:proof-of-loss-function-convergence}) to $\absns{\cLW(\thetaW_n;\rho)-\cLW_n(\thetaW_n;\rho)}$ and $\absns{\cLW_n(\thetaW;\rho)-\cLW(\thetaW;\rho)}$. Finally, we use the property of strongly convex function (\cref{lem:strongly-convex-uniqueness-of-minimizer}) on $\cLW$ to acquire the policy parameter convergence. The detailed proof is in \cref{sec:proof-of-robust-policy-param-converg}. 
\end{proof}

We state the convergence result for DPO to facilitate comparison with its robust counterpart.
\begin{proposition}[Estimation error of (non-robust) DPO]\label{prop:dpo-policy-convergence}
    Let $\delta\in(0,1)$ and $\beta>0$.
    \begin{equation*}
        \normns{\theta_n-\theta^*}_{\Sigma_{\cD}+\lambda I} \leq 2\sqrt{\frac{4\beta^2}{\gamma^2 n}(d+\log(1/\delta)) + 2\lambda B^2},
    \end{equation*}
    with probability at least $1-\delta$ and where $\gamma=\frac{\beta^2e^{4\beta B}}{(1+e^{4\beta B})^2}$, and $\Sigma_{\cD} = \frac{1}{n}\sum_{i=1}^n(\psi(s_i,a^1_i)-\psi(s_i,a_i^2))(\psi(s_i,a^1_i)-\psi(s_i,a_i^2))^{\top}$ is the sample covariance matrix.
\end{proposition}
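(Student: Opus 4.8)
The plan is to adapt the log-linear maximum-likelihood analysis of \citet{zhu2023principled} to the DPO loss. The first observation is that, under \cref{assum:log-linear-assumption}, the DPO objective is exactly a (shifted) logistic regression: writing $\piref = \pi_{\theta_{\mathrm{ref}}}$ with $\theta_{\mathrm{ref}}\in\Theta$, the log-partition terms in $h_\theta$ cancel, so $h_\theta(s,a^1,a^2) = (\theta-\theta_{\mathrm{ref}})^{\top}x_z$ with $x_z \coloneqq \psi(s,a^1)-\psi(s,a^2)$, and crucially $\nabla_\theta h_\theta(z) = x_z$ is independent of $\theta$. Consequently the pointwise loss \cref{eq:pointwise-dpo-loss} is a logistic loss in $\beta x_z^{\top}\theta$ up to an additive constant; $\cLDPO_n$ is convex, $\theta_n$ is a global minimizer over the convex set $\Theta$, and $\nabla^2_\theta l(z;\theta) = \beta^2\sigma(\beta h_\theta(z))\bigl(1-\sigma(\beta h_\theta(z))\bigr)x_z x_z^{\top}$. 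Since $\normns{x_z}_2\le 2$ and $\normns{\theta}_2,\normns{\theta_{\mathrm{ref}}}_2\le B$, we have $\absns{\beta h_\theta(z)}\le 4\beta B$ on $\Theta$, hence $\sigma(\beta h_\theta)(1-\sigma(\beta h_\theta))\ge e^{4\beta B}/(1+e^{4\beta B})^2$ and therefore $\nabla^2\cLDPO_n(\theta)\succeq\gamma\Sigma_{\cD}$ for all $\theta\in\Theta$, i.e., $\cLDPO_n$ is $\gamma$-strongly convex in the seminorm $\normns{\cdot}_{\Sigma_{\cD}}$.

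Next I would convert optimality into a gradient bound. Put $\Delta\coloneqq\theta_n-\theta^*$. Combining $\cLDPO_n(\theta_n)\le\cLDPO_n(\theta^*)$ with a second-order Taylor expansion and the Hessian lower bound above,
\begin{equation*}
0\ \ge\ \cLDPO_n(\theta_n)-\cLDPO_n(\theta^*)\ \ge\ \langle\nabla\cLDPO_n(\theta^*),\Delta\rangle + \tfrac{\gamma}{2}\normns{\Delta}^2_{\Sigma_{\cD}},
\end{equation*}
and Cauchy--Schwarz in the $(\Sigma_{\cD}+\lambda I)$-geometry then gives
\begin{equation*}
\tfrac{\gamma}{2}\normns{\Delta}^2_{\Sigma_{\cD}}\ \le\ -\langle\nabla\cLDPO_n(\theta^*),\Delta\rangle\ \le\ \normns{\nabla\cLDPO_n(\theta^*)}_{(\Sigma_{\cD}+\lambda I)^{-1}}\,\normns{\Delta}_{\Sigma_{\cD}+\lambda I}.
\end{equation*}

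It then remains to bound the empirical gradient at $\theta^*$ and to finish with elementary algebra. By realizability (the true preference model is realized in the log-linear class, as encoded by $\theta^*$ being the ground-truth parameter), $\EE[y\mid s,a^1,a^2]=P^*(a^1\succ a^2\mid s)=\sigma(\beta h_{\theta^*})$, so $\nabla\cLDPO_n(\theta^*) = \tfrac1n\sum_{i=1}^n\beta\bigl(\sigma(\beta h_{\theta^*}(z_i))-y_i\bigr)x_{z_i}$ is an average of independent vectors that are centered conditionally on $(s_i,a_i^1,a_i^2)$, are bounded in norm by $2\beta$, and have conditional second moment dominated by $\beta^2 x_{z_i}x_{z_i}^{\top}$. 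A standard self-normalized concentration bound --- covering the unit $(\Sigma_{\cD}+\lambda I)$-ball and applying a Bernstein/MGF estimate, as in \citet{zhu2023principled} --- yields, with probability at least $1-\delta$, $\normns{\nabla\cLDPO_n(\theta^*)}_{(\Sigma_{\cD}+\lambda I)^{-1}}\le\beta\sqrt{(d+\log(1/\delta))/n}$. Finally, since $\theta_n,\theta^*\in\Theta$ we have $\normns{\Delta}_2\le 2B$, so $\normns{\Delta}^2_{\Sigma_{\cD}} = \normns{\Delta}^2_{\Sigma_{\cD}+\lambda I} - \lambda\normns{\Delta}^2_2 \ge \normns{\Delta}^2_{\Sigma_{\cD}+\lambda I} - 4\lambda B^2$; substituting into the display above and splitting on whether $\normns{\Delta}^2_{\Sigma_{\cD}}$ exceeds $\lambda\normns{\Delta}^2_2$ gives $\normns{\Delta}_{\Sigma_{\cD}+\lambda I}\le\max\{4\beta\gamma^{-1}\sqrt{(d+\log(1/\delta))/n},\,2\sqrt{2\lambda}\,B\}$, which is bounded by the claimed $2\sqrt{4\beta^2(d+\log(1/\delta))/(\gamma^2 n)+2\lambda B^2}$.

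The step I expect to be the main obstacle is the self-normalized gradient concentration: the weighting matrix $\Sigma_{\cD}+\lambda I$ is built from the very samples $x_{z_i}$ appearing in the sum, so the naive bound $(\Sigma_{\cD}+\lambda I)^{-1}\preceq\lambda^{-1}I$ would introduce a spurious $\lambda^{-1}$ factor; obtaining the clean dimension-only rate $\sqrt{(d+\log(1/\delta))/n}$ requires the covering argument over the data-dependent ellipsoid together with the domination of the conditional covariance of $\beta(\sigma(\beta h_{\theta^*}(z_i))-y_i)x_{z_i}$ by $\beta^2 x_{z_i}x_{z_i}^{\top}$. The remaining ingredients --- the reduction to logistic regression, the Hessian computation and strong convexity, and the closing quadratic/case-split algebra --- are routine.
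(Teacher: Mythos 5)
Your proposal is correct and follows essentially the same path as the paper's proof: the same reduction to a logistic loss via $\grad_\theta h_\theta(s,a^1,a^2)=\psi(s,a^1)-\psi(s,a^2)$, the same Hessian lower bound yielding $\gamma$-strong convexity in the seminorm $\normns{\cdot}_{\Sigma_{\cD}}$, the same basic inequality $\tfrac{\gamma}{2}\normns{\Delta}^2_{\Sigma_{\cD}}\le \normns{\grad_\theta \cLDPO_n(\theta^*)}_{(\Sigma_{\cD}+\lambda I)^{-1}}\normns{\Delta}_{\Sigma_{\cD}+\lambda I}$, and the same closing algebra (your case split is a slightly lossier variant of the paper's quadratic-formula step, but it lands within the stated constant). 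The one place you diverge is the step you yourself flag: for the self-normalized gradient bound the paper does not use a covering argument over the data-dependent ellipsoid. Instead it writes $\grad_\theta\cLDPO_n(\theta^*)=-\tfrac{\beta}{n}X^{\top}V$ with $V$ a bounded, conditionally centered random vector, expresses $\normns{\grad_\theta\cLDPO_n(\theta^*)}^2_{(\Sigma_{\cD}+\lambda I)^{-1}}=V^{\top}MV$ with $M=\tfrac{\beta^2}{n^2}X(\Sigma_{\cD}+\lambda I)^{-1}X^{\top}$, bounds $\tr(M)\le \beta^2 d/n$, $\tr(M^2)\le\beta^4 d/n^2$, $\normns{M}\le\beta^2/n$ by diagonalizing $X^{\top}X$, and applies the Hsu--Kakade--Zhang tail bound for sub-Gaussian quadratic forms; this handles the data-dependence of the weighting matrix directly by conditioning on the $x_i$'s, with no covering. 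Both routes work; the quadratic-form route gives the explicit constant $\tfrac{4\beta^2}{n}(d+\log(1/\delta))$ that the statement needs, whereas your sketched covering bound with leading constant $\beta\sqrt{(d+\log(1/\delta))/n}$ is optimistic (covering arguments of the \citet{zhu2023principled} type typically cost a larger absolute constant), and your case split needs that sharper constant to recover the stated bound exactly --- a constant-factor issue only, not a gap in the argument.
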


A matching result can be derived as a special case of \citet[Theorem 4.2]{chowdhury2024provably}. We provide an independent proof with precise constants in \cref{sec:proof-of-dpo-policy-convergence}.

\begin{remark}
    We would like to note that the estimation error rate of convergence for WDPO is $\normns{\thetaW_n-\thetaW}_2=O(n^{-1/4})$, from \cref{thm:wdpo-policy-parameter-convergence}. The estimation error rate of convergence for (non-robust) DPO is $\normns{\theta_n-\theta^*}_{\Sigma_{\cD}+\lambda I}=O(n^{-1/2})$, from \cref{prop:dpo-policy-convergence}. So, the estimation error rate of convergence for WDPO is worse than that of  (non-robust) DPO. This arises due to significant challenges exclusive to the robust setting. For example, for the non-robust DPO, we can calculate the closed-form expression of $\grad_\theta (1/n)\sum_{i=1}^n l(z_i;\theta)$ (see \cref{eq:non-robust-dpo-grad-sample-loss}). This allows us to write $\normns{\grad_\theta (1/n)\sum_{i=1}^n l(z_i;\theta^*)}_{(\Sigma_{\cD}+\lambda I)^{-1}}$ in quadratic form and then obtain a concentration using Bernstein's inequality. However, for WDPO, we note that $\grad_\theta \cLW_n(\thetaW)\neq \sup_{\sfP\in\cP_{\sfW_p}}\grad_\theta \EE_{z\sim\sfP}[l(z;\thetaW)]$, and the non-robust approach will not work for the robust setting. Developing analysis techniques to achieve a better rate of convergence for robust DPO is an open question. 
\end{remark}

\subsection{Estimation Error for KLDPO}
Let $\thetaKL\in\argmin_{\theta\in\Theta}\cLKL(\theta;\rho)$, and let its empirical counterpart be $\thetaKL_n\in\argmin_{\theta\in\Theta}\cLKL_n(\theta;\rho)$. The convergence analysis for the KLDPO loss and policy parameter closely parallels that of Wasserstein DPO. We present the main theorems below and defer detailed proofs to \cref{sec:kldpo-proof-appendix}.

\begin{theorem}[Estimation error of $\thetaKL_n$]
    Let $\delta\in(0,1)$. With probability at least $1-\delta$, we have
    \begin{equation*}
        \normns{\thetaKL_n-\thetaKL}^2_2 \leq \sqrt{\frac{8\lambdaoverline^2\exp{L/\lambdaunderline}\log(2/\delta)}{\gamma^2\lambda^2n}},
    \end{equation*}
    where  $\gamma=\frac{\beta^2e^{4\beta B}}{(1+e^{4\beta B})^2}$. $\lambda$ is the regularity condition number defined in \cref{assum:uniform-data-cov-assumption}, $0<\lambda\leq \lambdamin(\Sigma_{\sfP^o})$. $\lambdaunderline,\lambdaoverline$ are some universal constants, and $L$ is an upper bound on the loss function $l$.
\end{theorem}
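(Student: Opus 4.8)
The plan is to replay the proof of \cref{thm:wdpo-policy-parameter-convergence} almost verbatim, replacing the Wasserstein strong duality by the Donsker--Varadhan (Gibbs) dual of a KL-divergence ball. First I would invoke the standard KL-DRO duality: since the pointwise DPO loss is bounded, $0\le l(z;\theta)\le L$, we have
\begin{equation*}
  \cLKL(\theta;\rho)=\inf_{\alpha\ge 0}\Big\{\alpha\rho+\alpha\log\EE_{z\sim\sfP^o}\big[\exp(l(z;\theta)/\alpha)\big]\Big\},
\end{equation*}
and the identical identity for $\cLKL_n$ with $\sfP^o$ replaced by $\sfP^o_n$. A routine coercivity/monotonicity analysis of this scalar dual objective (using $0\le l\le L$ and $\rho>0$) shows its infimum is attained at some $\alpha^\star(\theta)$ confined to a fixed compact interval $[\lambdaunderline,\lambdaoverline]$ determined only by $L$ and $\rho$ --- this is exactly where the constants $\lambdaunderline,\lambdaoverline$ in the statement come from, and it lets me treat the dual variable as ranging over a bounded set.

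Next I would establish the KL analogue of the loss-concentration lemma \cref{lem:convergence-of-wdpo-loss}. Using $|\inf_\alpha f-\inf_\alpha g|\le\sup_\alpha|f-g|$ over $\alpha\in[\lambdaunderline,\lambdaoverline]$, the gap $\absns{\cLKL(\theta;\rho)-\cLKL_n(\theta;\rho)}$ is at most $\lambdaoverline\sup_{\alpha\in[\lambdaunderline,\lambdaoverline]}\absns{\log\EE_{\sfP^o}[e^{l/\alpha}]-\log\EE_{\sfP^o_n}[e^{l/\alpha}]}$. For each fixed $\alpha$ the random variable $e^{l(z;\theta)/\alpha}$ lies in $[1,\exp(L/\lambdaunderline)]$, so Hoeffding's inequality controls $\absns{\EE_{\sfP^o}[e^{l/\alpha}]-\EE_{\sfP^o_n}[e^{l/\alpha}]}$ at the rate $\exp(L/\lambdaunderline)\sqrt{\log(2/\delta)/n}$; combining this with the $1$-Lipschitzness of $\log$ on $[1,\infty)$, a covering/union-bound argument to make the estimate uniform over $\alpha$ (or simply evaluating at the minimizing $\alpha^\star$), and the prefactor $\lambdaoverline$ gives, with probability at least $1-\delta$, a uniform bound $\sup_{\theta\in\Theta}\absns{\cLKL(\theta;\rho)-\cLKL_n(\theta;\rho)}\lesssim\lambdaoverline\exp(L/\lambdaunderline)\sqrt{\log(2/\delta)/n}$.

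The strong-convexity step transfers with no change. Under \cref{assum:log-linear-assumption} the preference score $h_\theta$ is affine in $\theta$, so $\theta\mapsto\EE_{z\sim\sfP}[l(z;\theta)]$ is $\gamma$-strongly convex in $\normns{\cdot}_{\Sigma_{\sfP}}$, and \cref{assum:uniform-data-cov-assumption} ($\Sigma_{\sfP}\succeq\lambda I$ for every $\sfP$ in the KL ball) promotes this to $\gamma\lambda$-strong convexity in $\normns{\cdot}_2$, uniformly over the uncertainty set; since a pointwise supremum of uniformly $\mu$-strongly-convex functions is $\mu$-strongly convex, $\cLKL(\cdot;\rho)$ is $\gamma\lambda$-strongly convex in $\normns{\cdot}_2$ with unique minimizer $\thetaKL$. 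Finally I would run the same three-term telescoping as in the Wasserstein case: decompose $\cLKL(\thetaKL_n;\rho)-\cLKL(\thetaKL;\rho)$ as $[\cLKL(\thetaKL_n;\rho)-\cLKL_n(\thetaKL_n;\rho)]+[\cLKL_n(\thetaKL_n;\rho)-\cLKL_n(\thetaKL;\rho)]+[\cLKL_n(\thetaKL;\rho)-\cLKL(\thetaKL;\rho)]$, note the middle bracket is $\le 0$ by optimality of $\thetaKL_n$, bound the outer two by the concentration estimate above, and combine with $\tfrac{\gamma\lambda}{2}\normns{\thetaKL_n-\thetaKL}_2^2\le\cLKL(\thetaKL_n;\rho)-\cLKL(\thetaKL;\rho)$ (via \cref{lem:strongly-convex-uniqueness-of-minimizer}) to obtain the stated $O(n^{-1/4})$ rate; the full details go in \cref{sec:kldpo-proof-appendix}.

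The main obstacle is Steps 1--2. Unlike the Wasserstein case, whose Moreau--Yosida regularization stays bounded by a constant independent of the dual parameter, the KL dual introduces the exponential moment $\EE_{\sfP^o}[e^{l/\alpha}]$, which is controlled only because $l$ is uniformly bounded, and the resulting concentration rate picks up the factor $\exp(L/\lambdaunderline)$ governed by the smallest admissible dual variable. Pinning down a clean, $\theta$-uniform interval $[\lambdaunderline,\lambdaoverline]$ for $\alpha^\star$ and upgrading the pointwise-in-$\alpha$ Hoeffding bound to one uniform over that interval (and over $\theta$) is the delicate part; everything downstream is a mechanical replay of the WDPO argument.
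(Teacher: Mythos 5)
Your proposal follows essentially the same route as the paper's proof in \cref{sec:kldpo-proof-appendix}: the same KL dual reformulation $\inf_{\lambda}\{\lambda\rho+\lambda\log\EE_{\sfP^o}[e^{l/\lambda}]\}$ with the dual variable confined to $[\lambdaunderline,\lambdaoverline]$ via coercivity and the boundedness assumption, the same $|\inf f-\inf g|\le\sup|f-g|$ reduction followed by Hoeffding on the exponentiated loss (the paper bounds the log-ratio via $|\log(1+x)|\le x$ and $\EE[e^{l/\lambda}]\ge 1$, which is equivalent to your Lipschitz-of-$\log$ step), the same uniform $\gamma\lambda$-strong convexity from \cref{lem:expected-dpo-strongly-convex} and \cref{assum:uniform-data-cov-assumption}, and the same three-term decomposition. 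The argument is correct and matches the paper up to inessential constants.
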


\begin{remark}
    The exponential constant in the upper bound is a characteristic of distributional robust optimization with KL  uncertainty set \citet[Proposition 2]{hu2013kullback}. Similar exponential constants appear in the theoretical analysis of the distributionally robust RL \citep{zhou2021finite,yang2022toward,panaganti22a,xu-panaganti-2023samplecomplexity}. Both WDPO and KLDPO have $O(n^{-1/4})$ policy parameter convergence. An empirical comparison is given in \cref{sec:experiments}.
\end{remark}
\section{Tractable (Approximate) Algorithms}
\begin{figure}[t]
\begin{minipage}[t]{0.49\linewidth}
\begin{algorithm}[H]
	\caption{WDPO Algorithm}
    \label{algo:WDPO-with-gradient-regularizer}
	\begin{algorithmic}[1]
	\State \textbf{Input:} Dataset $\cD=\{(s_i,a^w_i,a^l_i)\}_{i=1}^n$, reference policy $\piref$, robustness hyperparameter $\rho_o$, learning rate $\eta$, initial policy $\pi_\theta$.
		\While{$\theta$ has not converged}
        
		\State Calculate the non-robust DPO loss $\cLDPO(\pi_\theta;\cD)$ according to \cref{eq:dpo-loss-dataset}
  
  \State Calculate the gradient regularizer loss 
  \vspace{-0.1cm}
  \begin{equation*}
      \cR(\pi_\theta;\cD) = \rho_o (\EE_{z\sim\cD}\normns{\grad_z l(z; \theta)}_2^2 )^{1/2}
      \vspace{-0.1cm}
  \end{equation*}
  \State Calculate the approximate WDPO loss 
  \begin{equation*}
      \cLW(\theta;\mathcal{D}) \coloneqq \cLDPO(\pi_\theta;\cD)+\cR(\pi_\theta;\cD)
  \end{equation*}
  \State $\theta \leftarrow \theta- \eta \grad_{\theta}\cLW(\theta;\mathcal{D})$
    \EndWhile
    \State \textbf{Output:} $\pi_\theta$
	\end{algorithmic}
\end{algorithm}
\end{minipage}
\hfill
\begin{minipage}[t]{0.5\linewidth}
\begin{algorithm}[H]
	\caption{KLDPO Algorithm}	
	\begin{algorithmic}[1]
	\State \textbf{Input:} Dataset $\cD=\{(s_i,a^w_i,a^l_i)\}_{i=1}^n$, reference policy $\piref$, robustness temperature parameter $\tau$, learning rate $\eta$, initial policy $\pi_\theta$.
		\While{$\theta$ has not converged}
        \State Approximate the worst-case kernel
    \begin{align*}
        \sfPunderline(i) \propto \expns{(1/\tau) (&l(z_i;\theta) \\
        &- (1/n)\textstyle\sum\nolimits_{i=1}^n l(z_i;\theta))}
    \end{align*}
        \State Calculate the approximate KLDPO loss 
        \begin{equation*}
            \cLKL(\theta;\cD) \coloneqq \textstyle\sum\nolimits_{i=1}^n \sfPunderline(i)\cdot l(z_i;\theta)
        \end{equation*}
  \State $\theta \leftarrow \theta-\eta\grad_{\theta}\cLKL(\theta;\mathcal{D})$
    \EndWhile
    \State \textbf{Output:} $\pi_\theta$
	\end{algorithmic}
    \label{algo:kldpo-dual-approximation}
\end{algorithm}
\end{minipage}
\end{figure}
While our distributionally robust DPO formulations enjoy finite-sample guarantees,  it is computationally challenging to solve the min-max objective of \cref{eq:generic-drdro-objective} using stochastic gradient descent methods. Though many min-max optimization problems can be solved by alternating gradient descent methods, our problem is not directly amenable to such an approach as we do not have direct control over the data distribution $\sfP\in\cP(\rho;\sfP^o)$ which is not parameterized. Moreover, the preference data are generated according to the nominal distribution $\sfP^o$ ,and we do not have data samples from any other distributions in the uncertainty set $\cP(\rho;\sfP^o)$.  To overcome this challenge, we introduce principled tractable algorithms to solve WDPO and KLDPO.

\textbf{Tractable WDPO:} The connection between Wasserstein distributionally robust optimization (DRO) and regularization has been established in various settings by many \citep{esfahani2015data,shafieezadeh2019regularization,chen2018robust}. We leverage the recent progress in Wasserstein theory on connecting Wasserstein distributionally robust optimization to regularization. For $p$-Wasserstein DRO, $p\in(1,\infty]$, \citet{gao2022wasserstein} shows that for a broad class of loss functions, possibly non-convex and non-smooth, with high probability, the Wasserstein DRO is asymptotically equivalent to variation regularization. In particular, an immediate consequence of \citet[Theorem 1]{gao2022wasserstein} is that, when $p=2$,
\begin{equation*}
    \min_{\theta\in\Theta}\sup_{\sfP\colon\sfW_p(\sfP,\sfP^o_n)\leq \rho_n}\EE_{z\sim\sfP}[l(z;\theta)] = \min_{\theta\in\Theta}\big\{ \EE_{z\sim\sfP^o_n}[l(z;\theta)] + \rho_n \sqrt{(1/n)
  \textstyle\sum\nolimits_{i=1}^n\normns{\grad_z l(z_i;\theta)}_2^2} \big\}+ O_p(1/n),
\end{equation*}
where $\rho_n=O(1/\sqrt{n})$. That is, one can solve the Wasserstein DRO objective by adding a gradient regularization to the empirical risk minimization (ERM) loss, $\EE_{z\sim\sfP^o_n}[l(z;\theta)]$. Based on this, we propose a tractable WDPO algorithm in \cref{algo:WDPO-with-gradient-regularizer}. Note that the gradient regularizer has a sample-size-dependent coefficient. In practice, we absorb the factor $\rho_n/\sqrt{n}$ into $\rho_o$, which we treat as a tunable robustness hyperparameter.

\textbf{Tractable KLDPO:} The following proposition shows that we can approximate the worst-case probability distribution in a KL uncertainty set w.r.t. a given loss function.  Similar results can also be found in distributionally robust reinforcement learning literature (e.g., \citet{gadot2024bring}).

\begin{proposition}[Worst-case distribution (informal)]\label{prop:KL-dual-worst-case-informal}
    Let $\sfPunderline\in\RR^n$ be the worst-case distribution w.r.t. a loss function  $l$ and KL uncertainty around the empirical distribution $\sfP_n^o$, defined as $\sfPunderline = \sup_{\sfP\colon\KLdiverg{\sfP}{\sfP_n^o}\leq \rho} \EE_{z\sim\sfP} [l(z;\theta)]$. The worst-case distribution $\sfPunderline$ is related to $\sfP_n^o$ through 
    \begin{equation*}
        \sfPunderline(i) \propto \sfP_n^o(i) \cdot \expns{(1/\tau)(l(z_i;\theta) - \textstyle\sum_{i=1}^n \sfP_n^o(i) l(z_i;\theta))},
    \end{equation*}
    where $\tau>0$ is some constant.
\end{proposition}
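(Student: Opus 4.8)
The plan is to observe that, with the empirical distribution $\sfP_n^o$ held fixed, the inner supremum is a finite-dimensional \emph{convex} program: maximize the linear functional $\sfP\mapsto\sum_{i=1}^n\sfP(i)\,l(z_i;\theta)$ over the probability simplex on $\{z_1,\dots,z_n\}$ subject to the single convex constraint $\KLdiverg{\sfP}{\sfP_n^o}\le\rho$. Since $\sfP_n^o$ is itself strictly feasible ($\KLdiverg{\sfP_n^o}{\sfP_n^o}=0<\rho$), Slater's condition holds, so strong duality applies and the optimizer is characterized by the KKT system. (Equivalently, one can invoke the Gibbs variational principle: for fixed $\tau>0$, $\sup_{\sfP}\{\EE_{z\sim\sfP}[l(z;\theta)]-\tau\KLdiverg{\sfP}{\sfP_n^o}\}$ is attained at the exponential tilt of $\sfP_n^o$; the constrained problem is the Lagrangian relaxation of this, so its maximizer has the same form.)

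Concretely, introduce a multiplier $\tau\ge 0$ for the KL constraint and $\nu\in\RR$ for $\sum_i\sfP(i)=1$; the nonnegativity multipliers will vanish, as we verify a posteriori. Stationarity of the Lagrangian $\sum_i\sfP(i)l(z_i;\theta)-\tau(\sum_i\sfP(i)\log(\sfP(i)/\sfP_n^o(i))-\rho)+\nu(\sum_i\sfP(i)-1)$ in $\sfP(i)$ gives $l(z_i;\theta)-\tau(\log(\sfP(i)/\sfP_n^o(i))+1)+\nu=0$, hence $\sfP(i)=\sfP_n^o(i)\expns{(l(z_i;\theta)+\nu)/\tau-1}\propto\sfP_n^o(i)\expns{l(z_i;\theta)/\tau}$. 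In particular $\sfP(i)>0$ wherever $\sfP_n^o(i)>0$ (and $\sfP(i)=0$ off $\mathrm{supp}(\sfP_n^o)$, forced by finiteness of the KL divergence), so the nonnegativity constraints are indeed inactive. Choosing $\nu$ to enforce normalization absorbs the additive constant; and because $\expns{-(1/\tau)\sum_j\sfP_n^o(j)l(z_j;\theta)}$ does not depend on $i$, it may be inserted freely into the normalizing constant, yielding exactly $\sfPunderline(i)\propto\sfP_n^o(i)\cdot\expns{(1/\tau)(l(z_i;\theta)-\sum_j\sfP_n^o(j)l(z_j;\theta))}$ — the centered form used in \cref{algo:kldpo-dual-approximation}, where centering only aids numerical stability.

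Finally, $\tau$ is pinned down by complementary slackness: when $\rho$ is small enough that the unconstrained linear maximizer over the simplex lies outside the ball, the KL constraint is active, so $\tau>0$ and $\KLdiverg{\sfPunderline(\tau)}{\sfP_n^o}=\rho$; this equation has a unique solution because $\tau\mapsto\KLdiverg{\sfPunderline(\tau)}{\sfP_n^o}$ is continuous and strictly decreasing on $(0,\infty)$ (tending to $0$ as $\tau\to\infty$ and to the KL divergence of the uniform law on $\argmax_i l(z_i;\theta)$ as $\tau\to 0^+$), provided $l$ is non-constant on $\mathrm{supp}(\sfP_n^o)$. I expect the core computation to be routine; the main obstacle — and the reason the formal statement belongs in the appendix — is the bookkeeping around degenerate cases: verifying Slater and strong duality, establishing that $\tau>0$ is uniquely determined by the active constraint, making precise the quantitative correspondence $\rho\leftrightarrow\tau$, and correctly handling samples with $\sfP_n^o(i)=0$.
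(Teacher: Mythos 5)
Your proposal is correct and follows essentially the same route as the paper's proof of \cref{thm:KL-dual-worst-case-formal}: both cast the inner supremum as a finite-dimensional convex program over the simplex, apply the Lagrangian/KKT stationarity condition to obtain the exponential tilt $\sfPunderline(i)\propto \sfP_n^o(i)\, e^{l(z_i;\theta)/\tau}$, and use complementary slackness to argue the KL constraint is active and determines the multiplier. The only notable difference is how the centered form in the statement is reached: you absorb the constant $\sum_j\sfP_n^o(j)l(z_j;\theta)$ directly into the normalizing constant (which is cleaner and exact), whereas the paper instead bounds the dual combination $-\mu-\lambda$ by $-\sum_j\sfP_n^o(j)l(z_j;\theta)$ via Jensen's inequality in \cref{prop:bound-on-dual-var}; your added remarks on Slater's condition and the monotonicity of $\tau\mapsto\KLdiverg{\sfPunderline(\tau)}{\sfP_n^o}$ are correct supplements (up to the minor imprecision that the $\tau\to 0^+$ limit is $\sfP_n^o$ conditioned on the argmax set rather than the uniform law on it, consistent with the paper's use of \cref{prop:lambda-zero-equiv}).
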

We defer the formal proof of \cref{prop:KL-dual-worst-case-informal} to \cref{sec:proof-tractable-kldpo}. It can be viewed as a re-weighting threshold: extreme losses are more biased towards the baseline empirical DPO loss. $\tau$ controls the intensity of re-weighting, acting as a temperature parameter. Based on \cref{prop:KL-dual-worst-case-informal}, we propose a tractable KLDPO algorithm in \cref{algo:kldpo-dual-approximation}.

\section{Experiments}\label{sec:experiments}
\begin{figure*}[t]
    \centering 
    \begin{minipage}{0.495\linewidth}
    \includegraphics[width=\linewidth]{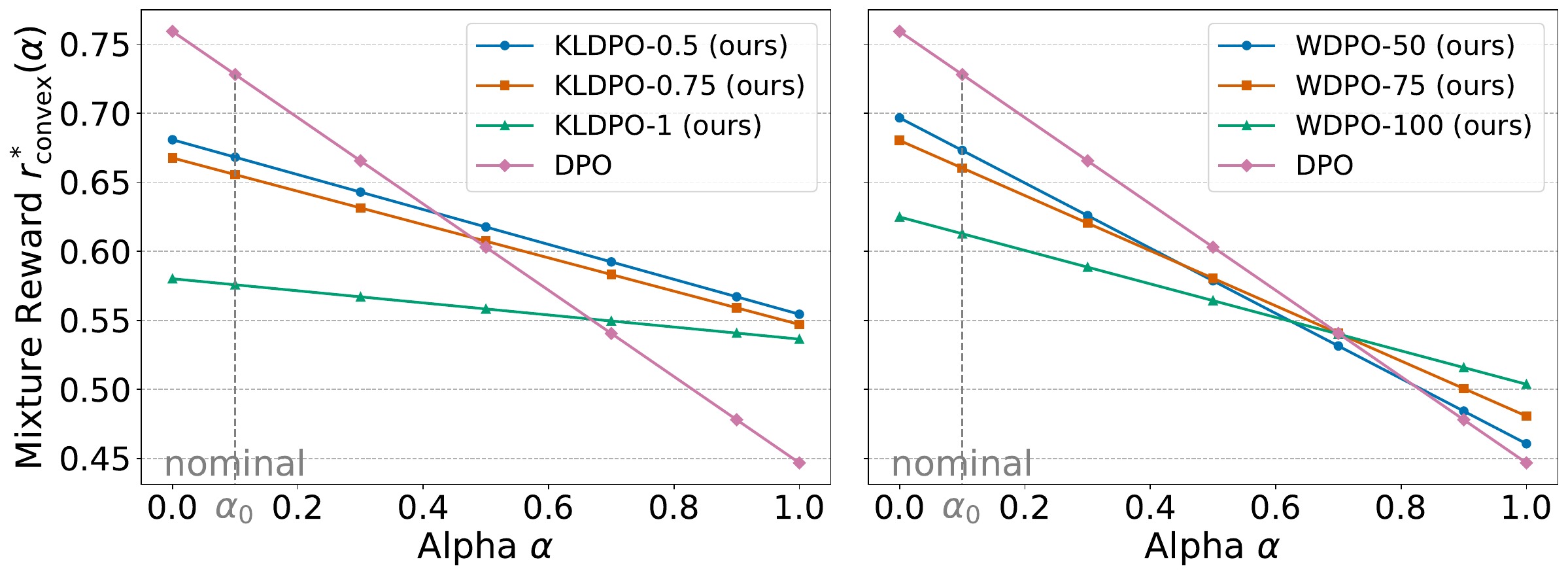}
    \end{minipage}
    \hfill
    \begin{minipage}{0.495\linewidth}
    \includegraphics[width=\linewidth]{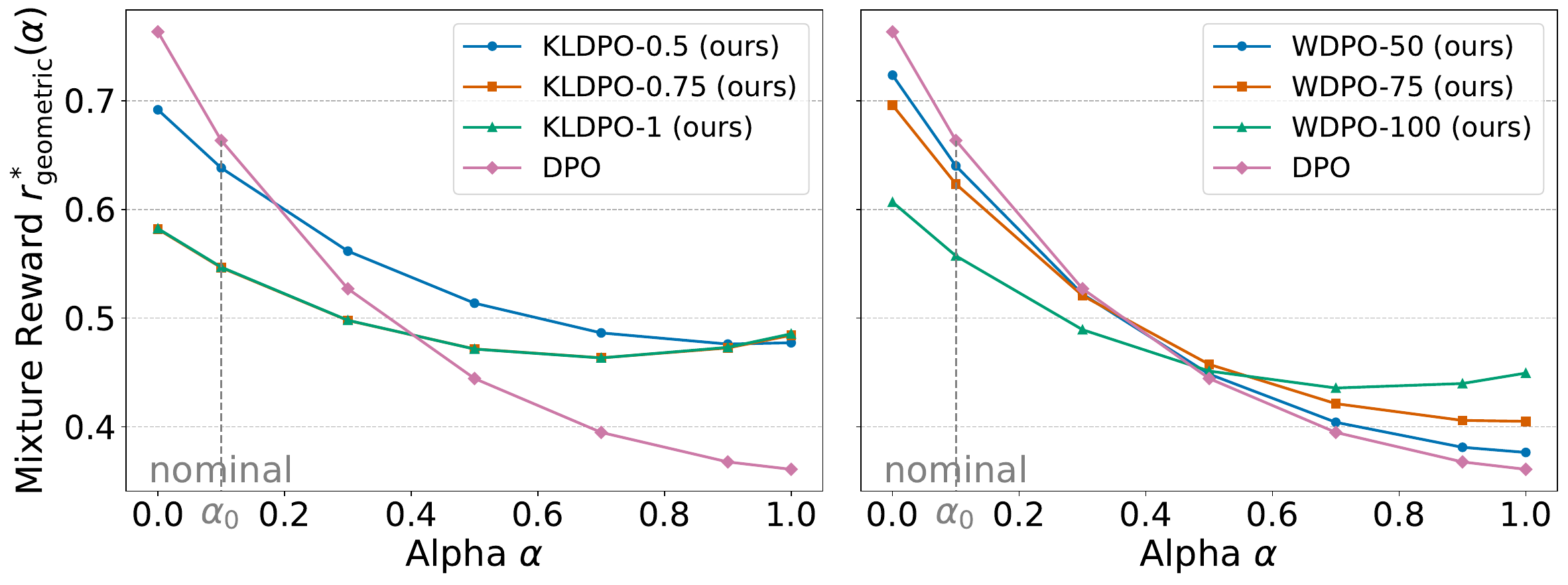}
    \end{minipage}
    \caption{\textit{DPO, WDPO, and KLDPO in Emotion Alignment.} Models are trained on preferences derived from convex (left two plots) and geometric (right two plots) mixtures of \textit{anger} and \textit{fear} objectives from the Emotion dataset \citep{saravia-etal-2018-carer}. To simulate preference shift, evaluation is performed at mixing coefficients $\alpha \neq \alpha_o$, where $\alpha_o = 0.1$ is used during training. We evaluate WDPO with robustness parameter $\rho_o\in\{50,75,100\}$ and KLDPO with robustness temperature $\tau\in\{0.5,0.75,1\}$. Additional experimental details are provided in \cref{sec:experiment-setup}.
    }
    \label{fig:emotion-lineplot}
\end{figure*}

We conduct experiments across three distinct settings that vary in dataset scale, model size, and the degree of distribution shift. For example, we fine-tune LLaMA-3.2-1B-Instruct, LLaMA-3.2-3B-Instruct, and LLaMA-3.1-8B-Instruct models on prompts from the HelpSteer2 dataset \citep{wang2024helpsteer2}, using preferences derived from the ArmoRM reward model \citep{ArmoRM}, and evaluate them on the OpenLLM Leaderboard v2 \citep{open-llm-leaderboard-v2}. Additional evaluations are provided in \cref{sec:additional-experiment-results}. We provide the code at \url{https://github.com/TheBlackCat22/distributionally_robust_dpo}.

\subsection{Experimental Setup}\label{sec:experiment-setup}
\textbf{Emotion Alignment:} We use the Emotion dataset \citep{saravia-etal-2018-carer} to train a GPT-2 model \citep{radford2019language} with a classification head for multi-label classification over five emotions: \textit{sadness, joy, love, anger, fear}. The resulting sigmoid outputs are used as a multi-objective reward model for the remainder of this experiment. We also take a GPT-2 model and perform supervised fine-tuning (SFT) with the Emotion dataset to obtain our base model for preference alignment. To construct preference data, we mix objectives derived from our reward model. Specifically, we consider two reward objectives, $r_1,r_2$ and define two mixture reward functions (1) convex mixing $r^*_{\mathrm{convex}}(\alpha)\coloneqq\alpha\cdot r_1 + (1-\alpha)\cdot r_2$ and (2) geometric mixing $r^*_{\mathrm{geometric}}(\alpha)\coloneqq r_1^{\alpha}\cdot r_2^{1-\alpha}$. For both reward functions, we generate two completions per prompt and assign preference labels using the Bradley-Terry (BT) model parameterized by $r^*(\alpha^o)$ for a chosen $\alpha^o \in [0,1]$.

\textbf{ArmoRM Multi-objective Alignment: } We use the Absolute-Rating Multi-Objective Reward Model (ArmoRM) \citep{ArmoRM} to define reward preferences, selecting pairs of equally weighted objectives (e.g., honesty, verbosity, safety) from its 19-dimensional first-stage outputs. Using Meta LLaMA-3.2-1B-Instruct as the base model, we generate two completions per prompt from the HelpSteer2 dataset \citep{wang2024helpsteer2} and train models on preferences derived from the convex mixing of these reward pairs. We evaluate all models on five individual ArmoRM objectives, three of which are unseen during training, to simulate preference shift.

\textbf{Leaderboard Alignment:} We fine-tune LLaMA-3.2-1B-Instruct, LLaMA-3.2-3B-Instruct, and LLaMA-3.1-8B-Instruct models using preference data derived from the scalar rewards produced by the second stage of the ArmoRM reward model \citep{ArmoRM}. For each prompt from the HelpSteer2 dataset, we generate 10 responses, score them with ArmoRM, and constructe preference pairs by selecting the highest- and lowest-scoring completions. The models are evaluated on the OpenLLM Leaderboard v2 \citep{open-llm-leaderboard-v2} using the LM Evaluation Harness \citep{eval-harness}.

\subsection{Experiment Results}
\begin{figure*}[ht]
    \centering
    \includegraphics[width=\linewidth]{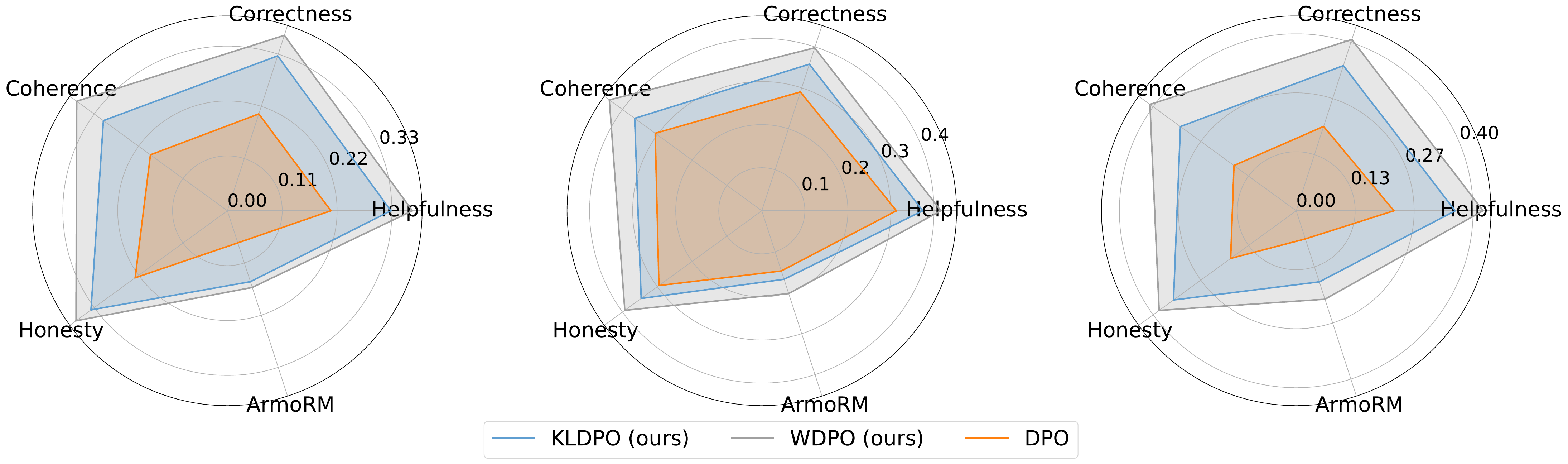}
    \caption{\textit{DPO, WDPO, and KLDPO in ArmoRM Multi-objective Alignment.} LLaMA-3.2-1B-Instruct models are trained on preferences derived from three equally weighted objective pairs: (1) \textit{Ultrafeedback-Truthfulness} and \textit{Helpsteer-Complexity}, (2) \textit{Ultrafeedback-Helpfulness} and \textit{Helpsteer-Coherence}, and (3) \textit{Helpsteer-Correctness} and \textit{Helpsteer-Helpfulness} (left to right plots). We train all models for 4 epochs. To simulate preference shift, models are evaluated on five individual objectives, \textit{Helpsteer-Helpfulness}, \textit{Helpsteer-Correctness}, \textit{Helpsteer-Coherence}, \textit{Ultrafeedback-Honesty}, and the overall \textit{ArmoRM} score, three of which were not used during training.}
    \label{fig:ArmoRM-llama-1B-spider-plot}
\end{figure*}

\begin{table*}[ht]
\centering
\setlength{\tabcolsep}{6pt}
\renewcommand{\arraystretch}{1.12}
\resizebox{\linewidth}{!}{%
\begin{tabular}{lcccccc lcccccc}
\toprule
\textbf{LLaMA-3.2-1B} & \textbf{IFEval} & \textbf{BBH} & \textbf{MATH} & \textbf{GPQA} & \textbf{MUSR} & \textbf{MMLU} &
\textbf{LLaMA-3.2-3B} & \textbf{IFEval} & \textbf{BBH} & \textbf{MATH} & \textbf{GPQA} & \textbf{MUSR} & \textbf{MMLU} \\
DPO at Epoch 2 (early stopping) & \cellcolor[HTML]{e9f7e5}0.48 & \cellcolor[HTML]{cdebc7}0.35 & \cellcolor[HTML]{cdecc7}0.08 & \cellcolor[HTML]{a9dca3}0.27 & \cellcolor[HTML]{a9dca3}0.35 & \cellcolor[HTML]{e9f7e5}0.17 &
DPO & \cellcolor[HTML]{e9f7e5}0.55 & \cellcolor[HTML]{e9f7e5}0.45 & \cellcolor[HTML]{cdecc7}0.08 & \cellcolor[HTML]{e9f7e5}0.24 & \cellcolor[HTML]{a9dca3}0.36 & \cellcolor[HTML]{e9f7e5}0.30 \\
DPO at Epoch 4 (goodfit) & \cellcolor[HTML]{e9f7e5}0.48 & \cellcolor[HTML]{d5efcf}0.34 & \cellcolor[HTML]{d5efcf}0.07 & \cellcolor[HTML]{cdebc7}0.26 & \cellcolor[HTML]{d5efcf}0.33 & \cellcolor[HTML]{e9f7e5}0.17 &
KLDPO $(\tau=0.005)$ & \cellcolor[HTML]{a9dca3}0.74 & \cellcolor[HTML]{a9dca3}0.46 & \cellcolor[HTML]{a9dca3}0.19 & \cellcolor[HTML]{a9dca3}0.26 & \cellcolor[HTML]{cdebc7}0.35 & \cellcolor[HTML]{a9dca3}0.32 \\
DPO at Epoch 6 (overfit) & \cellcolor[HTML]{e9f7e5}0.48 & \cellcolor[HTML]{e9f7e5}0.33 & \cellcolor[HTML]{e9f7e5}0.06 & \cellcolor[HTML]{cdebc7}0.26 & \cellcolor[HTML]{d5efcf}0.33 & \cellcolor[HTML]{e9f7e5}0.17 &
WDPO $(\rho_o=0.005)$ & \cellcolor[HTML]{d5efcf}0.62 & \cellcolor[HTML]{e9f7e5}0.45 & \cellcolor[HTML]{e9f7e5}0.06 & \cellcolor[HTML]{cdebc7}0.25 & \cellcolor[HTML]{a9dca3}0.36 & \cellcolor[HTML]{e9f7e5}0.30 \\
KLDPO $(\tau=0.1)$ & \cellcolor[HTML]{bae4b3}0.53 & \cellcolor[HTML]{a9dca3}0.36 & \cellcolor[HTML]{cdecc7}0.08 & \cellcolor[HTML]{d5efcf}0.25 & \cellcolor[HTML]{d5efcf}0.33 & \cellcolor[HTML]{cdebc7}0.18 &
\textbf{LLaMA-3.1-8B} & \textbf{IFEval} & \textbf{BBH} & \textbf{MATH} & \textbf{GPQA} & \textbf{MUSR} & \textbf{MMLU} \\
KLDPO $(\tau=0.05)$ & \cellcolor[HTML]{a9dca3}0.56 & \cellcolor[HTML]{a9dca3}0.36 & \cellcolor[HTML]{cdecc7}0.08 & \cellcolor[HTML]{cdebc7}0.26 & \cellcolor[HTML]{e9f7e5}0.32 & \cellcolor[HTML]{cdebc7}0.18 &
DPO & \cellcolor[HTML]{e9f7e5}0.62 & \cellcolor[HTML]{e9f7e5}0.50 & \cellcolor[HTML]{e9f7e5}0.03 & \cellcolor[HTML]{e9f7e5}0.29 & \cellcolor[HTML]{a9dca3}0.44 & \cellcolor[HTML]{e9f7e5}0.33 \\
WDPO $(\rho_o=0.01)$ & \cellcolor[HTML]{c3e7bc}0.52 & \cellcolor[HTML]{a9dca3}0.36 & \cellcolor[HTML]{a9dca3}0.09 & \cellcolor[HTML]{d5efcf}0.25 & \cellcolor[HTML]{c3e7bc}0.34 & \cellcolor[HTML]{a9dca3}0.19 &
KLDPO $(\tau=0.005)$ & \cellcolor[HTML]{bae4b3}0.72 & \cellcolor[HTML]{a9dca3}0.51 & \cellcolor[HTML]{a9dca3}0.24 & \cellcolor[HTML]{e9f7e5}0.29 & \cellcolor[HTML]{e9f7e5}0.34 & \cellcolor[HTML]{a9dca3}0.37 \\
WDPO $(\rho_o=0.005)$ & \cellcolor[HTML]{d5efcf}0.49 & \cellcolor[HTML]{cdebc7}0.35 & \cellcolor[HTML]{a9dca3}0.09 & \cellcolor[HTML]{d5efcf}0.25 & \cellcolor[HTML]{d5efcf}0.33 & \cellcolor[HTML]{a9dca3}0.19 &
KLDPO $(\tau=0.01)$ & \cellcolor[HTML]{a9dca3}0.75 & \cellcolor[HTML]{a9dca3}0.51 & \cellcolor[HTML]{b0dfa9}0.22 & \cellcolor[HTML]{a9dca3}0.31 & \cellcolor[HTML]{e0f3da}0.36 & \cellcolor[HTML]{a9dca3}0.37 \\
\bottomrule
\end{tabular}
}
\caption{\textit{Evaluation of DPO, KLDPO, and WDPO on OpenLLM Leaderboard v2.} LLaMA-3.2-1B/3B-Instruct and LLaMA-3.1-8B-Instruct models are trained on preferences generated according to ArmoRM score and then evaluated on OpenLLM Leaderboard v2, which benchmarks LLMs across \textbf{six} tasks: \textit{Massive Multitask Language Understanding} (MMLU), \textit{Google-Proof Q\&A Benchmark} (GPQA), \textit{Multistep Soft Reasoning} (MUSR), \textit{Mathematics Aptitude Test of Heuristics} (MATH), \textit{Instruction Following Evaluation} (IFEval), and \textit{Big Bench Hard} (BBH).}
\label{tab:llama-leaderboard-main-article}
\end{table*}

\textbf{Emotion Alignment Results:} In \cref{fig:emotion-lineplot}, we evaluate DPO, WDPO, and KLDPO under preference shifts between training and evaluation. All models are trained on preference labels emphasizing the emotion \textit{fear}, while evaluation preferences gradually shift toward \textit{anger}. The left two plots correspond to convex mixing of these emotions, and the right two use geometric mixing. As expected, DPO performs best when the evaluation preference closely matches the training setup. However, as the evaluation shifts toward \textit{anger}, DPO’s performance degrades significantly. In contrast, both WDPO and KLDPO maintain stable performance across the full range of evaluation preferences, consistently outperforming DPO under shift, demonstrating their robustness to preference misalignment.

\textbf{ArmoRM Multi-objective Alignment Results: }
In \cref{fig:ArmoRM-llama-1B-spider-plot}, each radar plot corresponds to a different training reward pair, (1), (2), and (3), as defined in the figure caption. We evaluate all models on five individual ArmoRM objectives, three of which are unseen during training, to simulate preference shift. Across all settings, both KLDPO and WDPO consistently outperform DPO on all five evaluation axes, including those based on unseen objectives. This demonstrates their strong generalization and robustness to reward distribution shift, even when the evaluation preferences differ significantly from the training signal. Additional results are provided in \cref{sec:armo-additional-results}.

\textbf{Leaderboard Alignment Results: }\cref{tab:llama-leaderboard-main-article} presents the performance of DPO, KLDPO, and WDPO on the OpenLLM leaderboard v2 \citep{open-llm-leaderboard-v2}. WDPO and KLDPO are trained for 2 epochs, matching DPO's optimal early-stopping point, which is a regularization technique to prevent overfitting. For LLaMA-3B and LLaMA-8B models, we align training durations similarly. Due to computational constraints, only KLDPO results are reported for the 8B model, given its scalability. These results, averaged over 39 subtasks, are supplemented by detailed evaluations in \cref{sec:leaderboard-additional-results}, where WDPO and KLDPO demonstrate clear advantages across various subtasks.

\section{Conclusions}
We introduced a distributionally robust DPO framework, developed two scalable algorithms with theoretical guarantees, and integrated them into existing LLM alignment pipelines. Empirical results demonstrate their effectiveness under preference distribution shift. Future work includes extending our methods to mitigate reward hacking and generalizing robustness to other RLHF approaches.

\section{Acknowledgments}
The authors would like to thank Vishnu Teja Kunde for invaluable discussions. This work was supported in part by the National Science Foundation (NSF) grants NSF-CAREER-EPCN-2045783, ECCS-529620-00002, and CNS-526050-00002. Portions of this research were conducted with the advanced computing resources provided by Texas A\&M High Performance Research Computing.

\newpage
\bibliographystyle{plainnat}
\bibliography{References}

\begin{thebibliography}{74}
\providecommand{\natexlab}[1]{#1}
\providecommand{\url}[1]{\texttt{#1}}
\expandafter\ifx\csname urlstyle\endcsname\relax
  \providecommand{\doi}[1]{doi: #1}\else
  \providecommand{\doi}{doi: \begingroup \urlstyle{rm}\Url}\fi

\bibitem[Achiam et~al.(2023)Achiam, Adler, Agarwal, Ahmad, Akkaya, Aleman, Almeida, Altenschmidt, Altman, Anadkat, et~al.]{achiam2023gpt}
Josh Achiam, Steven Adler, Sandhini Agarwal, Lama Ahmad, Ilge Akkaya, Florencia~Leoni Aleman, Diogo Almeida, Janko Altenschmidt, Sam Altman, Shyamal Anadkat, et~al.
\newblock Gpt-4 technical report.
\newblock \emph{arXiv preprint arXiv:2303.08774}, 2023.

\bibitem[Agarwal et~al.(2019)Agarwal, Jiang, Kakade, and Sun]{agarwal2019reinforcement}
Alekh Agarwal, Nan Jiang, Sham~M Kakade, and Wen Sun.
\newblock Reinforcement learning: Theory and algorithms.
\newblock \emph{CS Dept., UW Seattle, Seattle, WA, USA, Tech. Rep}, 2019.

\bibitem[Agarwal et~al.(2021)Agarwal, Kakade, Lee, and Mahajan]{agarwal2021theory}
Alekh Agarwal, Sham~M Kakade, Jason~D Lee, and Gaurav Mahajan.
\newblock On the theory of policy gradient methods: Optimality, approximation, and distribution shift.
\newblock \emph{Journal of Machine Learning Research}, 22\penalty0 (98):\penalty0 1--76, 2021.

\bibitem[Amodei et~al.(2016)Amodei, Olah, Steinhardt, Christiano, Schulman, and Man{\'e}]{amodei2016concrete}
Dario Amodei, Chris Olah, Jacob Steinhardt, Paul Christiano, John Schulman, and Dan Man{\'e}.
\newblock Concrete problems in ai safety.
\newblock \emph{arXiv preprint arXiv:1606.06565}, 2016.

\bibitem[Bai et~al.(2022)Bai, Jones, Ndousse, Askell, Chen, DasSarma, Drain, Fort, Ganguli, Henighan, et~al.]{bai2022training}
Yuntao Bai, Andy Jones, Kamal Ndousse, Amanda Askell, Anna Chen, Nova DasSarma, Dawn Drain, Stanislav Fort, Deep Ganguli, Tom Henighan, et~al.
\newblock Training a helpful and harmless assistant with reinforcement learning from human feedback.
\newblock \emph{arXiv preprint arXiv:2204.05862}, 2022.

\bibitem[Bashiri et~al.(2021)Bashiri, Ziebart, and Zhang]{bashiri2021distributionally}
Mohammad~Ali Bashiri, Brian Ziebart, and Xinhua Zhang.
\newblock Distributionally robust imitation learning.
\newblock \emph{Advances in neural information processing systems}, 34:\penalty0 24404--24417, 2021.

\bibitem[Beck(2014)]{beck2014introduction}
Amir Beck.
\newblock \emph{Introduction to nonlinear optimization: Theory, algorithms, and applications with MATLAB}.
\newblock SIAM, 2014.

\bibitem[Beck(2017)]{beck2017first}
Amir Beck.
\newblock \emph{First-order methods in optimization}.
\newblock SIAM, 2017.

\bibitem[Boucheron et~al.(2013)Boucheron, Lugosi, and Massart]{boucheron2013concentration}
Stéphane Boucheron, Gábor Lugosi, and Pascal Massart.
\newblock \emph{{Concentration Inequalities: A Nonasymptotic Theory of Independence}}.
\newblock Oxford University Press, 2013.

\bibitem[Bradley and Terry(1952)]{bradley1952rank}
Ralph~Allan Bradley and Milton~E Terry.
\newblock Rank analysis of incomplete block designs: I. the method of paired comparisons.
\newblock \emph{Biometrika}, 39\penalty0 (3/4):\penalty0 324--345, 1952.

\bibitem[Bukharin et~al.(2024)Bukharin, Hong, Jiang, Li, Zhang, Zhang, and Zhao]{bukharin2024robust}
Alexander Bukharin, Ilgee Hong, Haoming Jiang, Zichong Li, Qingru Zhang, Zixuan Zhang, and Tuo Zhao.
\newblock Robust reinforcement learning from corrupted human feedback.
\newblock \emph{arXiv preprint arXiv:2406.15568}, 2024.

\bibitem[Casper et~al.(2023)Casper, Davies, Shi, Gilbert, Scheurer, Rando, Freedman, Korbak, Lindner, Freire, et~al.]{casper2023open}
Stephen Casper, Xander Davies, Claudia Shi, Thomas~Krendl Gilbert, J{\'e}r{\'e}my Scheurer, Javier Rando, Rachel Freedman, Tomasz Korbak, David Lindner, Pedro Freire, et~al.
\newblock Open problems and fundamental limitations of reinforcement learning from human feedback.
\newblock \emph{arXiv preprint arXiv:2307.15217}, 2023.

\bibitem[Chakraborty et~al.(2024)Chakraborty, Qiu, Yuan, Koppel, Manocha, Huang, Bedi, and Wang]{chakraborty2024maxmin}
Souradip Chakraborty, Jiahao Qiu, Hui Yuan, Alec Koppel, Dinesh Manocha, Furong Huang, Amrit Bedi, and Mengdi Wang.
\newblock Maxmin-{RLHF}: Alignment with diverse human preferences.
\newblock In \emph{Forty-first International Conference on Machine Learning}, 2024.

\bibitem[Chen and Paschalidis(2018)]{chen2018robust}
Ruidi Chen and Ioannis~Ch Paschalidis.
\newblock A robust learning approach for regression models based on distributionally robust optimization.
\newblock \emph{Journal of Machine Learning Research}, 19\penalty0 (13):\penalty0 1--48, 2018.

\bibitem[Chen et~al.(2020)Chen, Paschalidis, et~al.]{chen2020distributionally}
Ruidi Chen, Ioannis~Ch Paschalidis, et~al.
\newblock Distributionally robust learning.
\newblock \emph{Foundations and Trends{\textregistered} in Optimization}, 4\penalty0 (1-2):\penalty0 1--243, 2020.

\bibitem[Chowdhury et~al.(2024)Chowdhury, Kini, and Natarajan]{chowdhury2024provably}
Sayak~Ray Chowdhury, Anush Kini, and Nagarajan Natarajan.
\newblock Provably robust {DPO}: Aligning language models with noisy feedback.
\newblock In \emph{Forty-first International Conference on Machine Learning}, 2024.

\bibitem[Christiano et~al.(2017)Christiano, Leike, Brown, Martic, Legg, and Amodei]{christiano2017deep}
Paul~F Christiano, Jan Leike, Tom Brown, Miljan Martic, Shane Legg, and Dario Amodei.
\newblock Deep reinforcement learning from human preferences.
\newblock In \emph{Advances in Neural Information Processing Systems}, volume~30, 2017.

\bibitem[Duchi and Namkoong(2021)]{duchi2021learning}
John~C Duchi and Hongseok Namkoong.
\newblock Learning models with uniform performance via distributionally robust optimization.
\newblock \emph{The Annals of Statistics}, 49\penalty0 (3):\penalty0 1378--1406, 2021.

\bibitem[Durmus et~al.(2024)Durmus, Nguyen, Liao, Schiefer, Askell, Bakhtin, Chen, Hatfield-Dodds, Hernandez, Joseph, Lovitt, McCandlish, Sikder, Tamkin, Thamkul, Kaplan, Clark, and Ganguli]{durmus2024towards}
Esin Durmus, Karina Nguyen, Thomas Liao, Nicholas Schiefer, Amanda Askell, Anton Bakhtin, Carol Chen, Zac Hatfield-Dodds, Danny Hernandez, Nicholas Joseph, Liane Lovitt, Sam McCandlish, Orowa Sikder, Alex Tamkin, Janel Thamkul, Jared Kaplan, Jack Clark, and Deep Ganguli.
\newblock Towards measuring the representation of subjective global opinions in language models.
\newblock In \emph{First Conference on Language Modeling}, 2024.

\bibitem[Eisenstein et~al.(2024)Eisenstein, Nagpal, Agarwal, Beirami, D'Amour, Dvijotham, Fisch, Heller, Pfohl, Ramachandran, Shaw, and Berant]{eisenstein2024helping}
Jacob Eisenstein, Chirag Nagpal, Alekh Agarwal, Ahmad Beirami, Alexander~Nicholas D'Amour, Krishnamurthy~Dj Dvijotham, Adam Fisch, Katherine~A Heller, Stephen~Robert Pfohl, Deepak Ramachandran, Peter Shaw, and Jonathan Berant.
\newblock Helping or herding? reward model ensembles mitigate but do not eliminate reward hacking.
\newblock In \emph{First Conference on Language Modeling}, 2024.

\bibitem[Fourrier et~al.(2024)Fourrier, Habib, Lozovskaya, Szafer, and Wolf]{open-llm-leaderboard-v2}
Clémentine Fourrier, Nathan Habib, Alina Lozovskaya, Konrad Szafer, and Thomas Wolf.
\newblock Open llm leaderboard v2, 2024.

\bibitem[Gadot et~al.(2024)Gadot, Wang, Kumar, Levy, and Mannor]{gadot2024bring}
Uri Gadot, Kaixin Wang, Navdeep Kumar, Kfir~Yehuda Levy, and Shie Mannor.
\newblock Bring your own (non-robust) algorithm to solve robust {MDP}s by estimating the worst kernel.
\newblock In \emph{Forty-first International Conference on Machine Learning}, 2024.

\bibitem[Gao et~al.(2024)Gao, Tow, Abbasi, Biderman, Black, DiPofi, Foster, Golding, Hsu, Le~Noac'h, Li, McDonell, Muennighoff, Ociepa, Phang, Reynolds, Schoelkopf, Skowron, Sutawika, Tang, Thite, Wang, Wang, and Zou]{eval-harness}
Leo Gao, Jonathan Tow, Baber Abbasi, Stella Biderman, Sid Black, Anthony DiPofi, Charles Foster, Laurence Golding, Jeffrey Hsu, Alain Le~Noac'h, Haonan Li, Kyle McDonell, Niklas Muennighoff, Chris Ociepa, Jason Phang, Laria Reynolds, Hailey Schoelkopf, Aviya Skowron, Lintang Sutawika, Eric Tang, Anish Thite, Ben Wang, Kevin Wang, and Andy Zou.
\newblock The language model evaluation harness, 07 2024.

\bibitem[Gao and Kleywegt(2022)]{gao-2022-distributionally}
Rui Gao and Anton Kleywegt.
\newblock Distributionally robust stochastic optimization with wasserstein distance.
\newblock \emph{Mathematics of Operations Research}, 2022.

\bibitem[Gao et~al.(2022)Gao, Chen, and Kleywegt]{gao2022wasserstein}
Rui Gao, Xi~Chen, and Anton~J Kleywegt.
\newblock Wasserstein distributionally robust optimization and variation regularization.
\newblock \emph{Operations Research}, 2022.

\bibitem[Hsu et~al.(2012)Hsu, Kakade, and Zhang]{hsu2012tail}
Daniel Hsu, Sham Kakade, and Tong Zhang.
\newblock {A tail inequality for quadratic forms of subgaussian random vectors}.
\newblock \emph{Electronic Communications in Probability}, 2012.

\bibitem[Hu and Hong(2013)]{hu2013kullback}
Zhaolin Hu and L~Jeff Hong.
\newblock Kullback-leibler divergence constrained distributionally robust optimization.
\newblock \emph{Available at Optimization Online}, 1\penalty0 (2):\penalty0 9, 2013.

\bibitem[Huang et~al.(2025)Huang, Zhan, Xie, Lee, Sun, Krishnamurthy, and Foster]{huang2025correcting}
Audrey Huang, Wenhao Zhan, Tengyang Xie, Jason~D. Lee, Wen Sun, Akshay Krishnamurthy, and Dylan~J Foster.
\newblock Correcting the mythos of {KL}-regularization: Direct alignment without overoptimization via chi-squared preference optimization.
\newblock In \emph{The Thirteenth International Conference on Learning Representations}, 2025.

\bibitem[Jin et~al.(2021)Jin, Yang, and Wang]{jin2021pessimism}
Ying Jin, Zhuoran Yang, and Zhaoran Wang.
\newblock Is pessimism provably efficient for offline rl?
\newblock In \emph{International Conference on Machine Learning}, pages 5084--5096. PMLR, 2021.

\bibitem[Kingma and Ba(2014)]{kingma2014adam}
Diederik Kingma and Jimmy Ba.
\newblock Adam: A method for stochastic optimization.
\newblock \emph{arXiv preprint arXiv:1412.6980}, 2014.

\bibitem[Kirk et~al.(2024)Kirk, Mediratta, Nalmpantis, Luketina, Hambro, Grefenstette, and Raileanu]{kirk2024understanding}
Robert Kirk, Ishita Mediratta, Christoforos Nalmpantis, Jelena Luketina, Eric Hambro, Edward Grefenstette, and Roberta Raileanu.
\newblock Understanding the effects of rlhf on llm generalisation and diversity.
\newblock In \emph{The Twelfth International Conference on Learning Representations}, 2024.

\bibitem[Koh et~al.(2021)Koh, Sagawa, Marklund, Xie, Zhang, Balsubramani, Hu, Yasunaga, Phillips, Gao, et~al.]{koh2021wilds}
Pang~Wei Koh, Shiori Sagawa, Henrik Marklund, Sang~Michael Xie, Marvin Zhang, Akshay Balsubramani, Weihua Hu, Michihiro Yasunaga, Richard~Lanas Phillips, Irena Gao, et~al.
\newblock Wilds: A benchmark of in-the-wild distribution shifts.
\newblock In \emph{International Conference on Machine Learning}, pages 5637--5664. PMLR, 2021.

\bibitem[Kuhn et~al.(2019)Kuhn, Esfahani, Nguyen, and Shafieezadeh-Abadeh]{kuhn2019wasserstein}
Daniel Kuhn, Peyman~Mohajerin Esfahani, Viet~Anh Nguyen, and Soroosh Shafieezadeh-Abadeh.
\newblock Wasserstein distributionally robust optimization: Theory and applications in machine learning.
\newblock In \emph{Operations research \& management science in the age of analytics}, pages 130--166. Informs, 2019.

\bibitem[LeVine et~al.(2023)LeVine, Pikus, Chen, and Hendryx]{levine2023baseline}
Will LeVine, Benjamin Pikus, Anthony Chen, and Sean Hendryx.
\newblock A baseline analysis of reward models' ability to accurately analyze foundation models under distribution shift.
\newblock \emph{arXiv preprint arXiv:2311.14743}, 2023.

\bibitem[Levy et~al.(2020)Levy, Carmon, Duchi, and Sidford]{levy2020large}
Daniel Levy, Yair Carmon, John~C Duchi, and Aaron Sidford.
\newblock Large-scale methods for distributionally robust optimization.
\newblock \emph{Advances in Neural Information Processing Systems}, 33:\penalty0 8847--8860, 2020.

\bibitem[Mandal et~al.(2025)Mandal, Sasnauskas, and Radanovic]{mandal2025distributionally}
Debmalya Mandal, Paulius Sasnauskas, and Goran Radanovic.
\newblock Distributionally robust reinforcement learning with human feedback.
\newblock \emph{arXiv preprint arXiv:2503.00539}, 2025.

\bibitem[Modi et~al.(2020)Modi, Jiang, Tewari, and Singh]{modi2020sample}
Aditya Modi, Nan Jiang, Ambuj Tewari, and Satinder Singh.
\newblock Sample complexity of reinforcement learning using linearly combined model ensembles.
\newblock In \emph{International Conference on Artificial Intelligence and Statistics}, pages 2010--2020. PMLR, 2020.

\bibitem[Mohajerin~Esfahani and Kuhn(2018)]{esfahani2015data}
Peyman Mohajerin~Esfahani and Daniel Kuhn.
\newblock Data-driven distributionally robust optimization using the wasserstein metric: performance guarantees and tractable reformulations.
\newblock \emph{Mathematical Programming}, 171\penalty0 (1-2):\penalty0 115--166, 2018.

\bibitem[Namkoong and Duchi(2016)]{namkoong2016stochastic}
Hongseok Namkoong and John~C Duchi.
\newblock Stochastic gradient methods for distributionally robust optimization with f-divergences.
\newblock \emph{Advances in neural information processing systems}, 29, 2016.

\bibitem[Nika et~al.(2024)Nika, Mandal, Kamalaruban, Tzannetos, Radanovic, and Singla]{nika2024reward}
Andi Nika, Debmalya Mandal, Parameswaran Kamalaruban, Georgios Tzannetos, Goran Radanovic, and Adish Singla.
\newblock Reward model learning vs. direct policy optimization: A comparative analysis of learning from human preferences.
\newblock In \emph{Forty-first International Conference on Machine Learning}, 2024.

\bibitem[Ouyang et~al.(2022)Ouyang, Wu, Jiang, Almeida, Wainwright, Mishkin, Zhang, Agarwal, Slama, Ray, et~al.]{ouyang2022training}
Long Ouyang, Jeffrey Wu, Xu~Jiang, Diogo Almeida, Carroll Wainwright, Pamela Mishkin, Chong Zhang, Sandhini Agarwal, Katarina Slama, Alex Ray, et~al.
\newblock Training language models to follow instructions with human feedback.
\newblock \emph{Advances in Neural Information Processing Systems}, 35:\penalty0 27730--27744, 2022.

\bibitem[Padmakumar et~al.(2024)Padmakumar, Jin, Kirk, and He]{padmakumar2024beyond}
Vishakh Padmakumar, Chuanyang Jin, Hannah~Rose Kirk, and He~He.
\newblock Beyond the binary: Capturing diverse preferences with reward regularization.
\newblock In \emph{Workshop on Socially Responsible Language Modelling Research}, 2024.

\bibitem[Panaganti and Kalathil(2022)]{panaganti22a}
Kishan Panaganti and Dileep Kalathil.
\newblock Sample complexity of robust reinforcement learning with a generative model.
\newblock In \emph{International Conference on Artificial Intelligence and Statistics (AISTATS)}, pages 9582--9602, 2022.

\bibitem[Panaganti et~al.(2022)Panaganti, Xu, Kalathil, and Ghavamzadeh]{panaganti-rfqi}
Kishan Panaganti, Zaiyan Xu, Dileep Kalathil, and Mohammad Ghavamzadeh.
\newblock Robust reinforcement learning using offline data.
\newblock \emph{Advances in neural information processing systems}, 35:\penalty0 32211--32224, 2022.

\bibitem[Panaganti et~al.(2023)Panaganti, Xu, Kalathil, and Ghavamzadeh]{panaganti2023distributionally}
Kishan Panaganti, Zaiyan Xu, Dileep Kalathil, and Mohammad Ghavamzadeh.
\newblock Distributionally robust behavioral cloning for robust imitation learning.
\newblock In \emph{2023 62nd IEEE Conference on Decision and Control (CDC)}, pages 1342--1347. IEEE, 2023.

\bibitem[Panaganti et~al.(2025)Panaganti, Xu, Kalathil, and Ghavamzadeh]{panaganti2025bridging}
Kishan Panaganti, Zaiyan Xu, Dileep Kalathil, and Mohammad Ghavamzadeh.
\newblock Bridging distributionally robust learning and offline rl: An approach to mitigate distribution shift and partial data coverage.
\newblock In \emph{7th Annual Learning for Dynamics \& Control Conference}, pages 619--634. PMLR, 2025.

\bibitem[Radford et~al.(2019)Radford, Wu, Child, Luan, Amodei, Sutskever, et~al.]{radford2019language}
Alec Radford, Jeffrey Wu, Rewon Child, David Luan, Dario Amodei, Ilya Sutskever, et~al.
\newblock Language models are unsupervised multitask learners.
\newblock \emph{OpenAI blog}, 1\penalty0 (8):\penalty0 9, 2019.

\bibitem[Rafailov et~al.(2023)Rafailov, Sharma, Mitchell, Manning, Ermon, and Finn]{rafailov2023direct}
Rafael Rafailov, Archit Sharma, Eric Mitchell, Christopher~D Manning, Stefano Ermon, and Chelsea Finn.
\newblock Direct preference optimization: Your language model is secretly a reward model.
\newblock \emph{Advances in neural information processing systems}, 36:\penalty0 53728--53741, 2023.

\bibitem[Rajbhandari et~al.(2020)Rajbhandari, Rasley, Ruwase, and He]{rajbhandari2020zero}
Samyam Rajbhandari, Jeff Rasley, Olatunji Ruwase, and Yuxiong He.
\newblock Zero: Memory optimizations toward training trillion parameter models.
\newblock In \emph{SC20: International Conference for High Performance Computing, Networking, Storage and Analysis}, pages 1--16. IEEE, 2020.

\bibitem[Ramesh et~al.(2024)Ramesh, Hu, Chaimalas, Mehta, Sessa, Bou~Ammar, and Bogunovic]{ramesh2024group}
Shyam~Sundhar Ramesh, Yifan Hu, Iason Chaimalas, Viraj Mehta, Pier~Giuseppe Sessa, Haitham Bou~Ammar, and Ilija Bogunovic.
\newblock Group robust preference optimization in reward-free rlhf.
\newblock \emph{Advances in Neural Information Processing Systems}, 37:\penalty0 37100--37137, 2024.

\bibitem[Saravia et~al.(2018)Saravia, Liu, Huang, Wu, and Chen]{saravia-etal-2018-carer}
Elvis Saravia, Hsien-Chi~Toby Liu, Yen-Hao Huang, Junlin Wu, and Yi-Shin Chen.
\newblock {CARER}: Contextualized affect representations for emotion recognition.
\newblock In \emph{Proceedings of the 2018 Conference on Empirical Methods in Natural Language Processing}, pages 3687--3697. Association for Computational Linguistics, 2018.

\bibitem[Shafieezadeh-Abadeh et~al.(2019)Shafieezadeh-Abadeh, Kuhn, and Esfahani]{shafieezadeh2019regularization}
Soroosh Shafieezadeh-Abadeh, Daniel Kuhn, and Peyman~Mohajerin Esfahani.
\newblock Regularization via mass transportation.
\newblock \emph{Journal of Machine Learning Research}, 20\penalty0 (103):\penalty0 1--68, 2019.

\bibitem[Shi and Chi(2024)]{shi2024distributionally}
Laixi Shi and Yuejie Chi.
\newblock Distributionally robust model-based offline reinforcement learning with near-optimal sample complexity.
\newblock \emph{Journal of Machine Learning Research}, 25\penalty0 (200):\penalty0 1--91, 2024.

\bibitem[Si et~al.(2020)Si, Zhang, Zhou, and Blanchet]{si2020distributionally}
Nian Si, Fan Zhang, Zhengyuan Zhou, and Jose Blanchet.
\newblock Distributionally robust policy evaluation and learning in offline contextual bandits.
\newblock In \emph{International Conference on Machine Learning}, pages 8884--8894, 2020.

\bibitem[Skalse et~al.(2022)Skalse, Howe, Krasheninnikov, and Krueger]{skalse2022defining}
Joar Skalse, Nikolaus Howe, Dmitrii Krasheninnikov, and David Krueger.
\newblock Defining and characterizing reward gaming.
\newblock \emph{Advances in Neural Information Processing Systems}, 35:\penalty0 9460--9471, 2022.

\bibitem[Taori et~al.(2020)Taori, Dave, Shankar, Carlini, Recht, and Schmidt]{taori2020measuring}
Rohan Taori, Achal Dave, Vaishaal Shankar, Nicholas Carlini, Benjamin Recht, and Ludwig Schmidt.
\newblock Measuring robustness to natural distribution shifts in image classification.
\newblock \emph{Advances in Neural Information Processing Systems}, 33:\penalty0 18583--18599, 2020.

\bibitem[Touvron et~al.(2023)Touvron, Martin, Stone, Albert, Almahairi, Babaei, Bashlykov, Batra, Bhargava, Bhosale, et~al.]{touvron2023llama}
Hugo Touvron, Louis Martin, Kevin Stone, Peter Albert, Amjad Almahairi, Yasmine Babaei, Nikolay Bashlykov, Soumya Batra, Prajjwal Bhargava, Shruti Bhosale, et~al.
\newblock Llama 2: Open foundation and fine-tuned chat models.
\newblock \emph{arXiv preprint arXiv:2307.09288}, 2023.

\bibitem[Villani et~al.(2009)]{villani2009optimal}
C{\'e}dric Villani et~al.
\newblock \emph{Optimal transport: old and new}, volume 338.
\newblock Springer, 2009.

\bibitem[Wang et~al.(2024{\natexlab{a}})Wang, Xiong, Xie, Zhao, and Zhang]{ArmoRM}
Haoxiang Wang, Wei Xiong, Tengyang Xie, Han Zhao, and Tong Zhang.
\newblock Interpretable preferences via multi-objective reward modeling and mixture-of-experts.
\newblock In \emph{EMNLP}, 2024{\natexlab{a}}.

\bibitem[Wang et~al.(2021)Wang, Foster, and Kakade]{wang2021what}
Ruosong Wang, Dean Foster, and Sham~M. Kakade.
\newblock What are the statistical limits of offline {RL} with linear function approximation?
\newblock In \emph{International Conference on Learning Representations}, 2021.

\bibitem[Wang and Zou(2022)]{wang22policygradient}
Yue Wang and Shaofeng Zou.
\newblock Policy gradient method for robust reinforcement learning.
\newblock In \emph{Proceedings of the 39th International Conference on Machine Learning}, 2022.

\bibitem[Wang et~al.(2024{\natexlab{b}})Wang, Dong, Delalleau, Zeng, Shen, Egert, Zhang, Sreedhar, and Kuchaiev]{wang2024helpsteer2}
Zhilin Wang, Yi~Dong, Olivier Delalleau, Jiaqi Zeng, Gerald Shen, Daniel Egert, Jimmy Zhang, Makesh~Narsimhan Sreedhar, and Oleksii Kuchaiev.
\newblock Helpsteer 2: Open-source dataset for training top-performing reward models.
\newblock \emph{Advances in Neural Information Processing Systems}, 37:\penalty0 1474--1501, 2024{\natexlab{b}}.

\bibitem[Wu et~al.(2024)Wu, Xie, Yang, Wu, Gao, Ding, Wang, and He]{wu2024beta}
Junkang Wu, Yuexiang Xie, Zhengyi Yang, Jiancan Wu, Jinyang Gao, Bolin Ding, Xiang Wang, and Xiangnan He.
\newblock $\beta$-dpo: Direct preference optimization with dynamic $\beta$.
\newblock \emph{Advances in Neural Information Processing Systems}, 37:\penalty0 129944--129966, 2024.

\bibitem[Wu et~al.(2025)Wu, Xie, Yang, Wu, Chen, Gao, Ding, Wang, and He]{wu2025towards}
Junkang Wu, Yuexiang Xie, Zhengyi Yang, Jiancan Wu, Jiawei Chen, Jinyang Gao, Bolin Ding, Xiang Wang, and Xiangnan He.
\newblock Towards robust alignment of language models: Distributionally robustifying direct preference optimization.
\newblock In \emph{The Thirteenth International Conference on Learning Representations}, 2025.

\bibitem[Xu et~al.(2023)Xu, Panaganti, and Kalathil]{xu-panaganti-2023samplecomplexity}
Zaiyan Xu, Kishan Panaganti, and Dileep Kalathil.
\newblock Improved sample complexity bounds for distributionally robust reinforcement learning.
\newblock In \emph{International Conference on Artificial Intelligence and Statistics}. Conference on Artificial Intelligence and Statistics, 2023.

\bibitem[Yan et~al.(2024)Yan, Lou, Li, Zhang, Xie, Yu, Wang, Yan, and Shen]{yan2024reward}
Yuzi Yan, Xingzhou Lou, Jialian Li, Yiping Zhang, Jian Xie, Chao Yu, Yu~Wang, Dong Yan, and Yuan Shen.
\newblock Reward-robust rlhf in llms.
\newblock \emph{arXiv preprint arXiv:2409.15360}, 2024.

\bibitem[Yang et~al.(2022)Yang, Zhang, and Zhang]{yang2022toward}
Wenhao Yang, Liangyu Zhang, and Zhihua Zhang.
\newblock Toward theoretical understandings of robust {M}arkov decision processes: Sample complexity and asymptotics.
\newblock \emph{The Annals of Statistics}, 50\penalty0 (6):\penalty0 3223--3248, 2022.

\bibitem[Yang et~al.(2023)Yang, Guo, Xu, Liu, and Anandkumar]{yang2023distributionally}
Zhouhao Yang, Yihong Guo, Pan Xu, Anqi Liu, and Animashree Anandkumar.
\newblock Distributionally robust policy gradient for offline contextual bandits.
\newblock In \emph{International Conference on Artificial Intelligence and Statistics}, pages 6443--6462. PMLR, 2023.

\bibitem[Zhang et~al.(2025)Zhang, Wang, Hwang, Dong, Delalleau, Choi, Choi, Ren, and Pyatkin]{zhang2025diverging}
Michael~JQ Zhang, Zhilin Wang, Jena~D. Hwang, Yi~Dong, Olivier Delalleau, Yejin Choi, Eunsol Choi, Xiang Ren, and Valentina Pyatkin.
\newblock Diverging preferences: When do annotators disagree and do models know?
\newblock In \emph{Forty-second International Conference on Machine Learning}, 2025.

\bibitem[Zhao et~al.(2024)Zhao, Dang, and Grover]{zhao2024group}
Siyan Zhao, John Dang, and Aditya Grover.
\newblock Group preference optimization: Few-shot alignment of large language models.
\newblock In \emph{The Twelfth International Conference on Learning Representations}, 2024.

\bibitem[Zhou et~al.(2024)Zhou, Liu, Cheng, Kalathil, Kumar, and Tian]{zhou2024natural}
Ruida Zhou, Tao Liu, Min Cheng, Dileep Kalathil, PR~Kumar, and Chao Tian.
\newblock Natural actor-critic for robust reinforcement learning with function approximation.
\newblock \emph{Advances in neural information processing systems}, 36, 2024.

\bibitem[Zhou et~al.(2021)Zhou, Bai, Zhou, Qiu, Blanchet, and Glynn]{zhou2021finite}
Zhengqing Zhou, Qinxun Bai, Zhengyuan Zhou, Linhai Qiu, Jose Blanchet, and Peter Glynn.
\newblock Finite-sample regret bound for distributionally robust offline tabular reinforcement learning.
\newblock In \emph{International Conference on Artificial Intelligence and Statistics}, pages 3331--3339, 2021.

\bibitem[Zhu et~al.(2023)Zhu, Jordan, and Jiao]{zhu2023principled}
Banghua Zhu, Michael Jordan, and Jiantao Jiao.
\newblock Principled reinforcement learning with human feedback from pairwise or k-wise comparisons.
\newblock In \emph{International Conference on Machine Learning}, pages 43037--43067. PMLR, 2023.

\bibitem[Ziegler et~al.(2019)Ziegler, Stiennon, Wu, Brown, Radford, Amodei, Christiano, and Irving]{ziegler2019fine}
Daniel~M Ziegler, Nisan Stiennon, Jeffrey Wu, Tom~B Brown, Alec Radford, Dario Amodei, Paul Christiano, and Geoffrey Irving.
\newblock Fine-tuning language models from human preferences.
\newblock \emph{arXiv preprint arXiv:1909.08593}, 2019.

\end{thebibliography}

\newpage
\section*{NeurIPS Paper Checklist}

\begin{enumerate}

\item {\bf Claims}
    \item[] Question: Do the main claims made in the abstract and introduction accurately reflect the paper's contributions and scope?
    \item[] Answer: \answerYes{} 
    \item[] Justification: We have clearly stated our paper's contributions and scope in the abstract and introduction.
    \item[] Guidelines:
    \begin{itemize}
        \item The answer NA means that the abstract and introduction do not include the claims made in the paper.
        \item The abstract and/or introduction should clearly state the claims made, including the contributions made in the paper and important assumptions and limitations. A No or NA answer to this question will not be perceived well by the reviewers. 
        \item The claims made should match theoretical and experimental results, and reflect how much the results can be expected to generalize to other settings. 
        \item It is fine to include aspirational goals as motivation as long as it is clear that these goals are not attained by the paper. 
    \end{itemize}

\item {\bf Limitations}
    \item[] Question: Does the paper discuss the limitations of the work performed by the authors?
    \item[] Answer: \answerYes{} 
    \item[] Justification: We have clearly stated the assumptions used in our proofs. Additional discussion on limitations can be found in \cref{sec:limitations}.
    \item[] Guidelines:
    \begin{itemize}
        \item The answer NA means that the paper has no limitation while the answer No means that the paper has limitations, but those are not discussed in the paper. 
        \item The authors are encouraged to create a separate "Limitations" section in their paper.
        \item The paper should point out any strong assumptions and how robust the results are to violations of these assumptions (e.g., independence assumptions, noiseless settings, model well-specification, asymptotic approximations only holding locally). The authors should reflect on how these assumptions might be violated in practice and what the implications would be.
        \item The authors should reflect on the scope of the claims made, e.g., if the approach was only tested on a few datasets or with a few runs. In general, empirical results often depend on implicit assumptions, which should be articulated.
        \item The authors should reflect on the factors that influence the performance of the approach. For example, a facial recognition algorithm may perform poorly when image resolution is low or images are taken in low lighting. Or a speech-to-text system might not be used reliably to provide closed captions for online lectures because it fails to handle technical jargon.
        \item The authors should discuss the computational efficiency of the proposed algorithms and how they scale with dataset size.
        \item If applicable, the authors should discuss possible limitations of their approach to address problems of privacy and fairness.
        \item While the authors might fear that complete honesty about limitations might be used by reviewers as grounds for rejection, a worse outcome might be that reviewers discover limitations that aren't acknowledged in the paper. The authors should use their best judgment and recognize that individual actions in favor of transparency play an important role in developing norms that preserve the integrity of the community. Reviewers will be specifically instructed to not penalize honesty concerning limitations.
    \end{itemize}

\item {\bf Theory assumptions and proofs}
    \item[] Question: For each theoretical result, does the paper provide the full set of assumptions and a complete (and correct) proof?
    \item[] Answer: \answerYes{} 
    \item[] Justification: All assumptions used are clearly stated.
    \item[] Guidelines:
    \begin{itemize}
        \item The answer NA means that the paper does not include theoretical results. 
        \item All the theorems, formulas, and proofs in the paper should be numbered and cross-referenced.
        \item All assumptions should be clearly stated or referenced in the statement of any theorems.
        \item The proofs can either appear in the main paper or the supplemental material, but if they appear in the supplemental material, the authors are encouraged to provide a short proof sketch to provide intuition. 
        \item Inversely, any informal proof provided in the core of the paper should be complemented by formal proofs provided in appendix or supplemental material.
        \item Theorems and Lemmas that the proof relies upon should be properly referenced. 
    \end{itemize}

    \item {\bf Experimental result reproducibility}
    \item[] Question: Does the paper fully disclose all the information needed to reproduce the main experimental results of the paper to the extent that it affects the main claims and/or conclusions of the paper (regardless of whether the code and data are provided or not)?
    \item[] Answer: \answerYes{} 
    \item[] Justification: We provide information in order to reproduce experimental results presented in our paper. In addition, we will provide open access to the data and code used in this paper.
    \item[] Guidelines:
    \begin{itemize}
        \item The answer NA means that the paper does not include experiments.
        \item If the paper includes experiments, a No answer to this question will not be perceived well by the reviewers: Making the paper reproducible is important, regardless of whether the code and data are provided or not.
        \item If the contribution is a dataset and/or model, the authors should describe the steps taken to make their results reproducible or verifiable. 
        \item Depending on the contribution, reproducibility can be accomplished in various ways. For example, if the contribution is a novel architecture, describing the architecture fully might suffice, or if the contribution is a specific model and empirical evaluation, it may be necessary to either make it possible for others to replicate the model with the same dataset, or provide access to the model. In general. releasing code and data is often one good way to accomplish this, but reproducibility can also be provided via detailed instructions for how to replicate the results, access to a hosted model (e.g., in the case of a large language model), releasing of a model checkpoint, or other means that are appropriate to the research performed.
        \item While NeurIPS does not require releasing code, the conference does require all submissions to provide some reasonable avenue for reproducibility, which may depend on the nature of the contribution. For example
        \begin{enumerate}
            \item If the contribution is primarily a new algorithm, the paper should make it clear how to reproduce that algorithm.
            \item If the contribution is primarily a new model architecture, the paper should describe the architecture clearly and fully.
            \item If the contribution is a new model (e.g., a large language model), then there should either be a way to access this model for reproducing the results or a way to reproduce the model (e.g., with an open-source dataset or instructions for how to construct the dataset).
            \item We recognize that reproducibility may be tricky in some cases, in which case authors are welcome to describe the particular way they provide for reproducibility. In the case of closed-source models, it may be that access to the model is limited in some way (e.g., to registered users), but it should be possible for other researchers to have some path to reproducing or verifying the results.
        \end{enumerate}
    \end{itemize}

\item {\bf Open access to data and code}
    \item[] Question: Does the paper provide open access to the data and code, with sufficient instructions to faithfully reproduce the main experimental results, as described in supplemental material?
    \item[] Answer: \answerYes{} 
    \item[] Justification: We will provide open access to data and code upon acceptance.
    \item[] Guidelines:
    \begin{itemize}
        \item The answer NA means that paper does not include experiments requiring code.
        \item Please see the NeurIPS code and data submission guidelines (\url{https://nips.cc/public/guides/CodeSubmissionPolicy}) for more details.
        \item While we encourage the release of code and data, we understand that this might not be possible, so “No” is an acceptable answer. Papers cannot be rejected simply for not including code, unless this is central to the contribution (e.g., for a new open-source benchmark).
        \item The instructions should contain the exact command and environment needed to run to reproduce the results. See the NeurIPS code and data submission guidelines (\url{https://nips.cc/public/guides/CodeSubmissionPolicy}) for more details.
        \item The authors should provide instructions on data access and preparation, including how to access the raw data, preprocessed data, intermediate data, and generated data, etc.
        \item The authors should provide scripts to reproduce all experimental results for the new proposed method and baselines. If only a subset of experiments are reproducible, they should state which ones are omitted from the script and why.
        \item At submission time, to preserve anonymity, the authors should release anonymized versions (if applicable).
        \item Providing as much information as possible in supplemental material (appended to the paper) is recommended, but including URLs to data and code is permitted.
    \end{itemize}

\item {\bf Experimental setting/details}
    \item[] Question: Does the paper specify all the training and test details (e.g., data splits, hyperparameters, how they were chosen, type of optimizer, etc.) necessary to understand the results?
    \item[] Answer: \answerYes{} 
    \item[] Justification: Experimental details can be found in \cref{sec:experiments}, and additional details can be found in \cref{sec:additional-experiment-details}.
    \item[] Guidelines:
    \begin{itemize}
        \item The answer NA means that the paper does not include experiments.
        \item The experimental setting should be presented in the core of the paper to a level of detail that is necessary to appreciate the results and make sense of them.
        \item The full details can be provided either with the code, in appendix, or as supplemental material.
    \end{itemize}

\item {\bf Experiment statistical significance}
    \item[] Question: Does the paper report error bars suitably and correctly defined or other appropriate information about the statistical significance of the experiments?
    \item[] Answer: \answerNA{} 
    \item[] Justification: [NA]
    \item[] Guidelines:
    \begin{itemize}
        \item The answer NA means that the paper does not include experiments.
        \item The authors should answer "Yes" if the results are accompanied by error bars, confidence intervals, or statistical significance tests, at least for the experiments that support the main claims of the paper.
        \item The factors of variability that the error bars are capturing should be clearly stated (for example, train/test split, initialization, random drawing of some parameter, or overall run with given experimental conditions).
        \item The method for calculating the error bars should be explained (closed form formula, call to a library function, bootstrap, etc.)
        \item The assumptions made should be given (e.g., Normally distributed errors).
        \item It should be clear whether the error bar is the standard deviation or the standard error of the mean.
        \item It is OK to report 1-sigma error bars, but one should state it. The authors should preferably report a 2-sigma error bar than state that they have a 96\% CI, if the hypothesis of Normality of errors is not verified.
        \item For asymmetric distributions, the authors should be careful not to show in tables or figures symmetric error bars that would yield results that are out of range (e.g. negative error rates).
        \item If error bars are reported in tables or plots, The authors should explain in the text how they were calculated and reference the corresponding figures or tables in the text.
    \end{itemize}

\item {\bf Experiments compute resources}
    \item[] Question: For each experiment, does the paper provide sufficient information on the computer resources (type of compute workers, memory, time of execution) needed to reproduce the experiments?
    \item[] Answer: \answerYes{} 
    \item[] Justification: We provide additional details regarding our experiment setups in \cref{sec:additional-experiment-details}.
    \item[] Guidelines:
    \begin{itemize}
        \item The answer NA means that the paper does not include experiments.
        \item The paper should indicate the type of compute workers CPU or GPU, internal cluster, or cloud provider, including relevant memory and storage.
        \item The paper should provide the amount of compute required for each of the individual experimental runs as well as estimate the total compute. 
        \item The paper should disclose whether the full research project required more compute than the experiments reported in the paper (e.g., preliminary or failed experiments that didn't make it into the paper). 
    \end{itemize}
    
\item {\bf Code of ethics}
    \item[] Question: Does the research conducted in the paper conform, in every respect, with the NeurIPS Code of Ethics \url{https://neurips.cc/public/EthicsGuidelines}?
    \item[] Answer: \answerYes{} 
    \item[] Justification: We acknowledge the NeurIPS Code of Ethics.
    \item[] Guidelines:
    \begin{itemize}
        \item The answer NA means that the authors have not reviewed the NeurIPS Code of Ethics.
        \item If the authors answer No, they should explain the special circumstances that require a deviation from the Code of Ethics.
        \item The authors should make sure to preserve anonymity (e.g., if there is a special consideration due to laws or regulations in their jurisdiction).
    \end{itemize}

\item {\bf Broader impacts}
    \item[] Question: Does the paper discuss both potential positive societal impacts and negative societal impacts of the work performed?
    \item[] Answer: \answerYes{} 
    \item[] Justification: See \cref{sec:impact-statement}.
    \item[] Guidelines:
    \begin{itemize}
        \item The answer NA means that there is no societal impact of the work performed.
        \item If the authors answer NA or No, they should explain why their work has no societal impact or why the paper does not address societal impact.
        \item Examples of negative societal impacts include potential malicious or unintended uses (e.g., disinformation, generating fake profiles, surveillance), fairness considerations (e.g., deployment of technologies that could make decisions that unfairly impact specific groups), privacy considerations, and security considerations.
        \item The conference expects that many papers will be foundational research and not tied to particular applications, let alone deployments. However, if there is a direct path to any negative applications, the authors should point it out. For example, it is legitimate to point out that an improvement in the quality of generative models could be used to generate deepfakes for disinformation. On the other hand, it is not needed to point out that a generic algorithm for optimizing neural networks could enable people to train models that generate Deepfakes faster.
        \item The authors should consider possible harms that could arise when the technology is being used as intended and functioning correctly, harms that could arise when the technology is being used as intended but gives incorrect results, and harms following from (intentional or unintentional) misuse of the technology.
        \item If there are negative societal impacts, the authors could also discuss possible mitigation strategies (e.g., gated release of models, providing defenses in addition to attacks, mechanisms for monitoring misuse, mechanisms to monitor how a system learns from feedback over time, improving the efficiency and accessibility of ML).
    \end{itemize}
    
\item {\bf Safeguards}
    \item[] Question: Does the paper describe safeguards that have been put in place for responsible release of data or models that have a high risk for misuse (e.g., pretrained language models, image generators, or scraped datasets)?
    \item[] Answer: \answerNA{} 
    \item[] Justification: This paper has no such risks.
    \item[] Guidelines:
    \begin{itemize}
        \item The answer NA means that the paper poses no such risks.
        \item Released models that have a high risk for misuse or dual-use should be released with necessary safeguards to allow for controlled use of the model, for example by requiring that users adhere to usage guidelines or restrictions to access the model or implementing safety filters. 
        \item Datasets that have been scraped from the Internet could pose safety risks. The authors should describe how they avoided releasing unsafe images.
        \item We recognize that providing effective safeguards is challenging, and many papers do not require this, but we encourage authors to take this into account and make a best faith effort.
    \end{itemize}

\item {\bf Licenses for existing assets}
    \item[] Question: Are the creators or original owners of assets (e.g., code, data, models), used in the paper, properly credited and are the license and terms of use explicitly mentioned and properly respected?
    \item[] Answer: \answerYes{} 
    \item[] Justification: All creators or original owners of assets (e.g., code, data, models), used in the paper, are properly credited.
    \item[] Guidelines:
    \begin{itemize}
        \item The answer NA means that the paper does not use existing assets.
        \item The authors should cite the original paper that produced the code package or dataset.
        \item The authors should state which version of the asset is used and, if possible, include a URL.
        \item The name of the license (e.g., CC-BY 4.0) should be included for each asset.
        \item For scraped data from a particular source (e.g., website), the copyright and terms of service of that source should be provided.
        \item If assets are released, the license, copyright information, and terms of use in the package should be provided. For popular datasets, \url{paperswithcode.com/datasets} has curated licenses for some datasets. Their licensing guide can help determine the license of a dataset.
        \item For existing datasets that are re-packaged, both the original license and the license of the derived asset (if it has changed) should be provided.
        \item If this information is not available online, the authors are encouraged to reach out to the asset's creators.
    \end{itemize}

\item {\bf New assets}
    \item[] Question: Are new assets introduced in the paper well documented and is the documentation provided alongside the assets?
    \item[] Answer: \answerYes{} 
    \item[] Justification: The documentation will be released along with the release of the code.
    \item[] Guidelines:
    \begin{itemize}
        \item The answer NA means that the paper does not release new assets.
        \item Researchers should communicate the details of the dataset/code/model as part of their submissions via structured templates. This includes details about training, license, limitations, etc. 
        \item The paper should discuss whether and how consent was obtained from people whose asset is used.
        \item At submission time, remember to anonymize your assets (if applicable). You can either create an anonymized URL or include an anonymized zip file.
    \end{itemize}

\item {\bf Crowdsourcing and research with human subjects}
    \item[] Question: For crowdsourcing experiments and research with human subjects, does the paper include the full text of instructions given to participants and screenshots, if applicable, as well as details about compensation (if any)? 
    \item[] Answer: \answerNA{} 
    \item[] Justification: [NA]
    \item[] Guidelines:
    \begin{itemize}
        \item The answer NA means that the paper does not involve crowdsourcing nor research with human subjects.
        \item Including this information in the supplemental material is fine, but if the main contribution of the paper involves human subjects, then as much detail as possible should be included in the main paper. 
        \item According to the NeurIPS Code of Ethics, workers involved in data collection, curation, or other labor should be paid at least the minimum wage in the country of the data collector. 
    \end{itemize}

\item {\bf Institutional review board (IRB) approvals or equivalent for research with human subjects}
    \item[] Question: Does the paper describe potential risks incurred by study participants, whether such risks were disclosed to the subjects, and whether Institutional Review Board (IRB) approvals (or an equivalent approval/review based on the requirements of your country or institution) were obtained?
    \item[] Answer: \answerNA{} 
    \item[] Justification: [NA]
    \item[] Guidelines:
    \begin{itemize}
        \item The answer NA means that the paper does not involve crowdsourcing nor research with human subjects.
        \item Depending on the country in which research is conducted, IRB approval (or equivalent) may be required for any human subjects research. If you obtained IRB approval, you should clearly state this in the paper. 
        \item We recognize that the procedures for this may vary significantly between institutions and locations, and we expect authors to adhere to the NeurIPS Code of Ethics and the guidelines for their institution. 
        \item For initial submissions, do not include any information that would break anonymity (if applicable), such as the institution conducting the review.
    \end{itemize}

\item {\bf Declaration of LLM usage}
    \item[] Question: Does the paper describe the usage of LLMs if it is an important, original, or non-standard component of the core methods in this research? Note that if the LLM is used only for writing, editing, or formatting purposes and does not impact the core methodology, scientific rigorousness, or originality of the research, declaration is not required.
    \item[] Answer: \answerYes{} 
    \item[] Justification: The proposed methods, Wasserstein DPO and KLDPO, are applied to fine-tune large language models such as GPT-2 and LLaMA. The LLMs are central to our empirical validation and the alignment task studied in this work. Thus, their usage is a critical component of the core methodology.
    \item[] Guidelines:
    \begin{itemize}
        \item The answer NA means that the core method development in this research does not involve LLMs as any important, original, or non-standard components.
        \item Please refer to our LLM policy (\url{https://neurips.cc/Conferences/2025/LLM}) for what should or should not be described.
    \end{itemize}

\end{enumerate}

\newpage
\appendix
\section{Useful Technical Results}
\subsection{Wasserstein Theory}\label{sec:wasserstein-theory}
We rely on the following strong duality result from the Wasserstein distributionally robust optimization (WDRO) literature.
\begin{lemma}[\text{\citealp[Theorem 1]{gao-2022-distributionally}}; Strong Duality for DRO with Wasserstein Distance]\label{thm:wasser-duality}
    Consider any $p\in[1,\infty)$, any $\nu\in\cP(\Xi)$, any $\rho > 0$, and any $\Psi \in L^1(\nu)$ such that the growth rate $\kappa$ of $\Psi$ satisfies
    \begin{equation}\label{eq:growth-rate-of-obj}
        \kappa \coloneqq \inf\bigg\{ \eta \geq 0 \colon \int_\Xi \Phi(\eta, \zeta) \nu(d\zeta) > -\infty \bigg\} < \infty,
    \end{equation}
    where $\Phi(\eta,\zeta)\coloneqq \inf_{\xi\in\Xi}\{\eta d^p(\xi,\zeta)-\Psi(\xi)\}$ is a regularization operator. Then the strong duality holds with \textbf{finite optimal value} $v_p = v_D \leq \infty$, where
    \begin{align*}
        v_p &\coloneqq \sup_{\mu\in\cP(\Xi)} \bigg\{ \int_\Xi \Psi(\xi) \mu(d\xi) \colon \sfW_p(\mu,\nu)\leq \rho    \bigg\}, &&\qquad\qquad \text{(Primal)} \\
        v_D &\coloneqq \inf_{\eta \geq 0} \bigg\{  \eta \rho^p - \int_\Xi \inf_{\xi\in\Xi} [\eta d^p(\xi,\zeta) - \Psi(\xi)] \nu(d\zeta) \bigg\}. &&\qquad\qquad \text{(Dual)}
    \end{align*}
\end{lemma}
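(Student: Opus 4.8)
The plan is to recast the primal as an optimal-transport problem over couplings, derive weak duality $v_p\le v_D$ by Lagrangian relaxation, and then establish the reverse inequality $v_D\le v_p$ by constructing near-optimal couplings, with the growth-rate condition \cref{eq:growth-rate-of-obj} ensuring the dual value is finite and the relevant multipliers stay in a bounded range. \textbf{Step 1 (coupling reformulation):} using the Kantorovich definition of $\sfW_p$, the constraint $\sfW_p(\mu,\nu)\le\rho$ holds iff some coupling $\pi\in\cP(\Xi\times\Xi)$ with marginals $(\mu,\nu)$ satisfies $\int d^p(\xi,\zeta)\,\pi(d\xi,d\zeta)\le\rho^p$. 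Since the objective sees $\mu$ only through $\int\Psi\,d\mu=\int\Psi(\xi)\,\pi(d\xi,d\zeta)$, the variable $\mu$ can be integrated out, giving
\[
v_p=\sup_{\pi}\Big\{\int\Psi(\xi)\,\pi(d\xi,d\zeta)\;:\;\pi\ \text{has second marginal}\ \nu,\ \int d^p\,d\pi\le\rho^p\Big\},
\]
which, after disintegrating $\pi(d\xi,d\zeta)=\nu(d\zeta)\,\pi_\zeta(d\xi)$, is a linear program over transport kernels $\zeta\mapsto\pi_\zeta$.

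\textbf{Step 2 (weak duality):} next, for any feasible $\pi$ and any $\eta\ge0$ I would relax the budget constraint,
\[
\int\Psi\,d\pi\le\int\Psi\,d\pi+\eta\Big(\rho^p-\int d^p\,d\pi\Big)=\eta\rho^p+\int\big[\Psi(\xi)-\eta d^p(\xi,\zeta)\big]\pi(d\xi,d\zeta).
\]
Bounding the integrand by its pointwise supremum in $\xi$ and using that $\pi$ has second marginal $\nu$ gives $\int\Psi\,d\pi\le\eta\rho^p-\int\Phi(\eta,\zeta)\,\nu(d\zeta)$, since $\sup_\xi[\Psi(\xi)-\eta d^p(\xi,\zeta)]=-\Phi(\eta,\zeta)$. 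Taking the supremum over $\pi$ and the infimum over $\eta$ yields $v_p\le v_D$.

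\textbf{Step 3 (strong duality):} the harder direction is $v_D\le v_p$. I would fix $\epsilon>0$, choose an $\epsilon$-optimal dual multiplier $\eta$ (admissible because $\kappa<\infty$ forces $\int\Phi(\eta,\cdot)\,d\nu>-\infty$ for every $\eta\ge\kappa$, hence $v_D<\infty$), and construct a feasible coupling whose value is within $O(\epsilon)$ of the dual value. A measurable-selection theorem applied to the normal integrand $(\xi,\zeta)\mapsto\Psi(\xi)-\eta d^p(\xi,\zeta)$ produces a measurable map $\zeta\mapsto\xi_\epsilon(\zeta)$ with $\Psi(\xi_\epsilon(\zeta))-\eta d^p(\xi_\epsilon(\zeta),\zeta)\ge-\Phi(\eta,\zeta)-\epsilon$ for $\nu$-a.e.\ $\zeta$. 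The candidate coupling transports $\zeta$ toward $\xi_\epsilon(\zeta)$; to meet the budget $\int d^p\,d\pi\le\rho^p$ with equality I would interpolate between this aggressive transport and the zero-cost identity transport $\zeta\mapsto\delta_\zeta$, choosing the (possibly $\zeta$-dependent) mixing weight so the transport cost hits $\rho^p$. Evaluating $\int\Psi\,d\pi$ on this coupling and letting $\epsilon\downarrow0$ delivers $v_p\ge v_D$.

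The hard part will be Step 3: controlling the constructed coupling when $\Xi$ is non-compact. Two intertwined difficulties arise, namely (i) the measurable selection of approximate maximizers of $\Psi(\xi)-\eta d^p(\xi,\zeta)$, and (ii) enforcing the Wasserstein budget while the objective still reaches $v_D$ — precisely the complementary-slackness balance between transport cost and the multiplier $\eta$. The growth-rate hypothesis $\kappa<\infty$ is what tames the non-compactness: it guarantees $\Phi(\eta,\cdot)$ is $\nu$-integrable for $\eta\ge\kappa$, so the suprema are finite $\nu$-a.e., the optimal multiplier stays in a bounded range, and no mass escapes to infinity in the limit. The remaining ingredients — interchange of integration and supremum, and dominated convergence as $\epsilon\downarrow0$ — are routine.
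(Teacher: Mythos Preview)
The paper does not prove this lemma at all: it is stated in the appendix under ``Useful Technical Results'' and attributed verbatim to \citet[Theorem 1]{gao-2022-distributionally}, with no accompanying argument. The paper merely \emph{invokes} this strong-duality result (together with the growth-rate criterion in \cref{lem:growth-rate-sufficient}) as a black box, in order to obtain \cref{cor:strong-duality-holds-for-dpo-loss} and to justify the dual reformulation used inside the proof of \cref{lem:convergence-of-wdpo-loss}. So there is nothing in the paper to compare your proposal against.

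That said, your outline is a faithful sketch of the standard route taken in the Wasserstein DRO literature, and in particular of the Gao--Kleywegt proof you are effectively reconstructing: the coupling reformulation, Lagrangian weak duality, and the measurable-selection/interpolation construction for the reverse inequality are exactly the pillars of that argument. The genuinely delicate parts you flag in Step~3 --- measurable selection of near-maximizers of $\xi\mapsto\Psi(\xi)-\eta d^p(\xi,\zeta)$ and the complementary-slackness balancing of the transport budget --- are indeed where the technical work lies, and the full proof requires a careful case split (e.g.\ whether the optimal dual multiplier is zero, whether the budget constraint is active) that your sketch elides. None of this is a gap in your \emph{plan}; it is simply the expected detail load of a result the present paper chose to cite rather than reprove.
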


\begin{lemma}[\text{\citealp[Lemma 2.(ii)]{gao-2022-distributionally}}; Properties of the growth $\kappa$]\label{lem:growth-rate-sufficient}
    Suppose that $\nu\in\cP_p(\Xi)$. Then the growth rate $\kappa$ (as defined in \cref{eq:growth-rate-of-obj}) is finite if and only if there exists $\zeta^o\in\Xi$ and $L,M>0$ such that
    \begin{equation}\label{eq:growth-rate-bounded-iff-condition}
    \Psi(\xi)-\Psi(\zeta^o) \leq Ld^p(\xi,\zeta^o) + M, \quad\forall \xi\in\Xi.
    \end{equation}
\end{lemma}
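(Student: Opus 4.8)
\textbf{Proof proposal (reconstruction of the argument of \citet{gao-2022-distributionally}).}
Although this statement is quoted, it admits a short self-contained argument, and the plan is to prove both implications directly from the definition $\kappa=\inf\{\eta\geq 0\colon \int_\Xi\Phi(\eta,\zeta)\,\nu(d\zeta)>-\infty\}$ in \cref{eq:growth-rate-of-obj}, where $\Phi(\eta,\zeta)=\inf_{\xi\in\Xi}\{\eta\, d^p(\xi,\zeta)-\Psi(\xi)\}$. The organizing observation is that, for fixed $\zeta$, the map $\eta\mapsto\Phi(\eta,\zeta)$ is an infimum of affine functions of $\eta$ with nonnegative slopes, hence concave and nondecreasing; moreover $\Phi(\eta,\zeta)\leq-\Psi(\zeta)$ for all $\eta$ (take $\xi=\zeta$), and $\Psi\in L^1(\nu)$, so $\int_\Xi\Phi(\eta,\cdot)\,\nu(d\zeta)$ is always bounded above. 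Consequently the feasible set $S\coloneqq\{\eta\geq 0\colon \int_\Xi\Phi(\eta,\zeta)\,\nu(d\zeta)>-\infty\}$ is an up-set of $[0,\infty)$, and $\kappa<\infty$ is equivalent to $S\neq\emptyset$.

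For the ``if'' direction, suppose \cref{eq:growth-rate-bounded-iff-condition} holds for some $(\zeta^o,L,M)$. I would show that $\eta\coloneqq 2^{p-1}L\in S$. Plugging $\Psi(\xi)\leq\Psi(\zeta^o)+L\,d^p(\xi,\zeta^o)+M$ into $\eta\,d^p(\xi,\zeta)-\Psi(\xi)$ and using the elementary bound $d^p(\xi,\zeta^o)\leq 2^{p-1}\bigl(d^p(\xi,\zeta)+d^p(\zeta,\zeta^o)\bigr)$ (triangle inequality together with convexity of $t\mapsto t^p$ for $p\geq 1$) yields, for every $\xi$, $\eta\,d^p(\xi,\zeta)-\Psi(\xi)\geq(\eta-2^{p-1}L)\,d^p(\xi,\zeta)-\Psi(\zeta^o)-M-2^{p-1}L\,d^p(\zeta,\zeta^o)$. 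With the chosen $\eta$ the first term vanishes, so $\Phi(\eta,\zeta)\geq-\Psi(\zeta^o)-M-2^{p-1}L\,d^p(\zeta,\zeta^o)$, whose right-hand side is $\nu$-integrable because $\nu\in\cP_p(\Xi)$. Sandwiched between this lower bound and the upper bound $-\Psi(\zeta)\in L^1(\nu)$, the function $\Phi(\eta,\cdot)$ is $\nu$-integrable, hence $\eta\in S$ and $\kappa\leq 2^{p-1}L<\infty$.

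For the ``only if'' direction, assume $\kappa<\infty$, so $S\neq\emptyset$, and, using that $S$ is an up-set, I would pick $\eta_0\in S$ with $\eta_0>0$. Then $\int_\Xi\Phi(\eta_0,\zeta)\,\nu(d\zeta)>-\infty$ forces $\Phi(\eta_0,\zeta)>-\infty$ for $\nu$-a.e.\ $\zeta$, and $\Psi\in L^1(\nu)$ forces $|\Psi(\zeta)|<\infty$ for $\nu$-a.e.\ $\zeta$; I would fix any $\zeta^o$ in the intersection of these two full-measure sets. By definition of $\Phi$, $\Phi(\eta_0,\zeta^o)\leq\eta_0\,d^p(\xi,\zeta^o)-\Psi(\xi)$ for every $\xi$, i.e.\ $\Psi(\xi)-\Psi(\zeta^o)\leq\eta_0\,d^p(\xi,\zeta^o)+\bigl(-\Phi(\eta_0,\zeta^o)-\Psi(\zeta^o)\bigr)$; moreover taking $\xi=\zeta^o$ in the infimum shows $\Phi(\eta_0,\zeta^o)\leq-\Psi(\zeta^o)$, so the additive constant is nonnegative. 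Setting $L\coloneqq\eta_0>0$ and $M\coloneqq\max\{-\Phi(\eta_0,\zeta^o)-\Psi(\zeta^o),\,1\}>0$ then establishes \cref{eq:growth-rate-bounded-iff-condition}.

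I expect the ``if'' direction to be the main obstacle: the crux is converting the pointwise linear-growth control on $\Psi$ into a lower bound on $\Phi(\eta,\cdot)$ that is simultaneously uniform in $\xi$ and $\nu$-integrable in $\zeta$, which is precisely where the $p$-th-power triangle inequality and the finite $p$-th moment of $\nu$ enter. The converse is essentially bookkeeping, the only delicacy being the measure-theoretic selection of a single base point $\zeta^o$ that is simultaneously good for $\Phi(\eta_0,\cdot)$ and finite for $\Psi$.
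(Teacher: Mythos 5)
The paper does not prove this lemma: it is imported verbatim as \citet[Lemma 2.(ii)]{gao-2022-distributionally} and used only to justify \cref{cor:strong-duality-holds-for-dpo-loss}, so there is no in-paper argument to compare against. Your reconstruction is, however, a correct self-contained proof of the cited statement. The structural observations (monotonicity of $\eta\mapsto\Phi(\eta,\zeta)$, the upper bound $\Phi(\eta,\zeta)\leq-\Psi(\zeta)$ from $\xi=\zeta$, and the resulting up-set structure of the feasible set) are all sound; the ``if'' direction correctly combines the growth condition with $d^p(\xi,\zeta^o)\leq 2^{p-1}(d^p(\xi,\zeta)+d^p(\zeta,\zeta^o))$ and the finite $p$-th moment of $\nu$ to produce an integrable lower bound on $\Phi(2^{p-1}L,\cdot)$, and the ``only if'' direction correctly extracts a $\nu$-generic base point $\zeta^o$ and reads the growth bound off the definition of $\Phi(\eta_0,\zeta^o)$, with the sign of the additive constant guaranteed by $\Phi(\eta_0,\zeta^o)\leq-\Psi(\zeta^o)$. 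Two points you silently assume and could state explicitly: that $\Psi(\zeta^o)$ is finite in the ``if'' direction (implicit in the form of \cref{eq:growth-rate-bounded-iff-condition}), and that $\zeta\mapsto\Phi(\eta,\zeta)$ is $\nu$-measurable (it is upper semicontinuous as an infimum of functions continuous in $\zeta$, which is what makes the integral in \cref{eq:growth-rate-of-obj} well defined in the first place). Neither affects correctness. Note also that in the paper's actual use case (\cref{cor:strong-duality-holds-for-dpo-loss}, bounded loss on a bounded domain) only the easy ``if'' direction is ever needed, and there one can take $L$ arbitrary and $M$ twice the sup-norm of $\Psi$.
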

\begin{corollary}\label{cor:strong-duality-holds-for-dpo-loss}
    Consider any bounded loss function $l$ over bounded $\Xi$. Then the duality defined in \cref{thm:wasser-duality} holds.
\end{corollary}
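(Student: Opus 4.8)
The plan is to verify that the hypotheses of \cref{thm:wasser-duality} hold for the DPO loss $l$ on the bounded space $\cZ$, and then simply invoke that theorem. Concretely, I want to check two things: (i) the integrability condition $\Psi = l \in L^1(\sfP^o)$, and (ii) the finiteness of the growth rate $\kappa$ defined in \cref{eq:growth-rate-of-obj}. Both will follow immediately from the assumption that $l$ is bounded and $\Xi = \cZ$ is bounded; the role of \cref{lem:growth-rate-sufficient} is to reduce (ii) to an easily checkable Lipschitz-type bound.

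First I would record that since $l$ is bounded, say $|l(z;\theta)| \leq L_0$ for all $z \in \cZ$, it is trivially in $L^1$ with respect to any probability measure, in particular $\sfP^o \in \cP(\cZ)$; so the requirement $\Psi \in L^1(\nu)$ in \cref{thm:wasser-duality} is met with $\nu = \sfP^o$ (or $\sfP^o_n$, which is equally a probability measure on the bounded set $\cZ$). Next, to control $\kappa$, I would apply \cref{lem:growth-rate-sufficient}: it suffices to exhibit $\zeta^o \in \cZ$ and constants $L, M > 0$ with $\Psi(\xi) - \Psi(\zeta^o) \leq L\, d^p(\xi,\zeta^o) + M$ for all $\xi$. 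Pick any $\zeta^o \in \cZ$ and take $M = 2L_0$ and $L = 1$ (or indeed $L$ arbitrarily small): then $\Psi(\xi) - \Psi(\zeta^o) \leq |l(\xi;\theta)| + |l(\zeta^o;\theta)| \leq 2L_0 = M \leq L\, d^p(\xi,\zeta^o) + M$, since $d^p \geq 0$. This also uses $\sfP^o \in \cP_p(\cZ)$, which holds because $\cZ$ is bounded (hence has finite $p$-th moment with respect to any probability measure). Therefore $\kappa < \infty$.

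With both conditions verified, \cref{thm:wasser-duality} applies directly and gives strong duality $v_p = v_D$ with finite optimal value for the primal problem $\sup\{\int l\, d\mu : \sfW_p(\mu,\nu) \leq \rho\}$ and its dual; this is exactly the claim. I do not anticipate a genuine obstacle here — the statement is essentially a specialization of the cited duality theorem to the benign setting of a bounded loss on a bounded domain, and the only "work" is observing that boundedness makes the two technical hypotheses ($L^1$ membership and finite growth rate) automatic via \cref{lem:growth-rate-sufficient}. The one point requiring a line of care is confirming $\nu \in \cP_p(\cZ)$ so that \cref{lem:growth-rate-sufficient} is applicable, but this too is immediate from boundedness of $\cZ$.
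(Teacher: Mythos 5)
Your proposal is correct and follows essentially the same route as the paper: both verify the finite-growth-rate hypothesis of \cref{thm:wasser-duality} via the sufficient condition in \cref{lem:growth-rate-sufficient}, exploiting boundedness of $l$ and of $\Xi$ to choose admissible constants $L,M$ (the paper takes $L$ to be the diameter of $\Xi$ and $M$ the bound on $\Psi$, while you take $M=2L_0$ and $L$ arbitrary — both choices work since $d^p\geq 0$). Your extra checks that $\Psi\in L^1(\nu)$ and $\nu\in\cP_p(\Xi)$ are left implicit in the paper but are correct and harmless additions.
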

\begin{proof}
    It follows from \cref{lem:growth-rate-sufficient}. We can pick $L$ to be the diameter of $\Xi$ and $M$ to be the bound of $\Psi$.
\end{proof}
\subsection{Optimization}
\begin{lemma}[\text{\citealp[Theorem 1.24]{beck2014introduction}}; Linear Approximation Theorem]\label{lem:linear-approximation-theorem}
    Let $f\colon U\to\RR$ be a twice continuously differentiable function over an open set $U\subseteq \RR^n$, and let $x,y\in U$ be such that $[x,y]\subseteq U$. Then there exists $\xi\in[x,y]$ such that
    \begin{equation*}
        f(y) = f(x) + \grad f(x)^{\top}(y-x) + \frac{1}{2}(y-x)^{\top}\grad^2 f(\xi)(y-x).
    \end{equation*}
\end{lemma}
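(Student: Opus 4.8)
The plan is to reduce the multivariate statement to the classical single-variable Taylor theorem with Lagrange remainder by restricting $f$ to the line segment joining $x$ and $y$. Concretely, I would parameterize the segment by $\phi(t) = x + t(y-x)$ for $t \in [0,1]$ and define the scalar function $g(t) = f(\phi(t))$. Because $U$ is open and the segment $[x,y]$ is a compact subset of $U$, there is an open interval $I \supseteq [0,1]$ on which $\phi(t) \in U$, so $g$ is well defined and inherits the $C^2$ regularity of $f$ on $I$ (the map $\phi$ is affine, hence $C^\infty$). This reduction is the crux of the argument; everything afterward is bookkeeping.

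Next I would compute the first two derivatives of $g$ via the chain rule. Since $\phi'(t) = y - x$ is constant, one obtains $g'(t) = \grad f(\phi(t))^{\top}(y-x)$ and, differentiating once more, $g''(t) = (y-x)^{\top}\grad^2 f(\phi(t))(y-x)$. The $C^2$ hypothesis on $f$ guarantees that $g''$ is continuous on $I$, which is precisely what the single-variable remainder theorem requires.

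Then I would invoke the one-dimensional Taylor theorem with Lagrange remainder applied to $g$ on $[0,1]$: there exists $c \in (0,1)$ such that $g(1) = g(0) + g'(0) + \tfrac{1}{2} g''(c)$. Substituting the chain-rule expressions together with $g(0) = f(x)$, $g(1) = f(y)$, $g'(0) = \grad f(x)^{\top}(y-x)$, and $g''(c) = (y-x)^{\top}\grad^2 f(\phi(c))(y-x)$ yields the claimed identity with $\xi := \phi(c) = x + c(y-x)$. Since $c \in (0,1)$, the point $\xi$ lies on the open segment between $x$ and $y$, and in particular $\xi \in [x,y]$ as required.

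The only genuine subtlety — and the step I would be most careful about — is the domain issue in the first paragraph: the single-variable Taylor theorem needs $g$ to be twice continuously differentiable on an \emph{open} set containing $[0,1]$, not merely on the closed interval, so I must explicitly extract such an open $I$ from the openness of $U$ and the compactness of the segment before differentiating. The chain-rule computation and the final back-substitution are routine, and the underlying scalar Taylor theorem (itself a consequence of Rolle's theorem) may be taken as known.
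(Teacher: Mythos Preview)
Your proposal is correct and is precisely the standard argument for this result. The paper does not supply its own proof of this lemma; it is quoted verbatim as a technical tool from \citet[Theorem~1.24]{beck2014introduction}, so there is no paper-side proof to compare against. Your reduction to the one-dimensional Taylor theorem via $g(t)=f(x+t(y-x))$ and the chain-rule computations of $g'$ and $g''$ are exactly how the textbook proof proceeds, and your attention to obtaining an open interval around $[0,1]$ from the openness of $U$ and compactness of the segment is appropriate (though many presentations elide this point).
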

\begin{lemma}[\text{\citealp[Theorem 5.24]{beck2017first}}; First-order characterizations of strong convexity]\label{lem:first-order-character-ization-of-strong-convexity}
    Let $f\colon \EE\to(-\infty,\infty]$ be a proper closed and convex function. Then for a given $\sigma>0$, the following two claims are equivalent:
    \begin{enumerate}[label=\Roman*.]
        \item For any $x,y\in\dom(f)$ and $\lambda\in[0,1]$:
        \begin{equation*}
            f(\lambda x+(1-\lambda )y) \leq \lambda f(x) + (1-\lambda)f(y) - \frac{\sigma}{2}\lambda(1-\lambda)\normns{x-y}^2.
        \end{equation*}
        \item 
        \begin{equation*}
            f(y)\geq f(x) + \inner{g, y-x} + \frac{\sigma}{2}\normns{y-x}^2,     
        \end{equation*}
        for any $x\in\dom(\partial f)$, $y\in\dom(f)$ and $g\in\partial f(x)$.
    \end{enumerate}
\end{lemma}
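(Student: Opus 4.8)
The plan is to prove the two implications directly, organized around the quadratic shift $h(x) \coloneqq f(x) - \tfrac{\sigma}{2}\|x\|^2$. The guiding observation is that claim I is nothing but Jensen's inequality for $h$: using the elementary identity
\begin{equation*}
\lambda\|x\|^2 + (1-\lambda)\|y\|^2 - \|\lambda x + (1-\lambda)y\|^2 = \lambda(1-\lambda)\|x-y\|^2,
\end{equation*}
subtracting $\tfrac{\sigma}{2}\|\lambda x+(1-\lambda)y\|^2$ from both sides of I and regrouping shows that I holds for $f$ if and only if $h$ is convex. Thus the lemma says that the extra $\tfrac{\sigma}{2}\|y-x\|^2$ margin in II is the exact first-order signature of this added convexity. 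I would \emph{not}, however, route the argument through a convexity characterization of the auxiliary function $h$, because the converse of such a characterization is delicate for a candidate that is not yet known to be convex (a proper closed non-convex function can have empty subdifferential everywhere, making the subgradient inequality vacuously true). Instead I carry out each implication by hand, exploiting that $f$ itself is assumed proper, closed, and convex.

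For I $\Rightarrow$ II, fix $x \in \operatorname{dom}(\partial f)$, $g \in \partial f(x)$, and $y \in \operatorname{dom}(f)$, and set $x_\lambda = (1-\lambda)x + \lambda y$ for $\lambda \in (0,1]$. Rearranging I and dividing by $\lambda$ yields
\begin{equation*}
\frac{f(x_\lambda)-f(x)}{\lambda} \le f(y)-f(x) - \frac{\sigma}{2}(1-\lambda)\|x-y\|^2 .
\end{equation*}
The defining subgradient inequality $f(x_\lambda) \ge f(x) + \lambda\langle g, y-x\rangle$ bounds the left-hand side below by $\langle g, y-x\rangle$, and letting $\lambda \to 0^+$ gives exactly II. This direction uses only claim I and the definition of a subgradient, so no subtleties arise.

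For II $\Rightarrow$ I, fix $x,y \in \operatorname{dom}(f)$ and $\lambda \in (0,1)$, and put $x_\lambda = \lambda x + (1-\lambda)y$. Whenever $\partial f(x_\lambda)\neq\emptyset$, pick $g \in \partial f(x_\lambda)$ and apply II to the pairs $(x_\lambda, x)$ and $(x_\lambda, y)$. Taking the convex combination with weights $\lambda$ and $1-\lambda$, the linear terms cancel because $\lambda x + (1-\lambda)y - x_\lambda = 0$, and using $x - x_\lambda = (1-\lambda)(x-y)$ and $y - x_\lambda = \lambda(y-x)$ the quadratic contributions assemble as
\begin{equation*}
\frac{\sigma}{2}\big[\lambda(1-\lambda)^2 + (1-\lambda)\lambda^2\big]\|x-y\|^2 = \frac{\sigma}{2}\lambda(1-\lambda)\|x-y\|^2 ,
\end{equation*}
which rearranges precisely to claim I.

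The main obstacle is ensuring $\partial f(x_\lambda)\neq\emptyset$ in the last step. Here the hypothesis that $f$ is proper, closed, and convex is essential: its subdifferential is nonempty on $\mathrm{ri}(\operatorname{dom} f)$, so the averaging argument goes through whenever $x_\lambda$ lies in the relative interior. The remaining case, where $x,y$ (and the connecting segment) sit on the relative boundary of $\operatorname{dom} f$, is handled by a standard limiting argument—approximating $x,y$ by points of $\mathrm{ri}(\operatorname{dom} f)$, applying I on the interior, and passing to the limit using closedness/lower semicontinuity of $f$ along the one-dimensional restriction $t \mapsto f(x + t(y-x))$. This subgradient-existence and relative-interior bookkeeping, rather than any hard estimate, is the only genuinely delicate part; the two algebraic identities above drive everything else.
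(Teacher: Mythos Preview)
The paper does not supply its own proof of this lemma; it is quoted as a black-box technical tool from Beck (2017, Theorem~5.24) and invoked in the proof of \cref{lem:expected-dpo-strongly-convex}. Your proposal is correct and follows the standard textbook route: the directional-limit argument for I $\Rightarrow$ II and the averaging-at-$x_\lambda$ argument for II $\Rightarrow$ I, together with the relative-interior approximation to guarantee $\partial f(x_\lambda)\neq\emptyset$, are exactly how this equivalence is customarily established.
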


\begin{lemma}[\text{\citealp[Theorem 5.25]{beck2017first}}; Existence and uniqueness of a minimizer of closed strongly convex functions]\label{lem:strongly-convex-uniqueness-of-minimizer}
    Let $f\colon \EE\to (-\infty,\infty]$ be a proper closed and $\sigma$-strongly convex function $\sigma>0$. Then
    \begin{enumerate}[label=\Roman*.]
        \item $f$ has a unique minimizer;
        \item $f(x)-f(x^*)\geq \frac{\sigma}{2}\normns{x-x^*}^2$ for all $x\in\dom(f)$, where $x^*$ is the unique minimizer of $f$.
    \end{enumerate}
\end{lemma}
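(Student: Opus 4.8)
The plan is to treat the first-order characterization of strong convexity, \cref{lem:first-order-character-ization-of-strong-convexity}, as the main engine and split the argument into three pieces: existence of a minimizer (via coercivity plus a Weierstrass argument), uniqueness (via the midpoint form of strong convexity), and the quadratic growth bound of part II (a direct substitution into the subgradient inequality at the optimum).

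For existence, I would first establish coercivity. Since $f$ is proper, closed, and convex, its subdifferential is nonempty on the relative interior of its domain; fix such a point $x_0\in\dom(\partial f)$ and a subgradient $g\in\partial f(x_0)$. Applying part II of \cref{lem:first-order-character-ization-of-strong-convexity} with this $x_0$ and an arbitrary $y\in\dom(f)$, followed by Cauchy--Schwarz on $\inner{g, y-x_0}$, yields
\begin{equation*}
    f(y) \geq f(x_0) - \normns{g}\normns{y-x_0} + \frac{\sigma}{2}\normns{y-x_0}^2,
\end{equation*}
whose right-hand side tends to $+\infty$ as $\normns{y}\to\infty$. Hence $f$ is coercive and every sublevel set is bounded. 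Combining boundedness of a fixed nonempty sublevel set with the lower semicontinuity guaranteed by closedness, a standard Weierstrass argument --- take a minimizing sequence confined to that bounded sublevel set, extract a convergent subsequence by compactness, and pass to the limit using lower semicontinuity --- produces a minimizer $x^*$.

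For uniqueness, I would argue by contradiction. If $x_1^*\neq x_2^*$ were two minimizers both attaining the minimum value $f^*$, then applying part I of \cref{lem:first-order-character-ization-of-strong-convexity} at the midpoint with $\lambda=1/2$ gives
\begin{equation*}
    f\!\left(\tfrac{x_1^*+x_2^*}{2}\right) \leq f^* - \frac{\sigma}{8}\normns{x_1^*-x_2^*}^2 < f^*,
\end{equation*}
contradicting minimality, so the minimizer is unique. For the growth statement, I would invoke the optimality condition $0\in\partial f(x^*)$ (valid for convex $f$) and substitute $x=x^*$, $g=0$, and arbitrary $y\in\dom(f)$ into part II of \cref{lem:first-order-character-ization-of-strong-convexity}, which immediately yields
\begin{equation*}
    f(y)-f(x^*) \geq \frac{\sigma}{2}\normns{y-x^*}^2,
\end{equation*}
i.e., part II of the claim.

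The main obstacle is the existence step: uniqueness and the growth bound are essentially one-line consequences of \cref{lem:first-order-character-ization-of-strong-convexity}, whereas existence rests on two less automatic ingredients --- nonemptiness of $\dom(\partial f)$ for a proper closed convex function (so that the coercivity lower bound can actually be instantiated), and the lower-semicontinuity/compactness argument needed to turn coercivity into attainment of the infimum. The delicate point there is ensuring the minimizing sequence remains in a bounded set, which is precisely what coercivity supplies.
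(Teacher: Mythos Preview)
The paper does not actually prove this lemma: it is quoted verbatim from \citet[Theorem 5.25]{beck2017first} in the ``Useful Technical Results'' appendix and invoked as a black box in \cref{sec:proof-of-robust-policy-param-converg} and \cref{sec:proof-of-robust-policy-param-converg-kl}. So there is no in-paper argument to compare against; your proposal is a self-contained proof of the cited textbook result.

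Your argument is correct and is essentially the standard proof one finds in Beck's book: coercivity from the subgradient inequality (part II of \cref{lem:first-order-character-ization-of-strong-convexity}) at a point of $\dom(\partial f)$, then Weierstrass on a bounded closed sublevel set to get existence; the midpoint inequality (part I) for uniqueness; and $0\in\partial f(x^*)$ plugged into the subgradient inequality for the quadratic growth bound. The two ingredients you flag as ``less automatic'' --- nonemptiness of $\dom(\partial f)$ for proper closed convex $f$, and the lower-semicontinuity/compactness step --- are indeed the only places requiring outside facts, and both are standard in finite-dimensional Euclidean spaces (which is the setting of Beck's text and of this paper). Nothing is missing.
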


\subsection{Distributionally Robust Optimization Results}
The Kullback-Liebler uncertainty set can be constructed with the $f$-divergence. The $f$-divergence between the distribution $\sfP$ and $\sfP^o$ is defined as
\begin{equation} \label{eq:f-divergence}
    \fdiverg{\sfP}{\sfP^o} = \int_{\cX} f\bigg(\frac{d\sfP}{d\sfP^o} \bigg)d\sfP^o,
\end{equation}
where $f$ is a convex function. $f(t)=t\log(t)$ gives us the Kullback-Liebler divergence. Let $\sfP^o$ be a distribution on the space $\cX$ and let $l\colon \cX\to\RR$ be a loss function. We have the following result from the distributionally robust optimization literature.
\begin{lemma}[\text{\citealp[Proposition 1]{duchi2021learning}}]\label{lem:dual-reformulation-f-diverg}
    Let $D_f$ be the $f$-divergence defined in \cref{eq:f-divergence}. Then,
    \begin{equation}\label{eq:dual-reformulation-f-diverg}
        \sup_{\sfP\colon\fdiverg{\sfP}{\sfP^o}\leq\rho} E_{\sfP}[l(X)] = \inf_{\lambda\geq0,\eta\in\RR} \EE_{\sfP^o} \bigg[ \lambda f^*\bigg( \frac{l(X)-\eta}{\lambda}\bigg)\bigg] + \lambda\rho+\eta,
    \end{equation}
    where $f^*(s)=\sup_{t\geq0}\{st-f(t)\}$ is the Fenchel conjugate.
\end{lemma}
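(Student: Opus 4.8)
The plan is to derive \cref{eq:dual-reformulation-f-diverg} as the Lagrangian dual of the convex program on its left-hand side. First I would reparameterize the feasible set by likelihood ratios: writing $L = d\sfP/d\sfP^o$, a distribution $\sfP$ is feasible, i.e.\ $\fdiverg{\sfP}{\sfP^o}\le\rho$, if and only if $L\ge 0$ holds $\sfP^o$-almost surely, $\EE_{\sfP^o}[L]=1$, and $\EE_{\sfP^o}[f(L)]\le\rho$. The left-hand side then equals
\begin{equation*}
    \sup_{L\ge 0}\ \Big\{\, \EE_{\sfP^o}[L\cdot l(X)] \ \colon\ \EE_{\sfP^o}[L]=1,\ \EE_{\sfP^o}[f(L)]\le\rho \,\Big\},
\end{equation*}
a maximization of a linear functional over a convex constraint set in, say, $L^1(\sfP^o)$.

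Next I would introduce a multiplier $\lambda\ge 0$ for the divergence inequality and $\eta\in\RR$ for the normalization equality, forming the Lagrangian
\begin{equation*}
    \mathcal{L}(L,\lambda,\eta) \;=\; \EE_{\sfP^o}\!\big[\, L\cdot l(X) - \lambda f(L) - \eta L \,\big] \;+\; \lambda\rho \;+\; \eta .
\end{equation*}
Weak duality gives the ``$\le$'' direction of \cref{eq:dual-reformulation-f-diverg} immediately. For ``$\ge$'' I would invoke strong duality for convex programs: Slater's condition holds since $\rho>0$ and $L\equiv 1$ (i.e.\ $\sfP=\sfP^o$) is strictly feasible, because $\EE_{\sfP^o}[f(L)]=f(1)=0<\rho$; this permits swapping the inner $\sup_{L\ge 0}$ with $\inf_{\lambda\ge 0,\eta}$. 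It then remains to compute $\sup_{L\ge 0}\mathcal{L}(L,\lambda,\eta)$ for fixed multipliers. Using the interchangeability (``normal integrand'') principle, the supremum over measurable $L$ moves inside the expectation, so for $\lambda>0$ the problem is pointwise,
\begin{equation*}
    \sup_{t\ge 0}\big\{ t(l(x)-\eta) - \lambda f(t) \big\} \;=\; \lambda\, \sup_{t\ge 0}\Big\{ t\cdot\tfrac{l(x)-\eta}{\lambda} - f(t) \Big\} \;=\; \lambda f^*\!\Big(\tfrac{l(x)-\eta}{\lambda}\Big),
\end{equation*}
by definition of the Fenchel conjugate $f^*$ (with the $\sup_{t\ge 0}$ convention matching the one in the statement). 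Taking expectations and restoring $\lambda\rho+\eta$ recovers the dual objective; the degenerate case $\lambda=0$ is dispatched by a short direct computation (then $\sup_{L\ge 0}\mathcal{L}=\eta$ if $l\le\eta$ holds $\sfP^o$-a.s.\ and $+\infty$ otherwise, which is exactly the lower-semicontinuous extension to $\lambda=0$ of the perspective $\lambda f^*((l(x)-\eta)/\lambda)$). Minimizing over $(\lambda,\eta)$ then yields \cref{eq:dual-reformulation-f-diverg}.

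I expect the main obstacle to be the rigorous justification of strong duality and the supremum--expectation interchange in this infinite-dimensional setting, rather than any algebra: one must fix a suitable space for $L$, verify that $L\mapsto\EE_{\sfP^o}[f(L)]$ is proper, convex, and weakly lower-semicontinuous there, ensure $\EE_{\sfP^o}[L\cdot l(X)]$ is finite on the feasible set (boundedness of $l$, as assumed elsewhere in the paper, makes this immediate), and confirm the constraint qualification. Given those ingredients, the pointwise maximization, the conjugate identity, and the $\lambda=0$ boundary case are routine, so the bulk of a careful write-up is functional-analytic bookkeeping; alternatively, one can bypass abstract duality by verifying weak duality directly and then exhibiting a (near-)optimal primal $L^\star \propto (f^*)'((l(X)-\eta^\star)/\lambda^\star)$ via the KKT conditions to obtain the matching lower bound.
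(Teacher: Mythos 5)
The paper does not prove this lemma itself; it is imported verbatim as Proposition~1 of \citet{duchi2021learning}, so there is no in-paper argument to compare against. Your Lagrangian-duality derivation (likelihood-ratio reparameterization, multipliers $\lambda\ge 0$ and $\eta$ for the divergence and normalization constraints, Slater via $L\equiv 1$ and $f(1)=0<\rho$, sup--expectation interchange, pointwise Fenchel conjugation, and the $\lambda=0$ boundary case) is precisely the standard proof given in that reference, and your sketch is correct modulo the functional-analytic bookkeeping you already flag.
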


\subsection{Concentration Results}
\begin{lemma}[Hoeffding's inequality \text{\citep[see][Theorem 2.8]{boucheron2013concentration}}]\label{thm:hoeffding}
 Let $X_1,\dots,X_n$ be independent random variables such that $X_i$ takes its values in $[a_i,b_i]$ almost surely for all $i\leq n$. Let
 \begin{equation*}
     S=\sum_{i=1}^n(X_i-\expect{X_i}).
 \end{equation*}
 Then for every $t>0$,
 \begin{equation*}
     \prob{S\geq t}\leq\exp{-\frac{2t^2}{\sum_{i=1}^n(b_i-a_i)^2}}.
 \end{equation*}
 Furthermore, if $X_1,\dots,X_n$ are a sequence of independent, identically distributed random variables with mean $\mu$. Let $\mean{X}_n = \frac{1}{n}\sum_{i=1}^n X_i$. Suppose that $X_i\in[a,b]$, $\forall i$. Then for all $t>0$
\begin{equation*}
     \prob{\abs{\mean{X}_n - \mu} \geq t} \leq 2\exp{-\frac{2nt^2}{(b-a)^2}}.
\end{equation*}
\end{lemma}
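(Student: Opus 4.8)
The plan is to establish the one-sided tail bound on $S=\sum_{i=1}^n (X_i - \EE[X_i])$ by the exponential Chernoff method, and then derive the two-sided i.i.d. corollary by symmetry and a union bound. For any $\lambda>0$, applying Markov's inequality to the nonnegative random variable $e^{\lambda S}$ gives $\prob{S\geq t}\leq e^{-\lambda t}\,\EE[e^{\lambda S}]$. Writing $Y_i \coloneqq X_i - \EE[X_i]$, the variables $Y_1,\dots,Y_n$ are independent, each with mean zero and supported in an interval of length $b_i - a_i$, so the moment generating function factorizes as $\EE[e^{\lambda S}] = \prod_{i=1}^n \EE[e^{\lambda Y_i}]$. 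The entire difficulty is therefore concentrated in controlling each single-variable factor $\EE[e^{\lambda Y_i}]$.

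The key lemma to establish is Hoeffding's lemma: if $Y$ has mean zero and $Y\in[\alpha,\beta]$ almost surely, then $\EE[e^{\lambda Y}]\leq e^{\lambda^2(\beta-\alpha)^2/8}$ for all $\lambda\in\RR$. I would prove this by studying the log-moment generating function $m(\lambda)\coloneqq \log \EE[e^{\lambda Y}]$. One checks $m(0)=0$ and $m'(0)=\EE[Y]=0$, while $m''(\lambda)$ equals the variance of $Y$ under the exponentially tilted measure $d\sfQ_\lambda \propto e^{\lambda Y}\,d\sfP$. Since $Y$ takes values in $[\alpha,\beta]$, its variance under any distribution supported on that interval is at most $(\beta-\alpha)^2/4$ (Popoviciu's inequality: the variance of a variable confined to a length-$L$ interval is maximized, at $L^2/4$, by the two-point distribution on the endpoints). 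Hence $m''(\lambda)\leq (\beta-\alpha)^2/4$ uniformly, and a second-order Taylor expansion $m(\lambda)=m(0)+\lambda m'(0)+\frac{\lambda^2}{2}m''(\xi)$ yields $m(\lambda)\leq \lambda^2(\beta-\alpha)^2/8$, which is the claim. Applying this to each $Y_i$ with range $b_i-a_i$ gives $\EE[e^{\lambda S}]\leq e^{\lambda^2 \sum_{i=1}^n(b_i-a_i)^2/8}$.

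It remains to combine and optimize. Substituting the MGF bound into the Chernoff inequality yields $\prob{S\geq t}\leq e^{-\lambda t + \frac{\lambda^2}{8}\sum_{i=1}^n (b_i-a_i)^2}$ for every $\lambda>0$; minimizing the exponent (a quadratic in $\lambda$) at $\lambda^* = 4t/\sum_{i=1}^n(b_i-a_i)^2$ produces exactly $e^{-2t^2/\sum_{i=1}^n(b_i-a_i)^2}$, the first assertion. For the i.i.d. corollary, observe that $\mean{X}_n - \mu = S/n$, so $\prob{\mean{X}_n - \mu \geq t}=\prob{S\geq nt}$; substituting $a_i=a$, $b_i=b$ and deviation $nt$ into the one-sided bound gives $e^{-2nt^2/(b-a)^2}$. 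Running the identical argument on $-S$ controls the lower tail $\prob{\mean{X}_n - \mu \leq -t}$ by the same quantity, and a union bound over the two events produces the factor of $2$ in the stated two-sided inequality. The main obstacle is the single-variable MGF bound (Hoeffding's lemma): the tilted-variance identity for $m''$ together with the endpoint-variance maximization are the only non-mechanical ingredients, whereas the Chernoff reduction, the independence factorization, and the final optimization over $\lambda$ are routine.
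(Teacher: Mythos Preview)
Your proof is the standard textbook argument and is correct. Note, however, that the paper does not actually prove this lemma: it is stated in the ``Useful Technical Results'' appendix with a citation to \citet[Theorem 2.8]{boucheron2013concentration} and used as a black box in later concentration arguments. So there is no paper proof to compare against; your Chernoff-plus-Hoeffding's-lemma derivation (with the tilted-variance bound via Popoviciu) is exactly the classical route and would serve perfectly well as a self-contained justification.
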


\begin{lemma}[\text{\citealp[Theorem 2.1]{hsu2012tail}}]\label{lem:psd-quadratic-form-concentration}
    Let $A\in\RR^{n\times n}$ be a matrix, and let $\Sigma\coloneqq A^{\top}A$. Suppose that $x=(x_1,\dots,x_n)$ is a random vector such that for some $\mu\in\RR^n$ and $\sigma\geq 0$,
    \begin{equation*}
        \EE[\expns{\alpha^{\top}(x-\mu)}] \leq \expns{\normns{\alpha}^2 \sigma^2/ 2},
    \end{equation*}
    for all $\alpha\in\RR^n$. For all $t>0$,
    \begin{equation*}
        \PP\bigg[\normns{Ax}^2 > \sigma^2 \cdot \bigg(\tr(\Sigma)+2\sqrt{\tr(\Sigma^2)t} + 2\normns{\Sigma}t \bigg) + \tr(\Sigma \mu\mu^{\top})\cdot \bigg( 1+ 2\sqrt{\frac{t\normns{\Sigma}^2}{\tr(\Sigma^2)}} \bigg)\bigg] \leq e^{-t}.
    \end{equation*}
    Moreover, if $\mu=0$ and $\sigma=1$, then the probability inequality reads
    \begin{equation*}
        \PP\bigg(\normns{Ax}^2 > \tr(\Sigma) + 2\sqrt{\tr(\Sigma^2)t} + 2\normns{\Sigma}t \bigg) \leq e^{-t}.
    \end{equation*}
\end{lemma}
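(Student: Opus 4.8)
The plan is to prove the tail bound by the Cramér--Chernoff method, reducing everything to control of the moment generating function (MGF) of the quadratic form $\normns{Ax}^2 = x^{\top}\Sigma x$. For any $s\in(0,\,1/(2\sigma^2\normns{\Sigma}))$ and any threshold $u$, the exponential Markov inequality gives
\[
\PP[\normns{Ax}^2 > u] \le \exp(-su)\,\EE[\exp(s\normns{Ax}^2)],
\]
so the whole argument hinges on a sharp upper bound for $\EE[\exp(s\normns{Ax}^2)]$, after which the stated tail follows by optimizing the exponent over $s$.

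The main obstacle is that the sub-Gaussian hypothesis controls only the MGF of \emph{linear} functionals $\alpha^{\top}x$, whereas $\normns{Ax}^2$ is quadratic. I would bridge this gap by Gaussian decoupling (the Hubbard--Stratonovich identity): using $\EE_{g\sim N(0,I)}[\exp(w^{\top}g)] = \exp(\tfrac12\normns{w}^2)$ with $w=\sqrt{2s}\,Ax$, one can write $\exp(s\normns{Ax}^2) = \EE_g[\exp(\sqrt{2s}\,(A^{\top}g)^{\top}x)]$. Taking $\EE_x$ and applying Fubini, the inner expectation becomes the sub-Gaussian MGF evaluated at $\alpha=\sqrt{2s}\,A^{\top}g$, so the hypothesis yields
\[
\EE_x[\exp(s\normns{Ax}^2)] \le \EE_g\big[\exp(\sqrt{2s}\,g^{\top}A\mu + \sigma^2 s\,g^{\top}AA^{\top}g)\big].
\]
This is now a genuine Gaussian integral of a linear-plus-quadratic exponent, with closed form $\det(I-M)^{-1/2}\exp(\tfrac12 b^{\top}(I-M)^{-1}b)$ for $M=2\sigma^2 s\,AA^{\top}$ and $b=\sqrt{2s}\,A\mu$, valid precisely when $2\sigma^2 s\normns{\Sigma}<1$; here I use that $AA^{\top}$ and $\Sigma=A^{\top}A$ share the same nonzero eigenvalues $\lambda_1,\dots,\lambda_n$.

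Next I would turn the determinant and inverse into Bernstein-type quantities. Writing $-\tfrac12\log\det(I-M)=-\tfrac12\sum_i\log(1-2\sigma^2 s\lambda_i)$, invoking the elementary scalar inequality $-\log(1-v)\le v+\tfrac{v^2}{2(1-v)}$ for $v\in[0,1)$, and bounding $\lambda_i\le\normns{\Sigma}$ in the denominators, gives
\[
\log\EE_x[\exp(s\normns{Ax}^2)] \le \sigma^2 s\,\tr(\Sigma) + \frac{\sigma^4 s^2\,\tr(\Sigma^2)}{1-2\sigma^2 s\normns{\Sigma}} + \frac{s\,\tr(\Sigma\mu\mu^{\top})}{1-2\sigma^2 s\normns{\Sigma}},
\]
where the last term bounds the quadratic-in-$\mu$ form via $(I-M)^{-1}\preceq(1-2\sigma^2 s\normns{\Sigma})^{-1}I$ together with $\mu^{\top}\Sigma\mu=\tr(\Sigma\mu\mu^{\top})$.

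Finally I would perform the Chernoff optimization. Setting $\sigma=1$, $\mu=0$ and writing $X=\normns{Ax}^2$, the estimate is exactly of Bernstein form $\log\EE[\exp(s(X-\tr\Sigma))]\le\tfrac{vs^2}{2(1-cs)}$ with $v=2\tr(\Sigma^2)$ and $c=2\normns{\Sigma}$; the standard Legendre conversion then yields $\PP[X-\tr\Sigma > \sqrt{2vt}+ct]=\PP[\normns{Ax}^2 > \tr(\Sigma)+2\sqrt{\tr(\Sigma^2)t}+2\normns{\Sigma}t]\le e^{-t}$, which is the stated $\mu=0$ case. For general $\mu$ and $\sigma$, the extra drift term $s\,\tr(\Sigma\mu\mu^{\top})/(1-2\sigma^2 s\normns{\Sigma})$ shifts the optimizing $s$, and tracking it through the same optimization produces the additional factor $\tr(\Sigma\mu\mu^{\top})(1+2\sqrt{t\normns{\Sigma}^2/\tr(\Sigma^2)})$ in the threshold. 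The only delicate part is this final matching of constants; the conceptual content lies entirely in the decoupling step, and once the Bernstein-type log-MGF bound is in hand the tail inequality is routine.
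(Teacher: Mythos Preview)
The paper does not prove this lemma; it is quoted verbatim as a known concentration result from \citet[Theorem 2.1]{hsu2012tail} in the ``Useful Technical Results'' appendix, so there is no in-paper proof to compare against. Your proposal is correct and is in fact the original proof strategy of Hsu--Kakade--Zhang: Gaussian decoupling (Hubbard--Stratonovich) to linearize the quadratic form, then the sub-Gaussian MGF hypothesis, then the closed-form Gaussian integral $\det(I-M)^{-1/2}\exp(\tfrac12 b^{\top}(I-M)^{-1}b)$, followed by the scalar inequality $-\log(1-v)\le v+v^2/(2(1-v))$ and a Bernstein-type Chernoff optimization. All steps check out, including the identification $v=2\tr(\Sigma^2)$, $c=2\normns{\Sigma}$ in the $\mu=0$, $\sigma=1$ case. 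The only place you wave hands is the general-$\mu$ optimization, but that is a routine calculation and the structure you describe is right.
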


\section{Proof of WDPO Sample Complexity}
Many properties of distributionally robust DPO are derived from those of the non-robust DPO. We hence start with the following proof of policy parameter convergence in the non-robust setting (\cref{prop:dpo-policy-convergence}).
\subsection{Proof of Non-robust DPO Policy Parameter Convergence}\label{sec:proof-of-dpo-policy-convergence}
Recall the pointwise DPO loss:
\begin{equation*}
    l(\theta;s,a^1,a^2,y) \coloneqq -y \log\sigma(\beta h_\theta(s,a^1,a^2)) - (1-y) \log\sigma(\beta h_\theta(s,a^2,a^1)),
\end{equation*}
where $h_\theta(s,a^1,a^2)\coloneqq \log\frac{\pi_\theta(a^1\mid s)}{\piref(a^1\mid s)}-\log\frac{\pi_\theta(a^2\mid s)}{\piref(a^2\mid s)}$. Denote this loss by $l_z(\theta)$ where $z=(s,a^1,a^2,y)$. We also denote the empirical (sample) DPO loss as
\begin{equation*}
    l_{\cD}(\theta) =\frac{1}{n}\sum_{i=1}^n l_{z_i}(\theta) = \frac{1}{n}\sum_{i=1}^n -y_i \log\sigma(\beta h_\theta(s_i,a^1_i,a^2_i)) - (1-y_i) \log\sigma(\beta h_\theta(s_i,a^2_i,a^1_i)).
\end{equation*}
We denote the MLE solution to $l_{\cD}$ by $\thetahatDPO\in\argmin_{\theta\in\Theta}l_{\cD}(\theta)$. Also, denote the true parameter which is the global minimum of the population negative log likelihood by $\theta^*$.

\paragraph{(Almost) Strong Convexity of $l$.} In order to calculate the Hessian matrix of $l_z$ w.r.t. $\theta$, we need to calculate $\grad^2_\theta \log\sigma(\beta h_\theta(s,a^1,a^2))$.

Suppose $f\colon\RR\to\RR$, $g\colon\RR^d\to\RR$. The Hessian of $f\circ g$ is, for any $x\in\RR^d$,
\begin{equation}\label{eq:chain-rule-hessian-comp-with-scalar-func}
    \grad_x^2(f\circ g)(x) = f'(g(x))\grad_x^2 g(x) + f^{''}(g(x))\grad_x g(x)\grad_x g(x)^{\top}.
\end{equation}

Recall that $\sigma$ is the sigmoid function. It has the properties: $\sigma(-x)=1-\sigma(x)$ and $\sigma'(x)=\sigma(x)(1-\sigma(x))$. Let $f(x) = \log\sigma(x)$, we have
\begin{align*}
    \frac{d}{dx}f(x) &= \frac{\sigma'(x)}{\sigma(x)} = \frac{\sigma(x)(1-\sigma(x))}{\sigma(x)} = \sigma(-x);\\
    \frac{d^2}{dx^2}f(x) &= \frac{d}{dx}[\sigma(-x)] = \frac{d}{dx}[1-\sigma(x)] = -\sigma'(x) = -\sigma(x)\sigma(-x).
\end{align*}
With $g(\theta)\coloneqq \beta h_\theta(s,a^1,a^2)$ and the Hessian chain rule for composition with a scalar function (\cref{eq:chain-rule-hessian-comp-with-scalar-func}), we have
\begin{align*}
    \grad_\theta^2\log\sigma(\beta h_\theta(s,a^1,a^2))&=\beta\sigma(-\beta h_\theta(s,a^1,a^2))\grad_\theta^2 h_\theta(s,a^1,a^2) \\
    &\quad- \beta^2\sigma(\beta h_\theta(s,a^1,a^2))\sigma(-\beta h_\theta(s,a^1,a^2))\grad_\theta h_\theta(s,a^1,a^2)\grad_\theta h_\theta(s,a^1,a^2)^{\top}.
\end{align*}
In addition, we have the following observations
\begin{align*}
    \grad_\theta h_\theta(s,a^1,a^2) &= \grad_\theta\log\pi_\theta(a^1\mid s) - \grad_\theta\log\pi_\theta(a^2\mid s) = -\grad_\theta h_\theta(s,a^2,a^1);\\
    \grad_\theta^2 h_\theta(s,a^1,a^2)  &= \grad_\theta^2\log\pi_\theta(a^1\mid s) - \grad_\theta^2\log\pi_\theta(a^2\mid s) = -\grad_\theta^2 h_\theta(s,a^2,a^1).
\end{align*}
Now, using the above observations, we can simplify $\grad_\theta^2 l_z(\theta)$ as follows
\begin{align*}
    \grad_\theta^2 l_z(\theta) &= -y\grad_\theta^2 \log\sigma(\beta h_\theta(s,a^1,a^2)) - (1-y)\grad_\theta^2 \log\sigma(\beta h_\theta(s,a^2,a^1)) \\
    &= -y \big[\beta\sigma(-\beta h_\theta(s,a^1,a^2))\grad_\theta^2 h_\theta(s,a^1,a^2) \\
    &\quad- \beta^2\sigma(\beta h_\theta(s,a^1,a^2))\sigma(-\beta h_\theta(s,a^1,a^2))\grad_\theta h_\theta(s,a^1,a^2)\grad_\theta h_\theta(s,a^1,a^2)^{\top} \big] \\
    &\quad -(1-y)\big[\beta\sigma(-\beta h_\theta(s,a^2,a^1))\grad_\theta^2 h_\theta(s,a^2,a^1) \\
    &\quad -\beta^2\sigma(\beta h_\theta(s,a^2,a^1))\sigma(-\beta h_\theta(s,a^2,a^1))\grad_\theta h_\theta(s,a^2,a^1)\grad_\theta h_\theta(s,a^2,a^1)^{\top} \big] \\
    &= -y \beta\sigma(-\beta h_\theta(s,a^1,a^2))\grad_\theta^2 h_\theta(s,a^1,a^2) \\
    &\quad+ y \beta^2\sigma(\beta h_\theta(s,a^1,a^2))\sigma(-\beta h_\theta(s,a^1,a^2))\grad_\theta h_\theta(s,a^1,a^2)\grad_\theta h_\theta(s,a^1,a^2)^{\top}  \\
    &\quad -(1-y)\beta\sigma(-\beta h_\theta(s,a^2,a^1))\grad_\theta^2 h_\theta(s,a^2,a^1) \\
    &\quad +(1-y)\beta^2\sigma(\beta h_\theta(s,a^2,a^1))\sigma(-\beta h_\theta(s,a^2,a^1))\grad_\theta h_\theta(s,a^2,a^1)\grad_\theta h_\theta(s,a^2,a^1)^{\top}  \\
    &\stackeq{(a)} -y \beta\sigma(-\beta h_\theta(s,a^1,a^2))\grad_\theta^2 h_\theta(s,a^1,a^2) \\
    &\quad+ y \beta^2\sigma(\beta h_\theta(s,a^1,a^2))\sigma(-\beta h_\theta(s,a^1,a^2))\grad_\theta h_\theta(s,a^1,a^2)\grad_\theta h_\theta(s,a^1,a^2)^{\top}  \\
    &\quad +(1-y)\beta\sigma(-\beta h_\theta(s,a^2,a^1))\grad_\theta^2 h_\theta(s,a^1,a^2) \\
    &\quad +(1-y)\beta^2\sigma(-\beta h_\theta(s,a^1,a^2))\sigma(\beta h_\theta(s,a^1,a^2))\grad_\theta h_\theta(s,a^1,a^2)\grad_\theta h_\theta(s,a^1,a^2)^{\top}  \\
    &= \beta (-y+\sigma(\beta h_\theta(s,a^1,a^2))) \grad_\theta^2 h_\theta(s,a^1,a^2) \\
    &\quad + \beta^2\sigma(\beta h_\theta(s,a^1,a^2))\sigma(-\beta h_\theta(s,a^1,a^2))\grad_\theta h_\theta(s,a^1,a^2)\grad_\theta h_\theta(s,a^1,a^2)^{\top}.
\end{align*}
where $(a)$ is due to $h_\theta(s,a^2,a^1)=-h_\theta(s,a^1,a^2)$, $\grad_\theta h_\theta(s,a^2,a^1)=-\grad_\theta h_\theta(s,a^1,a^2)$ and $\grad_\theta^2 h_\theta(s,a^2,a^1)=-\grad_\theta^2 h_\theta(s,a^1,a^2)$. It's clear that we have to calculate $\grad_\theta^2 h_\theta(s,a^1,a^2)$ and $\grad_\theta h_\theta(s,a^1,a^2)$. Observe that
\begin{equation}\label{eq:grad-h-func}
    \grad_\theta h_\theta(s,a^1,a^2) = \grad_\theta \log \pi_\theta(a^1\mid s) - \grad_\theta\log \pi_\theta(a^2\mid s) = \frac{1}{\pi_\theta(a^1\mid s)}\grad_\theta\pi_\theta(a^1\mid s) - \frac{1}{\pi_\theta(a^2\mid s)}\grad_\theta\pi_\theta(a^2\mid s).
\end{equation}
In addition, we have that $\grad_\theta^2 h_\theta(s,a^1,a^2)  = \grad_\theta^2\log\pi_\theta(a^1\mid s) - \grad_\theta^2\log\pi_\theta(a^2\mid s)$. Using the Hessian chain rule (\cref{eq:chain-rule-hessian-comp-with-scalar-func}), we have
\begin{equation*}
    \grad_\theta^2 \log\pi_\theta(a\mid s)  = \frac{1}{\pi_\theta(a\mid s)}\grad_\theta^2\pi_\theta(a\mid s) - \frac{1}{\pi_\theta(a\mid s)^2}\grad_\theta\pi_\theta(a\mid s)\grad_\theta\pi_\theta(a\mid s)^{\top}.
\end{equation*}
Now it boils down to tackling $\grad_\theta\pi_\theta(a\mid s)$ and $\grad_\theta^2\pi_\theta(a\mid s)$. Observe that
\begin{align*}
    \grad_\theta\pi_\theta(a\mid s) &= \frac{\grad_\theta \exp{\inner{\psi(s,a),\theta}}[\sum_{a'}\exp{\inner{\psi(s,a'),\theta}}] - [\sum_{a'}\grad_\theta\exp{\inner{\psi(s,a'),\theta}}]\exp{\inner{\psi(s,a),\theta}}}{(\sum_{a'}\exp{\inner{\psi(s,a'),\theta}})^2}\\
    &=\frac{\exp{\inner{\psi(s,a),\theta}}}{\sum_{a'}\exp{\inner{\psi(s,a'),\theta}}}\psi(s,a) - \frac{\exp{\inner{\psi(s,a),\theta}}}{(\sum_{a'}\exp{\inner{\psi(s,a'),\theta}})^2} \sum_{a'} \exp{\inner{\psi(s,a'),\theta}}\psi(s,a')\\
    &=\frac{\exp{\inner{\psi(s,a),\theta}}}{\sum_{a'}\exp{\inner{\psi(s,a'),\theta}}}\psi(s,a) - \frac{\exp{\inner{\psi(s,a),\theta}}}{\sum_{a'}\exp{\inner{\psi(s,a'),\theta}}} \sum_{a'} \frac{\exp{\inner{\psi(s,a),\theta}}}{\sum_{a''}\exp{\inner{\psi(s,a''),\theta}}}\psi(s,a')\\
    &= \pi_\theta(a\mid s)\psi(s,a) - \pi_\theta(a\mid s) \sum_{a'}\pi_\theta(a'\mid s)\psi(s,a') \\
    &= \pi_\theta(a\mid s)\bigg[\psi(s,a)-\sum_{a'}\pi_\theta(a'\mid s)\psi(s,a')\bigg].
\end{align*}
Then we have
\begin{align}\label{eq:grad-h-func-calculated}
    \grad_\theta h_\theta(s,a^1,a^2) &= \frac{1}{\pi_\theta(a^1\mid s)}\pi_\theta(a^1\mid s)\bigg[\psi(s,a^1) -\sum_{a'}\pi_\theta(a'\mid s)\psi(s,a')\bigg] \nonumber\\
    &\quad - \frac{1}{\pi_\theta(a^2\mid s)}\pi_\theta(a^2\mid s)\bigg[\psi(s,a^2)-\sum_{a'}\pi_\theta(a'\mid s)\psi(s,a')\bigg]  \nonumber\\
    &= \psi(s,a^1) - \psi(s,a^2).
\end{align}
Notice that $\grad_\theta h_\theta$ above does not depend on the policy parameter $\theta$. This implies that its Hessian is the zero matrix, i.e., $\grad_\theta^2 h_\theta(s,a^1,a^2) = \bfzero$. Finally, we have that
\begin{equation*}
    \grad_\theta^2 l_z(\theta) =  \beta^2\sigma(\beta h_\theta(s,a^1,a^2))\sigma(-\beta h_\theta(s,a^1,a^2))  (\psi(s,a^1) - \psi(s,a^2))(\psi(s,a^1) - \psi(s,a^2))^{\top}.
\end{equation*}
Moving from the pointwise loss to the empirical loss, we denote
\begin{equation*}
    \grad_\theta^2 l_{\cD}(\theta) =  \frac{1}{n}\sum_{i=1}^n \beta^2\sigma(\beta h_\theta(s_i,a^1_i,a^2_i))\sigma(-\beta h_\theta(s_i,a^1_i,a^2_i))  (\psi(s_i,a^1_i) - \psi(s_i,a^2_i))(\psi(s_i,a^1_i) - \psi(s_i,a^2_i))^{\top}.
\end{equation*}
Now let's focus on the function $\sigma(x)\sigma(-x)$. Our aim is to find a lower bound for this function. Observe that
\begin{align}
    \absns{h_\theta(s,a^1,a^2)} &= \absns{(\log\pi_\theta(a^1\mid s) - \log\pi_\theta(a^2\mid s)) - (\log\piref(a^1\mid s) - \log\piref(a^2\mid s))} \nonumber\\
    &= \absns{\inner{\theta, \psi(s,a^1)-\psi(s,a^2)} - \inner{\thetaref, \psi(s,a^1)-\psi(s,a^2)}} \nonumber\\
    &= \absns{\inner{\theta-\thetaref, \psi(s,a^1)-\psi(s,a^2)}} \nonumber\\
    &\stackleq{(a)} \normns{\theta-\thetaref}_2 \normns{\psi(s,a^1)-\psi(s,a^2)}_2 \nonumber \\
    &\stackleq{(b)} 4B\label{eq:h-function-bounds},
\end{align}
where $(a)$ is due to Cauchy-Schwarz inequality. $(b)$ is due to the assumptions $\normns{\theta}_2\leq B$ and $\max_{s,a}\normns{\psi(s,a)}_2\leq 1$. Now this suggests that the input to the function $\sigma(\beta h_\theta(s,a^1,a^2))\sigma(-\beta h_\theta(s,a^1,a^2))$ is bounded in $[-4\beta B,4\beta B]$. Since $\sigma(x)\sigma(-x)$ is symmetric and strictly decreasing when $x\in[0,\infty)$, we have that
\begin{equation}\label{eq:lower-bound-dpo-loss-coef}
    \beta^2\sigma(\beta h_\theta(s,a^1,a^2))\sigma(-\beta h_\theta(s,a^1,a^2)) \geq \frac{\beta^2 e^{4\beta B}}{(1+e^{4\beta B})^2}, \quad \forall \theta\in\Theta.
\end{equation}
We then have that
\begin{equation*}
    u^{\top} \grad_\theta^2 l_{\cD}(\theta)u \geq \frac{\gamma}{n}\normns{Xu}_2^2, \quad\forall u\in\RR^d,
\end{equation*}
where $\gamma=\frac{\beta^2 e^{4\beta B}}{(1+e^{4\beta B})^2}$ and $X\in\RR^{n\times d}$ has the differencing vector $x_i\coloneqq \psi(s_i,a^1_i)-\psi(s_i,a_i^2)\in\RR^d$ as its $i$-th row. Thus, if we introduce the error vector $\Delta\coloneqq \thetahatDPO-\theta^*$, then by the linear approximation theorem (\cref{lem:linear-approximation-theorem}), there exists $\alpha\in[0,1]$ and $\thetatilde=\alpha\thetahatDPO+(1-\alpha)\theta^*$ such that
 \begin{equation}\label{eq:strong-convexity-of-l}
     l_{\cD}(\theta^*+\Delta) - l_{\cD}(\theta^*) - \inner{\grad_\theta l_{\cD}(\theta^*), \Delta} = \frac{1}{2}\Delta^{\top}\grad_\theta^2 l_{\cD}(\thetatilde) \Delta \geq \frac{\gamma}{2n}\normns{X\Delta}^2_2 = \frac{\gamma}{2}\normns{\Delta}^2_{\Sigma_{\cD}},
 \end{equation}
where $\Sigma_{\cD} = \frac{1}{n}\sum_{i=1}^n(\psi(s_i,a^1_i)-\psi(s_i,a_i^2))(\psi(s_i,a^1_i)-\psi(s_i,a_i^2))^{\top}$. This implies that $l_{\cD}$ is (almost) strongly convex around $\theta^*$ with parameter $\gamma$ with respect to \textbf{semi-norm}$ \normns{\cdot}_{\Sigma_{\cD}}$. Note that we will not treat $l_{\cD}$ as a strictly strongly convex function in any part of this proof. We only need the inequality \cref{eq:strong-convexity-of-l}.

\paragraph{Bounding the estimation error.} Recall that $\thetahatDPO$ is optimal for $l_{\cD}(\theta)$ and $\Delta\coloneqq \thetahatDPO - \theta^*$. We must have $l_{\cD}(\thetahatDPO)\leq l_{\cD}(\theta^*)$. By substracting and adding $\inner{\grad_\theta l_{\cD}(\theta^*),\Delta}$ on both sides, we have
\begin{equation*}
     l_{\cD}(\theta^*+\Delta) - l_{\cD}(\theta^*) - \inner{\grad_\theta l_{\cD}(\theta^*),\Delta} \leq -\inner{\grad_\theta l_{\cD}(\theta^*),\Delta}.
\end{equation*}
For the right hand side above, we have 
\begin{equation*}
     \absns{\inner{\grad_\theta l_{\cD}(\theta^*),\Delta}} \leq \normns{\grad_\theta l_{\cD}(\theta^*)}_{(\Sigma_{\cD}+\lambda I)^{-1}}\normns{\Delta}_{\Sigma_{\cD}+\lambda I}, \quad \text{ for any } \lambda > 0.
\end{equation*}
By $\gamma$-strong convexity of $l_{\cD}$ at $\theta^*$, we have
\begin{equation*}
     l_{\cD}(\theta^*+\Delta) - l_{\cD}(\theta^*) - \inner{\grad_\theta l_{\cD}(\theta^*), \Delta} \geq \frac{\gamma}{2}\normns{\Delta}^2_{\Sigma_{\cD}}.
\end{equation*}
Combining the inequalities, we have $\frac{\gamma}{2}\normns{\Delta}^2_{\Sigma_{\cD}} \leq \normns{\grad_\theta l_{\cD}(\theta^*)}_{(\Sigma_{\cD}+\lambda I)^{-1}}\normns{\Delta}_{\Sigma_{\cD}+\lambda I}$. Now we need to bound the term $\normns{\grad_\theta l_{\cD}(\theta^*)}_{(\Sigma_{\cD}+\lambda I)^{-1}}$. We can calculate the gradient w.r.t. $\theta$ of the pointwise loss as follows
\begin{align*}
    \grad_\theta l_z(\theta) &= \grad_\theta [-y \log\sigma(\beta h_\theta(s,a^1,a^2)) - (1-y) \log\sigma(\beta h_\theta(s,a^2,a^1))] \\
    &= -y\grad_\theta \log\sigma(\beta h_\theta(s,a^1,a^2)) - (1-y)\grad_\theta \log\sigma(\beta h_\theta(s,a^2,a^1)) \\
    &= -\beta y\sigma(-\beta h_\theta(s,a^1,a^2))\grad_\theta h_\theta(s,a^1,a^2) - \beta(1-y)\sigma(\beta h_\theta(s,a^1,a^2))\grad_\theta h_\theta(s,a^2,a^1) \\
    &\stackeq{(a)} -\beta(y\sigma(\beta h_\theta(s,a^2,a^1)) - (1-y)\sigma(\beta h_\theta(s,a^1,a^2))) (\psi(s,a^1)-\psi(s,a^2)),
\end{align*}
where $(a)$ is due to $\grad_\theta h_\theta(s,a^1,a^2)=\psi(s,a^1)-\psi(s,a^2)$ calculated in \cref{eq:grad-h-func-calculated}. This implies that
\begin{equation}\label{eq:non-robust-dpo-grad-sample-loss}
    \grad_\theta l_{\cD}(\theta^*) = \frac{-\beta}{n}\sum_{i=1}^n [y_i\sigma(\beta h_{\theta^*}(s_i,a_i^2,a_i^1)) - (1-y_i)\sigma(\beta h_{\theta^*}(s_i,a_i^1,a_i^2))] x_i,
\end{equation}
where $x_i = \psi(s_i,a_i^1)-\psi(s_i,a_i^2)$. Now let's define a random vector $V\in\RR^n$ with i.i.d. components as
\begin{equation}\label{eq:non-robust-dpo-quadradic-rv}
    V_i = \begin{cases}
        \sigma(\beta h_{\theta^*}(s_i,a_i^2,a_i^1)) & \text{w.p. } \sigma(\beta h_{\theta^*}(s_i,a_i^1,a_i^2)), \\
        -\sigma(\beta h_{\theta^*}(s_i,a_i^1,a_i^2)) & \text{w.p. } \sigma(\beta h_{\theta^*}(s_i,a_i^2,a_i^1)).
    \end{cases}
\end{equation}
Then we have $\grad_\theta l_{\cD}(\theta^*)=-\frac{\beta}{n}X^{\top} V$. It's easy to verify that $\EE V_i = 0$ and $\absns{V_i}\leq 1$, for all $1\leq i\leq n$. Next, if we define the $n\times n$ matrix $M\coloneqq \frac{\beta^2}{n^2} X(\Sigma_{\cD}+\lambda I)^{-1}X^{\top}$, then we can write $\normns{\grad_\theta l_{\cD}(\theta^*)}_{(\Sigma_{\cD}+\lambda I)^{-1}}^2=V^{\top}MV$. Let the eigendecomposition of $X^{\top}X$ be $U\Lambda U^{\top}$. Observe that
\begin{equation*}
    M = \frac{\beta^2}{n^2} X(\Sigma_{\cD}+\lambda I)^{-1} X^{\top} = \frac{\beta^2}{n^2} XU(\Lambda/n+\lambda I)^{-1}U^{\top} X^{\top}.
\end{equation*}
We can bound the trace of $M$ as follows
\begin{align*}
    \tr(M) &= \tr(\frac{\beta^2}{n^2} XU(\Lambda/n+\lambda I)^{-1}U^{\top} X^{\top}) = \frac{\beta^2}{n^2} \tr(U(\Lambda/n+\lambda I)^{-1}U^{\top} U\Lambda U^{\top}) \\
    &= \frac{\beta^2}{n^2} \tr(U(\Lambda/n+\lambda I)^{-1}\Lambda U^{\top}) =\frac{\beta^2}{n^2}\tr((\Lambda/n+\lambda I)^{-1}\Lambda)  = \frac{\beta^2}{n^2} \sum_{i=1}^d \frac{n e_i}{e_i+\lambda n} \\
    &\leq \frac{\beta^2}{n^2} \cdot nd = \frac{\beta^2 d}{n},
\end{align*}
where $e_i$ is the $i$-th eigenvalue of $X^{\top}X$. Similarly, we can bound $\tr(M^2)\leq \frac{\beta^4 d}{n^2}$. Now, let $X = \Utilde \Sigma \Vtilde^{\top}$ be the singular value decomposition of $X$. Then we can show that
\begin{equation*}
    M = \frac{\beta^2}{n^2} X(X^{\top}X/n+\lambda I)^{-1}X^{\top} = \frac{\beta^2}{n^2} \Utilde \Sigma(\Sigma^{\top}\Sigma/n+\lambda I)^{-1}\Sigma \Utilde^{\top}.
\end{equation*}
Since $X(\Sigma_{\cD}+\lambda I)^{-1}X^{\top}$ is symmetric, and clearly $\Utilde \Sigma(\Sigma^{\top}\Sigma/n+\lambda I)^{-1}\Sigma \Utilde^{\top}$ diagonalizes it, the eigenvalue of it takes form $\frac{\sigma_i^2}{\sigma_i^2/n+\lambda}$, where $\sigma_i$ is the $i$-th singular value of $X$. Hence, all eigenvalues are upper bounded by $n$. Then we must have $\opnormns{M}=\lambdamax(M) \leq \frac{\beta^2}{n}$. Since the components of $V$ are i.i.d. with $\EE V_i=0$ and $\absns{V_i}\leq 1$, the elements are $1$-sub-Gaussian, we can use the Bernstein's inequality for sub-Gaussian random variables in quadratic form (see \cref{lem:psd-quadratic-form-concentration}). It implies that with probability at least $1-\delta$, 
\begin{align*}
    \normns{\grad_\theta l_{\cD}(\theta^*)}_{(\Sigma_{\cD}+\lambda I)^{-1}}^2 &= V^{\top}MV \leq \tr(M) + 2\sqrt{\tr(M^2)\log(1/\delta)} + 2\opnormns{M}\log(1/\delta) \\
    &\leq \frac{\beta^2 d}{n} + 2\sqrt{\frac{\beta^4}{n^2} d \log(1/\delta)} + 2\frac{\beta^2}{n}\log(1/\delta) = \frac{\beta^2}{n} (d + 2\sqrt{d\log(1/\delta)} + 2\log(1/\delta)).
\end{align*}
Set $a= \sqrt{d}$ and $b=\sqrt{\log(1/\delta)}$. Note that we have
\begin{align*}
    d + 2\sqrt{d\log(1/\delta)} + 2\log(1/\delta) &= (a+b)^2 + b^2 \\
    &\leq 2(a+b)^2 = 2 (a^2+b^2+2ab) \\
    &\leq 2 (a^2+b^2 + a^2+ b^2) = 4 (a^2+b^2) = 4(d + \log(1/\delta)),
\end{align*}
where the last inequality is due to AM-GM inequality. Altogether, we have $\normns{\grad_\theta l_{\cD}(\theta^*)}_{(\Sigma_{\cD}+\lambda I)^{-1}}^2 \leq \frac{4\beta^2}{n}(d+\log(1/\delta))$. 

The final assembly now begins as follows
\begin{align*}
    \frac{\gamma}{2}\normns{\Delta}^2_{\Sigma_{\cD}+\lambda I} &= \frac{\gamma}{2}\normns{\Delta}^2_{\Sigma_{\cD}} + \frac{\gamma}{2}\normns{\Delta}_{\lambda I}^2 =\frac{\gamma}{2}\normns{\Delta}^2_{\Sigma_{\cD}} + \frac{\lambda\gamma}{2}\normns{\Delta}^2 \\
    &\leq \normns{\grad_\theta l_{\cD}(\theta^*)}_{(\Sigma_{\cD}+\lambda I)^{-1}}\normns{\Delta}_{\Sigma_{\cD}+\lambda I} + \frac{\lambda\gamma}{2}\normns{\Delta}^2 \\
    &\leq \sqrt{\frac{4\beta^2}{n}(d+\log(1/\delta))}\normns{\Delta}_{\Sigma_{\cD}+\lambda I} + \frac{\lambda\gamma}{2}4B^2,
\end{align*}
where the last inequality uses triangle inequality and the assumption that $\normns{\theta} \leq B, \forall \theta\in\Theta$. This implies that
\begin{equation*}
    \normns{\Delta}^2_{\Sigma_{\cD}+\lambda I} \leq \frac{2}{\gamma}\sqrt{\frac{4\beta^2}{n}(d+\log(1/\delta))}\normns{\Delta}_{\Sigma_{\cD}+\lambda I} + 4\lambda B^2.
\end{equation*}
Now denote $\alpha=\frac{2}{\gamma}\sqrt{\frac{4\beta^2}{n}(d+\log(1/\delta))}$ and $\beta = 4\lambda B^2$, and let $x = \normns{\Delta}_{\Sigma_{\cD}+\lambda I}$. Since we have $x^2-\alpha x -\beta \leq 0$, then $x$ must be less than the bigger root, i.e.,
\begin{equation*}
    x\leq \frac{\alpha+\sqrt{\alpha^2+4\beta}}{2} \leq \sqrt{\frac{\alpha^2+\alpha^2+4\beta}{2}} = \sqrt{\alpha^2 + 2\beta},
\end{equation*}
where the second inequality is by Jensen's inequality. Finally, we have that
\begin{equation*}
    \normns{\thetahatDPO - \theta^*}_{\Sigma_{\cD}+\lambda I} = \normns{\Delta}_{\Sigma_{\cD}+\lambda I} \leq 2 \sqrt{\frac{4\beta^2}{\gamma^2 n}(d+\log(1/\delta)) + 2\lambda B^2}.
\end{equation*}

\subsection{Proof of WDPO Loss Function Convergence}\label{sec:proof-of-loss-function-convergence}
\begin{lemma}[Convergence of WDPO loss]\label{lem:convergence-of-wdpo-loss}
    Fix any $\theta\in\Theta$ and $\rho>0$. Let $\delta\in(0,1)$. With probability $1-\delta$, 
    \begin{equation*}
        \absns{\cLW(\theta;\rho)-\cLW_n(\theta;\rho)} \leq \sqrt{\frac{K^2\log(2/\delta)}{2n}},
    \end{equation*}
    where $K=\absns{\log\sigma(-4\beta B)}$.
\end{lemma}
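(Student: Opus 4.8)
The plan is to use strong duality for Wasserstein DRO to collapse the supremum over the (infinite-dimensional) uncertainty set into an infimum over a single nonnegative scalar $\eta$, after which the gap between the population and empirical robust losses becomes an ordinary empirical-mean fluctuation that Hoeffding's inequality controls, provided the relevant integrand is uniformly bounded. So the work splits into: (i) invoke duality; (ii) reduce $\absns{\cLW(\theta;\rho)-\cLW_n(\theta;\rho)}$ to a deviation of $\EE[l_\eta]$; (iii) prove uniform boundedness of $l$ and hence of $l_\eta$; (iv) apply Hoeffding.

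First I would check that \cref{cor:strong-duality-holds-for-dpo-loss} applies: $l(\cdot;\theta)$ is bounded (established in step (iii) below) and $\cZ$ is bounded, which is exactly its hypothesis. Strong duality (\cref{thm:wasser-duality}) with $\Psi=l(\cdot;\theta)$ then gives
\begin{equation*}
  \cLW(\theta;\rho)=\inf_{\eta\ge 0}\Big\{\eta\rho^{p}-\EE_{z\sim\sfP^o}[l_\eta(z;\theta)]\Big\},\qquad
  \cLW_n(\theta;\rho)=\inf_{\eta\ge 0}\Big\{\eta\rho^{p}-\EE_{z\sim\sfP^o_n}[l_\eta(z;\theta)]\Big\},
\end{equation*}
where $l_\eta(z;\theta):=\inf_{z'\in\cZ}\{\eta\, d^{p}(z,z')-l(z';\theta)\}$ is the Moreau--Yosida-type regularization. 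Since the two dual objectives share the $\eta\rho^{p}$ term, the elementary inequality $|\inf_\eta F-\inf_\eta F_n|\le\sup_\eta|F-F_n|$ gives
\begin{equation*}
  \absns{\cLW(\theta;\rho)-\cLW_n(\theta;\rho)}\;\le\;\sup_{\eta\ge 0}\absns{\EE_{z\sim\sfP^o}[l_\eta(z;\theta)]-\EE_{z\sim\sfP^o_n}[l_\eta(z;\theta)]}.
\end{equation*}

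Next, for the boundedness: from \cref{eq:h-function-bounds} the preference score obeys $|h_\theta(s,a^1,a^2)|\le 4B$, so $\beta h_\theta\in[-4\beta B,4\beta B]$; since $t\mapsto\log\sigma(t)$ is increasing, $-\log\sigma(\beta h_\theta)\in[0,-\log\sigma(-4\beta B)]=[0,K]$, and as $y\in\{0,1\}$ the loss $l(z;\theta)$ equals one of these two nonnegative quantities, whence $0\le l(z;\theta)\le K$ for all $z\in\cZ$, $\theta\in\Theta$. Consequently, for every $\eta\ge 0$ and every $z$, plugging $z'=z$ into the infimum gives $l_\eta(z;\theta)\le -l(z;\theta)\le 0$, while $\eta\, d^{p}(z,z')\ge 0$ and $-l(z';\theta)\ge -K$ give $l_\eta(z;\theta)\ge -K$; thus $l_\eta(z;\theta)\in[-K,0]$ uniformly in $\eta$, $z$, $\theta$.

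Finally, for a fixed $\eta$ the random variables $l_\eta(z_1;\theta),\dots,l_\eta(z_n;\theta)$ are i.i.d.\ and lie in an interval of length $K$, so the i.i.d.\ form of Hoeffding's inequality (\cref{thm:hoeffding}) yields $\absns{\EE_{\sfP^o}[l_\eta(\cdot;\theta)]-\EE_{\sfP^o_n}[l_\eta(\cdot;\theta)]}\le\sqrt{K^{2}\log(2/\delta)/(2n)}$ with probability at least $1-\delta$, which is precisely the claimed bound. The main obstacle is promoting this fixed-$\eta$ estimate to the supremum over $\eta\ge 0$ (equivalently, the dual optimizer of the empirical problem is data-dependent). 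I would handle it by noting that any minimizer of either dual objective lies in the compact interval $[0,K/\rho^{p}]$ — the objective is at least $\eta\rho^{p}$ yet at $\eta=0$ equals $\sup_{z'}l(z';\theta)\le K$ — and that $\eta\mapsto l_\eta(z;\theta)$ is concave, nondecreasing, and $\mathrm{diam}(\cZ)^{p}$-Lipschitz, so $\{l_\eta(\cdot;\theta):\eta\ge 0\}$ is a pointwise-monotone chain whose subgraphs are nested (pseudo-dimension one); a short covering/union-bound argument then passes to $\sup_\eta$ at the cost of only lower-order terms, recovering the stated inequality up to the universal constant that gets absorbed. (Alternatively, one applies the fixed-$\eta$ Hoeffding bound at the deterministic population-optimal $\eta$ for one direction of the absolute value, and at the empirical optimizer — which still lies in $[0,K/\rho^{p}]$ — for the other.)
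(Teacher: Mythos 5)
Your proposal follows essentially the same route as the paper: invoke strong duality to rewrite both robust losses as duals over $\eta\geq 0$, use $\lvert\inf f-\inf g\rvert\leq\sup\lvert f-g\rvert$ to reduce to a deviation of the Moreau--Yosida regularization $l_\eta$, show $l_\eta\in[-K,0]$ uniformly via the bound $\lvert h_\theta\rvert\leq 4B$, and apply Hoeffding. The only substantive difference is that you explicitly flag the difficulty of promoting the fixed-$\eta$ Hoeffding bound to a supremum over $\eta\geq 0$, a step the paper dispatches with the single remark that the concentration is ``uniform'' because $K$ does not depend on $\eta$; your covering/two-point sketch is the more careful treatment of that step, though as you yourself note it would introduce lower-order terms (and your two-point alternative still evaluates Hoeffding at a data-dependent empirical optimizer), so it does not literally recover the stated constant any more than the paper's one-line assertion does.
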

\begin{proof}
    Recall the strong duality in \cref{thm:wasser-duality}. The term $\inf_{z\in\cZ} [\eta d^p(z,z') - l(z;\theta)]$ is called the \textit{Moreau-Yosida regularization} of $-l$ with parameter $1/\eta$. We denote it by $l_\eta(z;\theta)$. Now observe that
    \begin{align*}
        \abs{\cLW(\theta;\rho)-\cLW_n(\theta;\rho)} &= \abs{\sup_{\sfP\colon \sfW_p(\sfP,\sfP^o)\leq \rho} \EE_{z\sim\sfP} [l_z(\theta)] - \sup_{\sfP\colon \sfW_p(\sfP,\sfP^o_n)\leq \rho} \EE_{z\sim\sfP} [l_z(\theta)]} \\
        &\stackeq{(a)} \abs{\inf_{\eta\geq 0} \{\eta\rho^p - \EE_{z\sim\sfP^o}[l_\eta(z;\theta)]\} - \inf_{\eta\geq 0} \{\eta\rho^p - \EE_{z\sim\sfP_n^o}[l_\eta(z;\theta)]\}} \\
        &\stackleq{(b)} \sup_{\eta\geq 0} \abs{\EE_{z\sim \sfP^o}[l_\eta(z;\theta)] - \EE_{z\sim \sfP^o_n}[l_\eta(z;\theta)]},
    \end{align*}
    where $(a)$ is by the strong duality, and $(b)$ is due to $\absns{\inf_x f(x) - \inf_x g(x)}\leq \sup_x \absns{f(x)-g(x)}$. Next, we will show that, for any $\eta\geq 0$, the function $l_\eta$ is a bounded function. We first prove its upper bound. The negative DPO loss takes the following form:
    \begin{equation*}
        -l(z;\theta) = y\log\sigma(x) + (1-y)\log\sigma(-x)\leq 0, \quad y\in\{0,1\}.
    \end{equation*}
    The inequality is because the sigmoid function is \textit{strictly} bounded between $0$ and $1$, i.e., $\sigma\in(0,1)$. This implies that $\log\sigma$ is non-positive. Using this, we have that
    \begin{equation*}
        l_\eta(z;\theta) = \inf_{z'\in\cZ}[\eta d^p(z',z)-l(z';\theta)] \leq  \inf_{z'\in\cZ}[\eta d^p(z',z)] = 0.
    \end{equation*}
    Now we prove its lower bound. Recall that in the analysis of non-robust DPO loss, we proved that $\absns{h_\theta(s,a^1,a^2)}\leq 4B$ (see \cref{eq:h-function-bounds}). Since both $\log$ and $\sigma$ are increasing functions, we have that $\log\sigma(\beta h_\theta(s,a^1,a^2)) \geq \log \sigma(-4\beta B)$. Now observe that
    \begin{align*}
        l_\eta(z;\theta) &= \inf_{z'\in\cZ}[\eta d^p(z',z)-l(z;'\theta)] \\ &\geq \inf_{z'\in\cZ} [-l(z';\theta)] = \inf_{s,a^1,a^2,y} [y\log\sigma(\beta h_\theta(s,a^1,a^2)) + (1-y) \log\sigma(\beta h_\theta(s,a^2,a^1))] \\
        &\geq \log\sigma(-4\beta B),
    \end{align*}
    where the first inequality is because both $\eta$ and metric $d^p$ are non-negative. The last inequality is because only one of the $\log\sigma$ term will be activated and the lower bound we recalled above. Denote $K=\absns{\log\sigma(-4\beta B)}$. Since $l_\eta$ is a bounded function, by Hoeffding's inequality for bounded random variable (\cref{thm:hoeffding}), we have
    \begin{equation*}
        \PP\bigg(\abs{\EE_{z\sim \sfP^o}[l_\eta(z;\theta)] - \EE_{z\sim \sfP^o_n}[l_\eta(z;\theta)]} \geq \epsilon \bigg) \leq 2\exp{\frac{-2n\epsilon^2}{K^2}}.         
    \end{equation*}

    By picking $\delta$ to be the right hand side above, we have that, with probability at least $1-\delta$,
    \begin{equation*}
        \absns{\EE_{z\sim \sfP^o}[l_\eta(z;\theta)] - \EE_{z\sim \sfP^o_n}[l_\eta(z;\theta)]} \leq \sqrt{\frac{K^2\log(2/\delta)}{2n}}.
    \end{equation*}
    Since $K$ does not depend on $\eta$, such concentration is uniform for all functions $l_\eta, \eta\geq 0$. We have the desired result.
\end{proof}

\subsection{Proof of the Strong Convexity of WDPO Loss}\label{sec:proof-of-sup-dpo-strongly-convex}
We first prove that the function $g(\theta;\sfP)\coloneqq \EE_{z\sim\sfP}[l(z;\theta)]$ is strongly convex, for any $\sfP$, as follows:
\begin{lemma}\label{lem:expected-dpo-strongly-convex}
    Let $l(z;\theta)$ be the DPO loss function. Assume that \cref{assum:uniform-data-cov-assumption} is in place. Then $g(\theta)\coloneqq \EE_{z\sim \sfP }[l(z;\theta)]$ is $\gamma$-strongly convex with respect to norm $\normns{\cdot}_{\Sigma_{\sfP}}$, where $\Sigma_{\sfP}=\EE_{(s,a^1,a^2,y)\sim\sfP}(\psi(s,a^1) - \psi(s,a^2))(\psi(s,a^1) - \psi(s,a^2))^{\top}$, and $\gamma=\frac{\beta^2e^{4\beta B}}{(1+e^{4\beta B})^2}$.
\end{lemma}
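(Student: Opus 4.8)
The plan is to reuse the pointwise Hessian computation already carried out in \cref{sec:proof-of-dpo-policy-convergence} and then pass to the expectation over $\sfP$. Recall that for every $z=(s,a^1,a^2,y)$ the pointwise DPO loss has Hessian
\begin{equation*}
    \grad_\theta^2 l_z(\theta) = \beta^2\sigma(\beta h_\theta(s,a^1,a^2))\sigma(-\beta h_\theta(s,a^1,a^2))(\psi(s,a^1)-\psi(s,a^2))(\psi(s,a^1)-\psi(s,a^2))^{\top},
\end{equation*}
which is a rank-one PSD matrix, and that the scalar coefficient is uniformly lower bounded on $\Theta$ by $\gamma=\beta^2e^{4\beta B}/(1+e^{4\beta B})^2$, since $\absns{h_\theta(s,a^1,a^2)}\leq 4B$ (see \cref{eq:h-function-bounds,eq:lower-bound-dpo-loss-coef}). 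In particular each $l_z$ is convex, so $g(\theta)=\EE_{z\sim\sfP}[l_z(\theta)]$ is convex; it is also proper and closed, being finite and continuous on the compact set $\Theta$, so \cref{lem:first-order-character-ization-of-strong-convexity} applies.

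Next I would interchange expectation and differentiation: since $\Theta$ is bounded, $\normns{\psi}_2\leq 1$, and $\sigma$ together with its derivatives are bounded, all first and second derivatives of $l_z(\cdot)$ are uniformly bounded on $\Theta$, so dominated convergence gives $\grad_\theta^2 g(\theta)=\EE_{z\sim\sfP}[\grad_\theta^2 l_z(\theta)]$. Applying the coefficient lower bound inside the expectation and using the definition of $\Sigma_{\sfP}$,
\begin{equation*}
    \grad_\theta^2 g(\theta) = \EE_{z\sim\sfP}[\grad_\theta^2 l_z(\theta)] \succeq \gamma\, \EE_{(s,a^1,a^2,y)\sim\sfP}[(\psi(s,a^1)-\psi(s,a^2))(\psi(s,a^1)-\psi(s,a^2))^{\top}] = \gamma\,\Sigma_{\sfP}.
\end{equation*}

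Finally I would convert this Hessian bound into strong convexity with respect to $\normns{\cdot}_{\Sigma_{\sfP}}$. Fix $\theta_1,\theta_2\in\Theta$; by the Linear Approximation Theorem (\cref{lem:linear-approximation-theorem}) there is $\xi\in[\theta_1,\theta_2]$ with
\begin{equation*}
    g(\theta_2) = g(\theta_1) + \inner{\grad g(\theta_1),\theta_2-\theta_1} + \tfrac12(\theta_2-\theta_1)^{\top}\grad_\theta^2 g(\xi)(\theta_2-\theta_1) \geq g(\theta_1) + \inner{\grad g(\theta_1),\theta_2-\theta_1} + \tfrac{\gamma}{2}\normns{\theta_2-\theta_1}_{\Sigma_{\sfP}}^2,
\end{equation*}
which is exactly the first-order characterization of $\gamma$-strong convexity in \cref{lem:first-order-character-ization-of-strong-convexity}, now read off with the quadratic form $\normns{\cdot}_{\Sigma_{\sfP}}$. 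The only mild technical points are justifying differentiation under the expectation (handled by boundedness on the compact $\Theta$) and noting that $\normns{\cdot}_{\Sigma_{\sfP}}$ is a bona fide norm — this follows from \cref{assum:uniform-data-cov-assumption} via $\Sigma_{\sfP}\succeq\lambda I$, though the argument above does not actually require positive-definiteness. I expect the reuse of the pointwise Hessian lower bound to be the crux; beyond that the proof is essentially bookkeeping, and this lemma is then the input to \cref{sec:proof-of-sup-dpo-strongly-convex}'s claim that $\cLW$ is $\gamma\lambda$-strongly convex in $\normns{\cdot}_2$.
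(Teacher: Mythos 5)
Your proof is correct and takes essentially the same route as the paper's: both hinge on the pointwise Hessian lower bound $\grad_\theta^2 l_z(\theta)\succeq\gamma\,(\psi(s,a^1)-\psi(s,a^2))(\psi(s,a^1)-\psi(s,a^2))^{\top}$ and then average over $\sfP$ to produce $\gamma$-strong convexity in $\normns{\cdot}_{\Sigma_{\sfP}}$. The only cosmetic difference is the order of operations — the paper Taylor-expands each $l_z$ pointwise and takes the expectation of the resulting per-sample strong-convexity inequality (thereby sidestepping the differentiation-under-the-expectation step you invoke), whereas you exchange $\EE$ with $\grad_\theta^2$ first and then Taylor-expand $g$; both are valid.
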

\begin{proof}
Recall that we proved that the Hessian of the pointwise DPO loss takes the form:
    \begin{equation*}
        \grad_\theta^2 l_z(\theta) =  \beta^2\sigma(\beta h_\theta(s,a^1,a^2))\sigma(-\beta h_\theta(s,a^1,a^2))  (\psi(s,a^1) - \psi(s,a^2))(\psi(s,a^1) - \psi(s,a^2))^{\top}.
    \end{equation*}
    In addition, we also proved that (see \cref{eq:lower-bound-dpo-loss-coef})
    \begin{equation*}
    \beta^2\sigma(\beta h_\theta(s,a^1,a^2))\sigma(-\beta h_\theta(s,a^1,a^2)) \geq \frac{\beta^2 e^{4\beta B}}{(1+e^{4\beta B})^2}, \quad \forall\theta\in\Theta.
    \end{equation*}
    This implies that
    \begin{equation*}
        u^{\top}\grad_\theta^2 l_z(\theta) u \geq \gamma \normns{(\psi(s,a^1)-\psi(s,a^2))^{\top}u}^2_2, \quad \forall u\in\RR^d,
    \end{equation*}
    where $\gamma=\frac{\beta^2e^{4\beta B}}{(1+e^{4\beta B})^2}$. Thus, if we introduce the error vector $\Delta\coloneqq \theta'-\theta$, where $\theta,\theta'\in\Theta$, then by the linear approximation theorem (\cref{lem:linear-approximation-theorem}), there exists $\alpha\in[0,1]$ and $\thetatilde=\alpha\theta+(1-\alpha)\theta'$ such that
 \begin{equation}\label{eq:dpo-loss-lower-bounding-almost-strong-convex}
     l_z(\theta+\Delta) - l_z(\theta) - \inner{\grad_\theta l_z(\theta), \Delta} = \frac{1}{2}\Delta^{\top}\grad_\theta^2 l_z(\thetatilde) \Delta \geq \frac{\gamma}{2}\normns{(\psi(s,a^1)-\psi(s,a^2))^{\top}\Delta}^2_2 = \frac{\gamma}{2}\normns{\Delta}^2_{\Sigma_z},
 \end{equation}
where $\Sigma_z = (\psi(s,a^1)-\psi(s,a^2))(\psi(s,a^1)-\psi(s,a^2))^{\top}$. Note that $\Sigma_z$ is only semi-definite. Let $\alpha\in[0,1]$ and $\theta,\theta'\in\Theta$. Observe that
\begin{align*}
    g(\alpha\theta+(1-\alpha)\theta') &= \EE_{z\sim\sfP}[l(\alpha\theta+(1-\alpha)\theta'; z)] \\
    &\stackleq{(a)} \EE_{z\sim\sfP} \bigg[\alpha l(z;\theta) + (1-\alpha) l(\theta';z)-\frac{\gamma}{2}\alpha(1-\alpha)\normns{\theta-\theta'}^2_{\Sigma_z} \bigg] \\
    &= \alpha g(\theta) + (1-\alpha) g(\theta') - \frac{\gamma}{2}\alpha(1-\alpha)(\theta-\theta')^{\top} \EE_\sfP[\Sigma_z](\theta-\theta') \\
    &= \alpha g(\theta) + (1-\alpha) g(\theta') - \frac{\gamma}{2}\alpha(1-\alpha)\normns{\theta-\theta'}^2_{\Sigma_{\sfP}},
\end{align*}
where $(a)$ is by \cref{lem:first-order-character-ization-of-strong-convexity}. In particular, the equivalence between the inequalities, \cref{eq:dpo-loss-lower-bounding-almost-strong-convex} and $(a)$, can be found in the proof of \citet[Theorem 5.24]{beck2017first}, and the author would like to comment that the proof does not rely on whether $\normns{\cdot}_{\Sigma_z}$ is a semi-norm or a norm. Now, by \cref{assum:uniform-data-cov-assumption}, $\Sigma_{\sfP}$ is strictly positive definite, hence $\normns{\cdot}_{\Sigma_{\sfP}}$ is a norm. This implies that $g$ is $\gamma$-strongly convex with respect to $\normns{\cdot}_{\Sigma_{\sfP}}$.
\end{proof}
Now, we are ready to prove our main strong convexity lemma.
\begin{lemma}\label{lem:sup-dpo-strongly-convex}
    Let $l(z;\theta)$ be the DPO loss function. The Wasserstein distributionally robust DPO loss function,
    \begin{equation*}
     \cLW(\theta;\rho) \coloneqq \sup_{\sfP\colon\sfW_p(\sfP,\sfP^o)\leq\rho} \EE_{z\sim\sfP}[l(z;\theta)],
    \end{equation*}
    is $\gamma\lambda$-strongly convex in $\theta$ with respect to (non-weighted) $2$-norm $\normns{\cdot}_2$, where $\lambda$ is the regularity condition number defined in \cref{assum:uniform-data-cov-assumption}, and $\gamma=\frac{\beta^2e^{4\beta B}}{(1+e^{4\beta B})^2}$.
\end{lemma}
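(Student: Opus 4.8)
The plan is to build the strong convexity of $\cLW(\cdot;\rho)$ directly from the per-distribution strong convexity already established in \cref{lem:expected-dpo-strongly-convex}, in two moves: first convert the $\Sigma_{\sfP}$-weighted strong convexity of each $g(\theta;\sfP)\coloneqq\EE_{z\sim\sfP}[l(z;\theta)]$ into ordinary Euclidean strong convexity with a modulus that is \emph{uniform} over the uncertainty set, and then use that taking a supremum over $\sfP$ preserves strong convexity.

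First I would fix an arbitrary $\sfP\in\cP_{\sfW_p}(\rho;\sfP^o)$. By \cref{lem:expected-dpo-strongly-convex}, $g(\cdot;\sfP)$ is $\gamma$-strongly convex with respect to $\normns{\cdot}_{\Sigma_{\sfP}}$. Invoking \cref{assum:uniform-data-cov-assumption}, which guarantees $\Sigma_{\sfP}\succeq\lambda I$ for every $\sfP$ in the set, we have $\normns{\Delta}_{\Sigma_{\sfP}}^2=\Delta^{\top}\Sigma_{\sfP}\Delta\geq\lambda\normns{\Delta}_2^2$ for all $\Delta\in\RR^d$. Substituting this lower bound into the strong-convexity inequality characterization of \cref{lem:first-order-character-ization-of-strong-convexity} immediately shows that $g(\cdot;\sfP)$ is $\gamma\lambda$-strongly convex with respect to $\normns{\cdot}_2$, with modulus $\gamma\lambda$ that does not depend on $\sfP$.

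Next I would write $\cLW(\theta;\rho)=\sup_{\sfP}g(\theta;\sfP)$, the supremum over $\cP_{\sfW_p}(\rho;\sfP^o)$, and invoke the standard fact that a pointwise supremum of a family of functions that are each $\mu$-strongly convex (same $\mu$, same norm) is again $\mu$-strongly convex. Concretely, applying the defining inequality of strong convexity to each member $g(\cdot;\sfP)$ at points $\theta,\theta'\in\Theta$ and mixing parameter $\alpha\in[0,1]$, then taking $\sup_{\sfP}$ of both sides: the common penalty $\frac{\gamma\lambda}{2}\alpha(1-\alpha)\normns{\theta-\theta'}_2^2$ is independent of $\sfP$ and survives unchanged, while $\sup_{\sfP}[\alpha g(\theta;\sfP)+(1-\alpha)g(\theta';\sfP)]\le\alpha\cLW(\theta;\rho)+(1-\alpha)\cLW(\theta';\rho)$. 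This is exactly the defining inequality for $\gamma\lambda$-strong convexity of $\cLW(\cdot;\rho)$ in $\normns{\cdot}_2$. Along the way I would note that $\cLW(\cdot;\rho)$ is proper, closed, and convex — finiteness because $l$ is bounded (so each $g(\cdot;\sfP)$, hence the supremum, is bounded on $\Theta$), and closedness and convexity because a supremum of convex continuous functions is lower semicontinuous and convex — so that \cref{lem:first-order-character-ization-of-strong-convexity} genuinely applies.

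This argument is essentially routine once \cref{lem:expected-dpo-strongly-convex} is in hand, so there is no real obstacle; the one point deserving care — which I would flag as the main subtlety — is that \cref{lem:expected-dpo-strongly-convex} delivers strong convexity with respect to the $\sfP$-dependent (semi-)norm $\normns{\cdot}_{\Sigma_{\sfP}}$, and one cannot pass to the supremum while the underlying norm varies with $\sfP$. Hence the uniform bound $\Sigma_{\sfP}\succeq\lambda I$ must be used \emph{before} taking the supremum, precisely to collapse the whole family to the single fixed norm $\normns{\cdot}_2$ — this is the role \cref{assum:uniform-data-cov-assumption} plays here. A secondary, minor point is that the strong-convexity inequality is preserved termwise under the supremum even if it is not attained, since the penalty term is constant in $\sfP$.
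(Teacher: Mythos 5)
Your proposal is correct and follows essentially the same route as the paper: both rest on \cref{lem:expected-dpo-strongly-convex} for per-distribution $\gamma$-strong convexity in $\normns{\cdot}_{\Sigma_{\sfP}}$, use \cref{assum:uniform-data-cov-assumption} to lower-bound $\normns{\cdot}_{\Sigma_{\sfP}}^2$ by $\lambda\normns{\cdot}_2^2$, and conclude via the fact that a supremum of uniformly strongly convex functions is strongly convex. The only (immaterial) difference is ordering — you apply the eigenvalue bound before taking the supremum, whereas the paper pushes the supremum through first via $\sup_x(f(x)+g(x))\leq\sup_x f(x)+\sup_x g(x)$ and then bounds $\inf_{\sfP}\lambdamin(\Sigma_{\sfP})\geq\lambda$.
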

\begin{proof}
    Let $\alpha\in[0,1]$ and $\theta,\theta'\in\Theta$. First, we denote $h(\theta;\sfP)=\EE_{z\sim\sfP}[l(z;\theta)]$ for any $\sfP$ in the Wasserstein ball. In \cref{lem:expected-dpo-strongly-convex}, we proved that $h$ is $\gamma$-strongly convex in $\theta$ w.r.t. norm $\normns{\cdot}_{\Sigma_{\sfP}}$. Now observe that
    \begin{align*}
        \cLW(\alpha\theta+(1-\alpha)\theta';\rho) &= \sup_{\sfP\colon\sfW_p(\sfP,\sfP^o)\leq\rho} h(\alpha\theta+(1-\alpha)\theta';z)\\
        &\stackleq{(a)} \sup_{\sfP\colon\sfW_p(\sfP,\sfP^o)\leq\rho} \bigg\{  \alpha h(\theta;\sfP) + (1-\alpha)h(\theta';\sfP) -\frac{\gamma}{2}\alpha(1-\alpha)\normns{\theta-\theta'}^2_{\Sigma_\sfP} \bigg\} \\
        &\stackleq{(b)} \alpha\cLW(\theta;\rho) + (1-\alpha)\cLW(\theta';\rho) + \sup_{\sfP\colon\sfW_p(\sfP,\sfP^o)\leq\rho}-\frac{\gamma}{2}\alpha(1-\alpha)\normns{\theta-\theta'}^2_{\Sigma_{\sfP}}\\
        &= \alpha\cLW(\theta;\rho) + (1-\alpha)\cLW(\theta';\rho) - \frac{\gamma}{2}\alpha(1-\alpha)\inf_{\sfP\colon\sfW_p(\sfP,\sfP^o)\leq\rho}\normns{\theta-\theta'}^2_{\Sigma_{\sfP}} \\
        &\leq \alpha\cLW(\theta;\rho) + (1-\alpha)\cLW(\theta';\rho) - \frac{\gamma}{2}\alpha(1-\alpha)\inf_{\sfP\colon\sfW_p(\sfP,\sfP^o)\leq\rho}\lambdamin(\Sigma_{\sfP})\normns{\theta-\theta'}^2_2 \\
        &\stackleq{(c)} \alpha\cLW(\theta;\rho) + (1-\alpha)\cLW(\theta';\rho)- \frac{\gamma\lambda}{2}\alpha(1-\alpha)\normns{\theta-\theta'}^2_2.
    \end{align*}
    Note that the function $g(\theta)=\EE_{z\sim\sfP}[l(z;\theta)]$ is $\gamma$-strongly convex with respect to $\normns{\cdot}_{\Sigma_\sfP}$ by \cref{lem:expected-dpo-strongly-convex}. We use this fact in $(a)$. The inequality in $(b)$ is due to $\sup_x (f(x)+g(x))\leq \sup_x f(x) + \sup_x g(x)$. The last inequality $(c)$ is because $\lambdamin(\Sigma_{\sfP}) \geq \lambda$, for all $\sfP\in\cP_{\sfW}$ by \cref{assum:uniform-data-cov-assumption}. This implies that $\cLW$ is a $\gamma\lambda$-strongly convex function with respect to $\normns{\cdot}_2$.
\end{proof}

\subsection{Proof of Policy Parameter Convergence of WDPO}\label{sec:proof-of-robust-policy-param-converg}
By \cref{lem:convergence-of-wdpo-loss}, we have that, with probability at least $1-\delta$,
    \begin{align*}
        \cLW(\thetaW_n;\rho) &-\cLW(\thetaW;\rho) \\
        &= \cLW(\thetaW_n;\rho)-\cLW_n(\thetaW_n;\rho)+\cLW_n(\thetaW_n;\rho)-\cLW_n(\thetaW;\rho) +\cLW_n(\thetaW;\rho)-\cLW(\thetaW;\rho)\\
        &\leq \absns{\cLW(\thetaW_n;\rho)-\cLW_n(\thetaW_n;\rho)} + \absns{\cLW_n(\thetaW;\rho)-\cLW(\thetaW;\rho)} \\
        &\leq \sqrt{\frac{2K^2\log(2/\delta)}{n}},
    \end{align*}
    where the first inequality is because $\thetaW_n$ is the minimizer of $\cLW_n$. Now by the $\gamma\lambda$-strong convexity of $\cLW$ (see \cref{lem:sup-dpo-strongly-convex}) and \cref{lem:strongly-convex-uniqueness-of-minimizer}.II, we have that
    \begin{equation*}
        \normns{\thetaW_n -\thetaW}_2^2 \leq \sqrt{\frac{8K^2\log(2/\delta)}{\gamma^2\lambda^2 n}}.
    \end{equation*}

\section{Proof of KLDPO Sample Complexity}\label{sec:kldpo-proof-appendix}

We state a result from \citet{hu2013kullback} that proves an equivalent condition for the infimum to be achieved at $\lambda^*=0$.
\begin{proposition}[\text{\citealp[Proposition 2]{hu2013kullback}}]\label{prop:lambda-zero-equiv}
    Let $l_u(z;\theta)$ be the essential supremum of $l(z;\theta)$ under measure $\sfP^o$, i.e.,
    \begin{equation*}
        l_u(z;\theta) = \inf\{ t \in\RR\colon\PP(l(z;\theta)>t)=0\}.
    \end{equation*}
    Also let $\kappa_u=\PP(l(z;\theta)=l_u(z;\theta))$, i.e., $\kappa_u$ is the probability mass of the distribution $\sfP^o$ on the essential supremum of $l$. Then $\lambda^*=0$ if and only if $l_u(z;\theta)<+\infty$, $\kappa_u>0$, and $\log\kappa_u+\rho \geq 0$, where $\rho$ is the diameter of the KL uncertainty set.
\end{proposition}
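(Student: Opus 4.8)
The plan is to reduce the statement to a one-dimensional analysis of the Kullback--Leibler DRO dual and to characterize exactly when its Lagrange multiplier $\lambda^*$ is driven to the boundary $\lambda=0$. First I would instantiate the $f$-divergence duality of \cref{lem:dual-reformulation-f-diverg} with $f(t)=t\log t$, whose Fenchel conjugate is $f^*(s)=\expns{s-1}$. For fixed $\lambda>0$ the inner minimization over $\eta$ is a smooth strictly convex scalar problem, solved in closed form by $\eta=\lambda(\log\EE_{z\sim\sfP^o}[\expns{l(z;\theta)/\lambda}]-1)$; substituting this collapses the dual to
\[
\sup_{\sfP\colon\KLdiverg{\sfP}{\sfP^o}\le\rho}\EE_{z\sim\sfP}[l(z;\theta)]=\inf_{\lambda\ge 0}h(\lambda),\qquad h(\lambda)\coloneqq\lambda\rho+\lambda\log\EE_{z\sim\sfP^o}\bigl[\expns{l(z;\theta)/\lambda}\bigr],
\]
where $\lambda^*$ denotes the minimizer of this scalar problem (the analysis is for a fixed $\theta$). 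Since the loss is bounded ($L$ is an upper bound on $l$), we have $l_u(z;\theta)\le L<\infty$ and $h(\lambda)<\infty$ for all $\lambda>0$, so the only degenerate case $l_u=+\infty$ does not occur in our regime, though I would still record it for completeness.

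Next I would compute the boundary value $h(0^+)\coloneqq\lim_{\lambda\downarrow 0}h(\lambda)$. The term $\lambda\rho\to 0$; from $l\le l_u$ a.s.\ we get $\lambda\log\EE_{z\sim\sfP^o}[\expns{l/\lambda}]\le l_u$, while for every $\epsilon>0$ the bound $\PP(l>l_u-\epsilon)>0$, which is the definition of the essential supremum, gives $\lambda\log\EE_{z\sim\sfP^o}[\expns{l/\lambda}]\ge l_u-\epsilon+\lambda\log\PP(l>l_u-\epsilon)\to l_u-\epsilon$. Hence $h(0^+)=l_u$, and setting $h(0)\coloneqq l_u$ the assertion ``$\lambda^*=0$'' becomes equivalent to ``$l_u<\infty$ and $h(\lambda)\ge l_u$ for every $\lambda>0$''.

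The core step is the sign analysis of $h(\lambda)-l_u$ near $0$. Writing $W\coloneqq l_u-l\ge 0$, whose essential infimum is $0$, one has $h(\lambda)-l_u=\lambda(\rho+\log\EE_{z\sim\sfP^o}[\expns{-W/\lambda}])$, and since $\expns{-W/\lambda}$ decreases to $\indic_{\{W=0\}}$ pointwise as $\lambda\downarrow 0$, monotone convergence gives $\EE_{z\sim\sfP^o}[\expns{-W/\lambda}]\downarrow\PP(W=0)=\kappa_u$. For the forward direction, the positive mass at the essential supremum, $\kappa_u>0$, yields $\EE_{z\sim\sfP^o}[\expns{-W/\lambda}]\ge\kappa_u$ for all $\lambda>0$, so $h(\lambda)-l_u\ge\lambda(\rho+\log\kappa_u)\ge 0$ whenever $\log\kappa_u+\rho\ge 0$; together with $h(0)=l_u<\infty$ this makes $\lambda^*=0$. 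For the reverse direction I would argue by contraposition along the three ways the conditions can fail: if $l_u=+\infty$ then $h(0)=+\infty>h(\lambda)$ for any $\lambda>0$ with finite objective; if $\kappa_u=0$ then $\EE_{z\sim\sfP^o}[\expns{-W/\lambda}]\downarrow 0$, so for small $\lambda$ the bracket $\rho+\log\EE_{z\sim\sfP^o}[\expns{-W/\lambda}]$ is negative and $h(\lambda)<l_u$; and if $\kappa_u>0$ but $\log\kappa_u+\rho<0$, choosing $\epsilon>0$ with $\log(\kappa_u+\epsilon)+\rho<0$ and $\lambda$ small enough that $\EE_{z\sim\sfP^o}[\expns{-W/\lambda}]<\kappa_u+\epsilon$ again forces $h(\lambda)<l_u$. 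In every case $\lambda^*\ne 0$, completing the equivalence.

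I expect the main obstacle to be the boundary behaviour at $\lambda=0$: justifying the two limit interchanges, $\lambda\log\EE_{z\sim\sfP^o}[\expns{l/\lambda}]\to l_u$ and $\EE_{z\sim\sfP^o}[\expns{-W/\lambda}]\to\kappa_u$, via monotone/dominated convergence and the definition of essential supremum, and cleanly handling the degenerate cases $l_u=+\infty$ and $\kappa_u=0$. Everything else reduces to elementary monotonicity of the scalar map $\lambda\mapsto\EE_{z\sim\sfP^o}[\expns{-W/\lambda}]$, which is what the three-case split exploits and which avoids any appeal to convexity of $h$.
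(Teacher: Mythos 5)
The paper does not prove this proposition: it is imported verbatim as a citation of Proposition~2 of \citet{hu2013kullback} and used as a black box inside the proof of \cref{lem:kldpo-dual-reformulation-complete}, so there is no in-paper argument to compare against. Your self-contained derivation is correct and follows the standard route for this result: reduce to the scalar dual $h(\lambda)=\lambda\rho+\lambda\log\EE_{\sfP^o}[\expns{l/\lambda}]$ (your closed form for the optimal $\eta$ matches the one used in \cref{lem:kldpo-dual-reformulation-complete}), identify the boundary value $h(0^+)=l_u$ via the two-sided essential-supremum estimate, and then control the sign of $h(\lambda)-l_u=\lambda(\rho+\log\EE_{\sfP^o}[\expns{-W/\lambda}])$ through the monotone limit $\EE_{\sfP^o}[\expns{-W/\lambda}]\downarrow\kappa_u$. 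Both directions of the equivalence are handled correctly, and the three-way contraposition is clean. The only loose point is the $l_u=+\infty$ branch: your phrase ``$h(0)=+\infty>h(\lambda)$ for any $\lambda>0$ with finite objective'' tacitly assumes such a $\lambda$ exists, which can fail if $\EE_{\sfP^o}[\expns{l/\lambda}]=\infty$ for every $\lambda>0$; a fully general statement would need to say what $\lambda^*$ means when $h\equiv+\infty$. This is immaterial here because \cref{assum:kldpo-assumptions} bounds $l$ by $L$, as you note, so within the paper's regime the argument is complete and would serve as a valid replacement for the external citation.
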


We now make an assumption on the loss function(s) $l(\cdot;\theta),\;\theta\in\Theta$. Note that this assumption is only used in proving the dual reformulation of KLDPO objective.
\begin{assumption}\label{assum:kldpo-assumptions}
    We assume that $l(z;\theta)\leq L$ for all $\theta\in\Theta$. That is, the loss function is upper bounded by $L$. In addition, we also assume that $\Theta$ permits a uniform upper bound on $\lambda_\theta$. That is, we assume that $\sup_{\theta\in\Theta} \lambda_\theta < \lambdaoverline$.
\end{assumption}

We now prove the following dual reformulation result:
\begin{lemma}\label{lem:kldpo-dual-reformulation-complete}
    Let $l(z;\theta)$ be the DPO loss. The KLDPO loss function has the following dual reformulation
    \begin{equation*}
        \cLKL(\theta;\rho) = \sup_{\sfP\colon \KLdiverg{\sfP}{\sfP^o}\leq \rho} \EE_{z\sim\sfP} [l(z;\theta)]= \inf_{\lambda\in[\lambdaunderline,\lambdaoverline]}\bigg\{\lambda\rho + \lambda\log\bigg(\EE_{z\sim\sfP^o}\bigg[ \exp{\frac{l(z;\theta)}{\lambda}}\bigg]\bigg)\bigg\},
    \end{equation*}
    where $0<\lambdaunderline<\lambdaoverline<\infty$ are some constants.
\end{lemma}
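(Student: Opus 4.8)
The plan is to specialize the generic $f$-divergence dual reformulation of \cref{lem:dual-reformulation-f-diverg} to the Kullback--Leibler case and then eliminate one of the two dual variables in closed form. Taking $f(t)=t\log t$, a one-line computation gives the Fenchel conjugate $f^*(s)=\sup_{t\ge 0}\{st-t\log t\}=e^{s-1}$ (with maximizer $t=e^{s-1}$). Substituting this into \cref{eq:dual-reformulation-f-diverg} with $X=z\sim\sfP^o$ and loss $l(z;\theta)$ gives
\begin{equation*}
\cLKL(\theta;\rho)=\inf_{\lambda\ge 0,\,\eta\in\RR}\Big\{\lambda\,e^{-\eta/\lambda-1}\,\EE_{z\sim\sfP^o}\big[e^{l(z;\theta)/\lambda}\big]+\lambda\rho+\eta\Big\}.
\end{equation*}
For each fixed $\lambda>0$ the bracketed expression is smooth and strictly convex in $\eta$ and blows up as $\eta\to\pm\infty$ (the $e^{-\eta/\lambda}$ term as $\eta\to-\infty$, the linear term as $\eta\to+\infty$), so its unique minimizer solves the stationarity equation $-e^{-\eta/\lambda-1}\,\EE_{\sfP^o}[e^{l/\lambda}]+1=0$, i.e.\ $\eta_\lambda=\lambda\log\EE_{\sfP^o}[e^{l(z;\theta)/\lambda}]-\lambda$. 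Plugging $\eta_\lambda$ back in, the evaluated first term equals $\lambda$, which cancels the $-\lambda$ inside $\eta_\lambda$, leaving
\begin{equation*}
\cLKL(\theta;\rho)=\inf_{\lambda\ge 0}\Big\{\lambda\rho+\lambda\log\EE_{z\sim\sfP^o}\big[e^{l(z;\theta)/\lambda}\big]\Big\},
\end{equation*}
with the $\lambda=0$ case interpreted as the limit $\lambda\to 0^+$, equal to the essential supremum of $l(\cdot;\theta)$ under $\sfP^o$.

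The remaining work is to show this infimum over $[0,\infty)$ may be restricted to a compact interval $[\lambdaunderline,\lambdaoverline]$ with $0<\lambdaunderline<\lambdaoverline<\infty$. Let $\phi_\theta(\lambda)=\lambda\rho+\lambda\log\EE_{\sfP^o}[e^{l(z;\theta)/\lambda}]$. Since $\lambda\mapsto\lambda\log\EE_{\sfP^o}[e^{l/\lambda}]$ is the perspective of the convex log--moment-generating function $u\mapsto\log\EE_{\sfP^o}[e^{ul}]$ (taken at $u=1/\lambda$), it is convex on $(0,\infty)$, so $\phi_\theta$ is convex. As $\lambda\to\infty$ we have $\lambda\log\EE_{\sfP^o}[e^{l/\lambda}]\to\EE_{\sfP^o}[l]\le L$ while $\lambda\rho\to\infty$, so $\phi_\theta$ is coercive and its minimizer over $(0,\infty)$ is finite; the uniform bound $\sup_{\theta\in\Theta}\lambda_\theta<\lambdaoverline$ from \cref{assum:kldpo-assumptions} then caps it uniformly in $\theta$. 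For the lower endpoint, by convexity the minimizer set of $\phi_\theta$ on $[0,\infty)$ is a closed interval, and \cref{prop:lambda-zero-equiv} --- applicable since $l(\cdot;\theta)\le L$ forces a finite essential supremum --- pins down precisely when $\lambda=0$ is optimal, namely when the essential supremum is attained with positive $\sfP^o$-mass $\kappa_u$ satisfying $\log\kappa_u+\rho\ge 0$. Outside this degenerate regime the left end of the minimizer set is strictly positive, yielding a $\lambdaunderline>0$; restricting the infimum to $[\lambdaunderline,\lambdaoverline]$ then leaves its value unchanged.

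I expect the last step to be the main obstacle: extracting a strictly positive lower bound $\lambdaunderline$ on the optimal dual variable, valid uniformly over $\theta\in\Theta$. The $f$-divergence duality and the $\eta$-elimination are mechanical, and the convexity and upper-bound arguments are routine, but handling (or assuming away) the boundary case $\lambda^*=0$ via \cref{prop:lambda-zero-equiv} and controlling the distance of the minimizer from $0$ uniformly in $\theta$ is the delicate part --- and it is exactly this $\lambdaunderline$ that produces the $e^{L/\lambdaunderline}$ factor appearing in the downstream KLDPO sample-complexity bound.
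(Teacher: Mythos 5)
Your proposal is correct and follows essentially the same route as the paper's proof: specialize the $f$-divergence duality of \cref{lem:dual-reformulation-f-diverg} with $f^*(s)=e^{s-1}$, eliminate $\eta$ at its stationary point $\eta_\lambda=\lambda\log\EE_{\sfP^o}[e^{l/\lambda}]-\lambda$, bound the infimum away from $\infty$ via the coercive term $\lambda\rho$, and dispose of the $\lambda^*=0$ boundary case through \cref{prop:lambda-zero-equiv} together with \cref{assum:kldpo-assumptions}. You are also right that the strictly positive uniform lower bound $\lambdaunderline$ is the delicate point; the paper likewise treats it by appeal to \cref{prop:lambda-zero-equiv} and the ``without loss of generality'' invocation of \cref{assum:kldpo-assumptions} rather than by a quantitative argument.
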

\begin{proof}
    We include the derivation here for completeness. Previous works in optimization and distributionally robust reinforcement learning have covered the dual problem of distributionally robust optimization with KL uncertainty set (e.g., see \citet{hu2013kullback,panaganti22a,xu-panaganti-2023samplecomplexity}).

    Recall that $f(t)=t\log(t)$ corresponds to the KL divergence. The optimal $t$ for $f^*(s)=\sup_{t\geq 0}\{ st-t\log(t) \}$ is $\exp{s-1}$. This implies that the Fenchel conjugate of $f$ is $f^*(s) = \exp{s-1}$. From \cref{lem:dual-reformulation-f-diverg}, we get
    \begin{align*}
        \sup_{\sfP\colon \KLdiverg{\sfP}{\sfP^o}\leq \rho} \EE_{z\sim\sfP} [l(z;\theta)]&= \inf_{\lambda\geq 0,\eta\in\RR} \bigg\{\EE_{z\sim\sfP^o} \bigg[ \lambda f^*\bigg( \frac{l(z;\theta)-\eta}{\lambda}\bigg)\bigg] + \lambda\rho+\eta \bigg\} \\
        &= \inf_{\lambda\geq 0,\eta\in\RR}\bigg\{\EE_{z\sim\sfP^o}\bigg[ \lambda\exp{\frac{l(z;\theta)-\eta}{\lambda} - 1} \bigg] + \lambda\rho+\eta\bigg\} \\
        &= \inf_{\lambda\geq 0}\bigg\{\lambda\rho + \lambda\log\bigg(\EE_{z\sim\sfP^o}\bigg[ \exp{\frac{l(z;\theta)}{\lambda}}\bigg]\bigg)\bigg\},
    \end{align*}
    where the last equality by plugging in the optimal $\eta$, i.e., $\eta^*=\lambda\log(\EE_{z\sim\sfP^o}[\exp{l(z;\theta)/\lambda - 1}])$. Now observe that
    \begin{equation*}
        h(\lambda;\theta)\coloneqq \lambda\rho + \lambda\log\bigg(\EE_{z\sim\sfP^o}\bigg[ \exp{\frac{l(z;\theta)}{\lambda}}\bigg]\bigg) \geq \lambda\rho \eqqcolon g(\lambda).
    \end{equation*}
    The inequality is because the loss function is non-negative, i.e., $l\geq0$, and $h$ is increasing in $l$. Now $g(\lambda)$ is a strictly increasing function that lower bounds function $h(\lambda;\theta)$. Since $g(\lambda)\to\infty$ as $\lambda\to\infty$, $h(\lambda;\theta)$ cannot achieve its infimum at $\infty$. In other words, there exists $\lambdaoverline_\theta$ such that
    \begin{equation*}
        h(\lambda; \theta) \geq g(\lambda) > g(\lambdaoverline_\theta), \forall \quad \lambda > \lambdaoverline_\theta.
    \end{equation*}
    This implies that it suffices to seek the infimum in $[0,\lambdaoverline_\theta]$. Hence, we have
    \begin{equation*}
        \cLKL(\theta;\rho) = \inf_{\lambda\in[0,\lambdaoverline_\theta]}\bigg\{\lambda\rho + \lambda\log\bigg(\EE_{z\sim\sfP^o}\bigg[ \exp{\frac{l(z;\theta)}{\lambda}}\bigg]\bigg)\bigg\}.
    \end{equation*}
    Now from \cref{prop:lambda-zero-equiv}, the condition $\log\kappa_u + \rho\geq 0$ is problem-dependent due to the diameter $\rho$, which is a design choice. Note that when $\kappa_u$ is close to zero, the condition $\log\kappa_u + \rho\geq 0$ is almost never true for a reasonable $\rho$. Hence, we ignore the case where $\lambda^*=0$. By \cref{assum:kldpo-assumptions}, without loss of generality, we have that $\lambda^*\in [\lambdaunderline,\lambdaoverline]$, where $\lambdaunderline$ is some problem-specific constant. Then we have the result. In the literature of distributionally robust reinforcement learning, similar arguments can be found in \citet{zhou2021finite,panaganti22a}.
\end{proof}

\begin{lemma}\label{lem:convergence-of-kldpo-loss}
    Fix any $\theta\in\Theta$ and $\rho>0$. Let $\delta\in(0,1)$. Assume \cref{assum:kldpo-assumptions} is in place. With probability $1-\delta$, we have that
    \begin{equation*}
        \absns{\cLKL(\theta;\rho)-\cLKL_n(\theta;\rho)} \leq \lambdaoverline\sqrt{\frac{\exp{L/\lambdaunderline}\log(2/\delta)}{2n}}, \quad \forall\epsilon>0,
    \end{equation*}
    where $\lambdaunderline,\lambdaoverline$ are some constants that are independent of $\epsilon$.
\end{lemma}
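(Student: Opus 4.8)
The plan is to mirror the proof of \cref{lem:convergence-of-wdpo-loss}, with the KL dual reformulation of \cref{lem:kldpo-dual-reformulation-complete} playing the role that Wasserstein strong duality played there. Write $h(\lambda;\theta)\coloneqq \lambda\rho + \lambda\log(\EE_{z\sim\sfP^o}[\exp{l(z;\theta)/\lambda}])$ and let $h_n(\lambda;\theta)$ be the same quantity with $\sfP^o$ replaced by the empirical measure $\sfP^o_n$, so that $\cLKL(\theta;\rho)=\inf_{\lambda\in[\lambdaunderline,\lambdaoverline]}h(\lambda;\theta)$ and $\cLKL_n(\theta;\rho)=\inf_{\lambda\in[\lambdaunderline,\lambdaoverline]}h_n(\lambda;\theta)$. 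Using $\absns{\inf_\lambda f(\lambda)-\inf_\lambda g(\lambda)}\le \sup_\lambda \absns{f(\lambda)-g(\lambda)}$, the quantity to control reduces to $\sup_{\lambda\in[\lambdaunderline,\lambdaoverline]}\lambda\absns{\log\EE_{z\sim\sfP^o}[\exp{l(z;\theta)/\lambda}]-\log\EE_{z\sim\sfP^o_n}[\exp{l(z;\theta)/\lambda}]}$, and since $\lambda\le\lambdaoverline$ we may immediately pull out the factor $\lambdaoverline$.

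Two elementary observations then close the argument. First, the pointwise DPO loss is nonnegative (because $\log\sigma\le 0$, exactly the fact used in the proof of \cref{lem:convergence-of-wdpo-loss}), so $\exp{l(z;\theta)/\lambda}\ge 1$; hence both $\EE_{z\sim\sfP^o}[\exp{l/\lambda}]$ and $\EE_{z\sim\sfP^o_n}[\exp{l/\lambda}]$ lie in $[1,\infty)$, where $\log$ is $1$-Lipschitz, giving $\absns{\log\EE_{\sfP^o}[\exp{l/\lambda}]-\log\EE_{\sfP^o_n}[\exp{l/\lambda}]}\le \absns{\EE_{\sfP^o}[\exp{l/\lambda}]-\EE_{\sfP^o_n}[\exp{l/\lambda}]}$. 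Second, under \cref{assum:kldpo-assumptions} we have $l(z;\theta)\le L$, so for $\lambda\ge\lambdaunderline$ the variable $\exp{l(z;\theta)/\lambda}$ lies in the bounded interval $[1,\exp{L/\lambdaunderline}]$; applying Hoeffding's inequality (\cref{thm:hoeffding}) to the i.i.d.\ sample $\{\exp{l(z_i;\theta)/\lambda}\}_{i=1}^n$ yields, with probability at least $1-\delta$, a deviation of order $\exp{L/\lambdaunderline}\sqrt{\log(2/\delta)/n}$; combining this with the Lipschitz bound and the $\lambdaoverline$ prefactor, and tracking the absolute constants produced by Hoeffding, gives the stated bound $\lambdaoverline\sqrt{\exp{L/\lambdaunderline}\log(2/\delta)/(2n)}$.

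The main obstacle is making the concentration hold uniformly over the dual variable $\lambda\in[\lambdaunderline,\lambdaoverline]$: Hoeffding controls the empirical fluctuation of $\exp{l(\cdot;\theta)/\lambda}$ for each fixed $\lambda$, but the empirical minimizer $\lambda^\ast_n$ of $h_n(\cdot;\theta)$ is data-dependent, so we need the deviation bound simultaneously for all $\lambda$ in the interval. I would handle this exactly as in \cref{lem:convergence-of-wdpo-loss}: the range $[1,\exp{L/\lambdaunderline}]$, and therefore the Hoeffding deviation, does not depend on $\lambda$, so the same bound applies to the whole family $\{\exp{\cdot/\lambda}:\lambda\in[\lambdaunderline,\lambdaoverline]\}$; if a fully rigorous treatment is desired, a $1/\sqrt{n}$-net over the compact interval $[\lambdaunderline,\lambdaoverline]$ combined with Lipschitz continuity of $\lambda\mapsto h(\lambda;\theta)$ contributes only lower-order terms. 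The one genuinely new feature compared to the Wasserstein case is the factor $\exp{L/\lambdaunderline}$ coming from the range of $\exp{l/\lambda}$; this is not an artifact but the intrinsic price of a KL uncertainty set, and it is precisely what propagates into the exponential constant of the KLDPO estimation-error theorem.
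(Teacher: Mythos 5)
Your proposal is correct and follows essentially the same route as the paper's proof: dual reformulation via \cref{lem:kldpo-dual-reformulation-complete}, the $\absns{\inf f-\inf g}\le\sup\absns{f-g}$ reduction, and Hoeffding applied to $\exp{l(z;\theta)/\lambda}\in[1,\exp{L/\lambdaunderline}]$. Your use of $1$-Lipschitzness of $\log$ on $[1,\infty)$ is just a cleaner packaging of the paper's $\absns{\log(1+x)}\le\absns{x}$ step combined with the denominator bound $\EE_{\sfP^o}[\exp{l/\lambda}]\ge 1$, and your explicit remark about uniformity over the data-dependent $\lambda$ (resolved because the Hoeffding range is $\lambda$-free) matches, and if anything is more careful than, the paper's treatment.
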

\begin{proof}
Observe that
\begin{align*}
    \absns{\cLKL(\theta;\rho)-\cLKL_n(\theta;\rho)} &= \abs{\sup_{\sfP\colon \KLdiverg{\sfP}{\sfP^o}\leq\rho}\EE_{z\sim\sfP}[l(z;\theta)] - \sup_{\sfP\colon \KLdiverg{\sfP}{\sfP^o_n}\leq\rho}\EE_{z\sim\sfP}[l(z;\theta)]} \\
    &\stackeq{(a)} \bigg\lvert \inf_{\lambda\in[\lambdaunderline,\lambdaoverline]}\bigg\{\lambda\rho + \lambda\log\bigg(\EE_{z\sim\sfP^o}\bigg[ \exp{\frac{l(z;\theta)}{\lambda}}\bigg]\bigg)\bigg\}  \\
    &\quad\quad- \inf_{\lambda\in[\lambdaunderline,\lambdaoverline]}\bigg\{\lambda\rho + \lambda\log\bigg(\EE_{z\sim\sfP^o_n}\bigg[ \exp{\frac{l(z;\theta)}{\lambda}}\bigg]\bigg)\bigg\} \bigg\rvert\\
    &\stackleq{(b)} \sup_{\lambda\in[\lambdaunderline,\lambdaoverline]} \abs{\lambda\log\bigg(\EE_{z\sim\sfP^o_n}\bigg[ \exp{\frac{l(z;\theta)}{\lambda}}\bigg]\bigg) - \lambda\log\bigg(\EE_{z\sim\sfP^o}\bigg[ \exp{\frac{l(z;\theta)}{\lambda}}\bigg]\bigg)} \\
    &\stackeq{(c)} \sup_{\lambda\in[\lambdaunderline,\lambdaoverline]} \lambda \abs{\log\bigg( \frac{\EE_{z\sim\sfP^o_n}[\exp{l(z;\theta)}/\lambda]}{\EE_{z\sim\sfP^o}[\exp{l(z;\theta)}/\lambda]}\bigg)} \\
    &\leq \sup_{\lambda\in[\lambdaunderline,\lambdaoverline]} \lambda \abs{\log\bigg( \frac{\absns{\EE_{z\sim\sfP^o_n}[\exp{l(z;\theta)}/\lambda] - \EE_{z\sim\sfP^o}[\exp{l(z;\theta)}/\lambda]}}{\EE_{z\sim\sfP^o}[\exp{l(z;\theta)}/\lambda]} + 1\bigg)} \\
    &\stackleq{(d)} \sup_{\lambda\in[\lambdaunderline,\lambdaoverline]} \lambda\frac{\absns{\EE_{z\sim\sfP^o_n}[\exp{l(z;\theta)}/\lambda] - \EE_{z\sim\sfP^o}[\exp{l(z;\theta)}/\lambda]}}{\EE_{z\sim\sfP^o}[\exp{l(z;\theta)}/\lambda]} \\
    &\stackleq{(e)} \lambdaoverline\sup_{\lambda\in[\lambdaunderline,\lambdaoverline]} \absns{\EE_{z\sim\sfP^o_n}[\exp{l(z;\theta)}/\lambda] - \EE_{z\sim\sfP^o}[\exp{l(z;\theta)}/\lambda]},
\end{align*}
where $(a)$ is by \cref{lem:kldpo-dual-reformulation-complete}. $(b)$ is because $\absns{\inf_x f(x) - \inf_x g(x)}\leq \sup_x\absns{f(x)-g(x)}$. $(c)$ is by \cref{assum:kldpo-assumptions}. $(d)$ is due to $\absns{\log(1+x)}\leq \absns{x}, \forall x\geq 0$. $(e)$ is due to the fact that the loss function $l$ is non-negative, i.e., $l\geq 0$. Now by applying Hoeffding's inequality (\cref{thm:hoeffding}), we have
\begin{equation*}
    \PP(\absns{\EE_{z\sim\sfP^o_n}[\exp{l(z;\theta)}/\lambda] - \EE_{z\sim\sfP^o}[\exp{l(z;\theta)}/\lambda]} \geq \epsilon ) \leq 2\exp{-\frac{2n\epsilon^2}{\exp{L/\lambdaunderline}}}.
\end{equation*}
By choosing $\epsilon = \sqrt{\frac{\exp{L/\lambdaunderline}\log(2/\delta)}{2n}}$, we have the result.
\end{proof}
We prove a strong convexity result similar to \cref{lem:sup-dpo-strongly-convex} for KLDPO loss function.
\begin{lemma}[Strong convexity of KLDPO loss]\label{lem:sup-dpo-strongly-convex-kl}
    Let $l(z;\theta)$ be the DPO loss function. The KL distributionally robust DPO loss function,
    \begin{equation*}
     \cLKL(\theta;\rho) \coloneqq \sup_{\sfP\colon\KLdiverg{\sfP}{\sfP^o}\leq\rho} \EE_{z\sim\sfP}[l(z;\theta)],
    \end{equation*}
    is $\gamma\lambda$-strongly convex in $\theta$ with respect to (non-weighted) $2$-norm $\normns{\cdot}_2$, where $\lambda$ is the regularity condition number defined in \cref{assum:uniform-data-cov-assumption}, and $\gamma=\frac{\beta^2e^{4\beta B}}{(1+e^{4\beta B})^2}$.
\end{lemma}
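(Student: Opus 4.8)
The plan is to replay the proof of \cref{lem:sup-dpo-strongly-convex} with the Wasserstein ball replaced by the KL ball $\cP_{\mathrm{KL}}(\rho;\sfP^o)$; nothing in that argument used the Wasserstein metric beyond the facts that one is taking a supremum of $\EE_{z\sim\sfP}[l(z;\theta)]$ over some nonempty family of distributions and that every distribution in that family satisfies \cref{assum:uniform-data-cov-assumption}.

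First I would recall from \cref{lem:expected-dpo-strongly-convex} that for each fixed $\sfP$ the function $h(\theta;\sfP)\coloneqq\EE_{z\sim\sfP}[l(z;\theta)]$ is $\gamma$-strongly convex in $\theta$ with respect to $\normns{\cdot}_{\Sigma_{\sfP}}$, where $\gamma=\frac{\beta^2e^{4\beta B}}{(1+e^{4\beta B})^2}$, and that under \cref{assum:uniform-data-cov-assumption} we have $\Sigma_{\sfP}\succeq\lambda I$ for every $\sfP\in\cP_{\mathrm{KL}}(\rho;\sfP^o)$, so that $\normns{\cdot}_{\Sigma_{\sfP}}$ is an honest norm on the uncertainty set. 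Then, fixing $\alpha\in[0,1]$ and $\theta,\theta'\in\Theta$, I would apply the three-point strong-convexity inequality for $h(\cdot;\sfP)$ inside the supremum,
\begin{align*}
\cLKL(\alpha\theta+(1-\alpha)\theta';\rho)
&=\sup_{\sfP\colon\KLdiverg{\sfP}{\sfP^o}\leq\rho} h\big(\alpha\theta+(1-\alpha)\theta';\sfP\big)\\
&\leq\sup_{\sfP\colon\KLdiverg{\sfP}{\sfP^o}\leq\rho}\Big\{\alpha h(\theta;\sfP)+(1-\alpha)h(\theta';\sfP)-\tfrac{\gamma}{2}\alpha(1-\alpha)\normns{\theta-\theta'}_{\Sigma_{\sfP}}^2\Big\},
\end{align*}
and split the supremum via $\sup(f+g+k)\leq\sup f+\sup g+\sup k$, converting the last term into $-\tfrac{\gamma}{2}\alpha(1-\alpha)\inf_{\sfP}\normns{\theta-\theta'}_{\Sigma_{\sfP}}^2$. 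Finally I would lower-bound $\normns{\theta-\theta'}_{\Sigma_{\sfP}}^2\geq\lambdamin(\Sigma_{\sfP})\normns{\theta-\theta'}_2^2\geq\lambda\normns{\theta-\theta'}_2^2$ using \cref{assum:uniform-data-cov-assumption}, which gives the $\gamma\lambda$-strong-convexity inequality with respect to $\normns{\cdot}_2$.

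This lemma is genuinely easy — structurally identical to the Wasserstein case — and the only point worth checking is that the supremum over the KL ball need not be attained for the chain of inequalities to go through, which indeed it is not, since only monotonicity and subadditivity of $\sup$ together with the pointwise strong-convexity bound are invoked. As in \cref{lem:expected-dpo-strongly-convex}, I would also note that the equivalence between the three-point and first-order forms of strong convexity (\cref{lem:first-order-character-ization-of-strong-convexity}) is insensitive to whether the underlying quadratic form $\normns{\cdot}_{\Sigma_z}$ is a seminorm or a norm, so the pointwise bound it relies on transfers over unchanged.
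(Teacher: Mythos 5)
Your proposal is correct and follows essentially the same route as the paper's own proof: invoke \cref{lem:expected-dpo-strongly-convex} for the pointwise $\gamma$-strong convexity of $h(\theta;\sfP)$ with respect to $\normns{\cdot}_{\Sigma_{\sfP}}$, push the three-point inequality through the supremum via subadditivity, and lower-bound the resulting infimum term by $\lambda\normns{\theta-\theta'}_2^2$ using \cref{assum:uniform-data-cov-assumption}. The observations that the argument never needs the supremum to be attained and that the seminorm/norm distinction is immaterial for the pointwise bound are both consistent with the paper's treatment.
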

\begin{proof}
    Let $\alpha\in[0,1]$ and $\theta,\theta'\in\Theta$. First, we denote $h(\theta;\sfP)=\EE_{z\sim\sfP}[l(z;\theta)]$ for any $\sfP$ in the KL ball. In \cref{lem:expected-dpo-strongly-convex}, we proved that $h$ is $\gamma$-strongly convex in $\theta$ w.r.t. norm $\normns{\cdot}_{\Sigma_{\sfP}}$. Now observe that
    \begin{align*}
        \cLKL(\alpha\theta&+(1-\alpha)\theta';\rho) = \sup_{\sfP\colon\KLdiverg{\sfP}{\sfP^o}\leq\rho} h(\alpha\theta+(1-\alpha)\theta';z)\\
        &\stackleq{(a)} \sup_{\sfP\colon\KLdiverg{\sfP}{\sfP^o}\leq\rho} \bigg\{  \alpha h(\theta;\sfP) + (1-\alpha)h(\theta';\sfP) -\frac{\gamma}{2}\alpha(1-\alpha)\normns{\theta-\theta'}^2_{\Sigma_\sfP} \bigg\} \\
        &\stackleq{(b)} \alpha\cLKL(\theta;\rho) + (1-\alpha)\cLKL(\theta';\rho) + \sup_{\sfP\colon\KLdiverg{\sfP}{\sfP^o}\leq\rho}-\frac{\gamma}{2}\alpha(1-\alpha)\normns{\theta-\theta'}^2_{\Sigma_{\sfP}}\\
        &= \alpha\cLKL(\theta;\rho) + (1-\alpha)\cLKL(\theta';\rho) - \frac{\gamma}{2}\alpha(1-\alpha)\inf_{\sfP\colon\KLdiverg{\sfP}{\sfP^o}\leq\rho}\normns{\theta-\theta'}^2_{\Sigma_{\sfP}} \\
        &\leq \alpha\cLKL(\theta;\rho) + (1-\alpha)\cLKL(\theta';\rho) - \frac{\gamma}{2}\alpha(1-\alpha)\inf_{\sfP\colon\KLdiverg{\sfP}{\sfP^o}\leq\rho}\lambdamin(\Sigma_{\sfP})\normns{\theta-\theta'}^2_2 \\
        &\stackleq{(c)} \alpha\cLKL(\theta;\rho) + (1-\alpha)\cLKL(\theta';\rho)- \frac{\gamma\lambda}{2}\alpha(1-\alpha)\normns{\theta-\theta'}^2_2.
    \end{align*}
    Note that the function $g(\theta)=\EE_{z\sim\sfP}[l(z;\theta)]$ is $\gamma$-strongly convex with respect to $\normns{\cdot}_{\Sigma_\sfP}$ by \cref{lem:expected-dpo-strongly-convex}. We use this fact in $(a)$. The inequality in $(b)$ is due to $\sup_x (f(x)+g(x))\leq \sup_x f(x) + \sup_x g(x)$. The last inequality $(c)$ is because $\lambdamin(\Sigma_{\sfP}) \geq \lambda$, for all $\sfP\in\cP_{\mathrm{KL}}$ by \cref{assum:uniform-data-cov-assumption}. This implies that $\cLKL$ is a $\gamma\lambda$-strongly convex function with respect to $\normns{\cdot}_2$.
\end{proof}

\subsection{Proof of Policy Parameter Convergence of KLDPO}\label{sec:proof-of-robust-policy-param-converg-kl}
By \cref{lem:convergence-of-kldpo-loss}, we have that, with probability at least $1-\delta$,
    \begin{align*}
        \cLKL(\thetaKL_n;\rho) &-\cLKL(\thetaKL;\rho) \\
        &= \cLKL(\thetaKL_n;\rho)-\cLKL_n(\thetaKL_n;\rho)+\cLKL_n(\thetaKL_n;\rho)-\cLKL_n(\thetaKL;\rho) +\cLKL_n(\thetaKL;\rho)-\cLKL(\thetaKL;\rho)\\
        &\leq \absns{\cLKL(\thetaKL_n;\rho)-\cLKL_n(\thetaKL_n;\rho)} + \absns{\cLKL_n(\thetaKL;\rho)-\cLKL(\thetaKL;\rho)} \\
        &\leq 2\lambdaoverline\sqrt{\frac{\exp{L/\lambdaunderline}\log(2/\delta)}{2n}}, \quad\forall\epsilon>0,
    \end{align*}
    where the first inequality is because $\thetaKL_n$ is the minimizer of $\cLKL_n$. Now by the $\gamma\lambda$-strong convexity of $\cLKL$ (see \cref{lem:sup-dpo-strongly-convex-kl}) and \cref{lem:strongly-convex-uniqueness-of-minimizer}.II, we have that
    \begin{equation*}
        \normns{\thetaKL_n -\thetaKL}_2^2 \leq \sqrt{\frac{8\lambdaoverline^2\exp{L/\lambdaunderline}\log(2/\delta)}{\gamma^2\lambda^2n}}, \quad\forall\epsilon >0.
    \end{equation*}

\section{Proof of Tractable KLDPO}\label{sec:proof-tractable-kldpo}
Next, we prove the formal version of \cref{prop:KL-dual-worst-case-informal}.
\begin{theorem}\label{thm:KL-dual-worst-case-formal}
    Suppose we have the following distributionally robust loss that corresponds to a KL uncertainty set:
    \begin{equation*}
        \sup_{\sfP\colon\KLdiverg{\sfP}{\sfP_n^o}\leq \rho} \EE_{z\sim\sfP} [l(z;\theta)].
    \end{equation*}
    A worst distribution $\sfPunderline\in\RR^n$ is related to the empirical nominal distribution $\sfP^o_n$, which is constructed using $n$ i.i.d. samples $z_1,\dots,z_n$, through
    \begin{equation}
        \sfPunderline(i) = \sfP_n^o(i) \cdot \exp{\frac{l(z_i;\theta)-\mu-\lambda}{\lambda}},
    \end{equation}
    where $\sfPunderline(i)$ corresponds to the worst-case mass on the $i$-th data, and further it is subject to
    \begin{align}
        \sum_{i=1}^n \sfP_n^o(i)\cdot \exp{\frac{l(z_i;\theta)-\mu-\lambda}{\lambda}} \cdot \bigg( \frac{l(z_i;\theta)-\mu-\lambda}{\lambda}\bigg) &= \rho, \label{eq:constr1-of-kl-worst-case-kernel}\\
        \sum_{i=1}^n\sfP_n^o(i)\cdot \exp{\frac{l(z_i;\theta)-\mu-\lambda}{\lambda}} &= 1, \label{eq:constr2-of-kl-worst-case-kernel}\\
        \lambda &\geq 0\label{eq:constr3-of-kl-worst-case-kernel}.
    \end{align}
\end{theorem}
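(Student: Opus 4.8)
The plan is to read the supremum as a finite-dimensional linear program over the intersection of the probability simplex and a Kullback--Leibler ball, and to extract the claimed characterization from its Karush--Kuhn--Tucker (KKT) system. Write $p_i$ for the mass $\sfP(z_i)$ and $q_i\coloneqq\sfP_n^o(i)$; note $q_i>0$ for every $i$ since $\sfP_n^o$ is the empirical distribution on $n$ samples. The problem is
\begin{equation*}
\max_{p\in\RR^n}\ \sum_{i=1}^n p_i\, l(z_i;\theta)\qquad\text{s.t.}\qquad \sum_{i=1}^n p_i=1,\quad \sum_{i=1}^n p_i\log\frac{p_i}{q_i}\le\rho,\quad p_i\ge 0.
\end{equation*}
The objective is linear and the feasible set is convex and compact (a closed subset of the simplex, with the convention $0\log 0=0$), so a maximizer $\sfPunderline$ exists; and since $\KLdiverg{q}{q}=0<\rho$, the nominal point $p=q$ is strictly feasible for the KL constraint, so Slater's condition holds. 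Consequently strong duality holds and the KKT conditions are necessary and sufficient for optimality, yielding multipliers $\lambda^*\ge 0$ for the KL constraint and $\mu^*\in\RR$ for the normalization constraint.

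I would then restrict to the regime $\lambda^*>0$, which is the relevant one here: the degenerate case $\lambda^*=0$ is precisely characterized by \cref{prop:lambda-zero-equiv} and is excluded by \cref{assum:kldpo-assumptions}. When $\lambda^*>0$, the optimum $\sfPunderline$ must have full support, since $p_i^*=0$ would make $\log(p_i^*/q_i)=-\infty$ and the stationarity equation below could not hold; hence the nonnegativity multipliers all vanish. Forming the Lagrangian $\mathcal{L}(p,\lambda,\mu)=\sum_i p_i l(z_i;\theta)-\lambda\bigl(\sum_i p_i\log(p_i/q_i)-\rho\bigr)-\mu\bigl(\sum_i p_i-1\bigr)$, stationarity $\partial\mathcal{L}/\partial p_i=0$ at the optimum reads $l(z_i;\theta)-\lambda^*\bigl(\log(p_i^*/q_i)+1\bigr)-\mu^*=0$, and solving for $p_i^*$ gives exactly $\sfPunderline(i)=q_i\exp\bigl((l(z_i;\theta)-\mu^*-\lambda^*)/\lambda^*\bigr)$, the claimed form.

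It remains to translate primal feasibility and complementary slackness into the three side conditions. Normalization $\sum_i\sfPunderline(i)=1$ is \cref{eq:constr2-of-kl-worst-case-kernel}. Since $\lambda^*>0$, complementary slackness forces the KL constraint to bind; substituting $\log(p_i^*/q_i)=(l(z_i;\theta)-\mu^*-\lambda^*)/\lambda^*$ into $\sum_i p_i^*\log(p_i^*/q_i)=\rho$ yields \cref{eq:constr1-of-kl-worst-case-kernel}, and dual feasibility of the KL multiplier is \cref{eq:constr3-of-kl-worst-case-kernel}. The stationarity computation itself is routine; the main obstacle is the clean handling of the boundary behaviour, namely the justification that we are in the $\lambda^*>0$ regime (equivalently, that the KL radius $\rho$ is not so large that the adversary concentrates all mass on the loss-maximizing atoms and thereby leaves the simplex interior) together with the corresponding argument that $\sfPunderline$ has full support so that the exponential formula is valid coordinate by coordinate. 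Both follow from convexity of the KL ball and strict feasibility of the nominal distribution, but must be stated explicitly, and the degenerate alternative is exactly the one flagged by \cref{prop:lambda-zero-equiv}.
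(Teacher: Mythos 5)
Your proposal is correct and follows essentially the same route as the paper's proof: both cast the supremum as a convex program over the simplex with a KL-ball constraint, form the same Lagrangian, and read off the exponential form of $\sfPunderline$ and the three side conditions from stationarity, primal feasibility, and complementary slackness. Your additional care about Slater's condition, the full-support argument, and the explicit exclusion of the $\lambda^*=0$ regime via \cref{prop:lambda-zero-equiv} is if anything more rigorous than the paper's informal justification that the KL constraint binds.
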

\begin{proof}
    We re-write the objective as a convex optimization problem
    \begin{align}
        \tag{P1}\underset{p\in\RR^n}{\textbf{maximize}} \quad& \inner{p, \;l} \nonumber \\
        \textbf{subject to} \quad &\sum_{i=1}^n p_i\log\bigg( \frac{p_i}{q_i}\bigg) \leq \rho, \nonumber\\
        & \vecofone^{\top}p=1, \nonumber\\
        & p_i\geq 0, \forall i.\nonumber
    \end{align}
    First, we ignore the constraint $p_i\geq 0$ which will be automatically satisfied later. Now, the associated Lagrangian function takes the form
    \begin{equation*}
        L(p,\lambda,\mu) = \inner{p,\; l} + \lambda (\rho - \sum_{i=1}^n p_i\log(p_i/q_i)) + \mu(1- \vecofone^{\top} p).
    \end{equation*}
    We can calculate the KKT conditions as follows
    \begin{align*}
        \frac{\partial L}{\partial p_i} = l_i - \lambda(\log(p_i/q_i)+ 1) - \mu = 0.
    \end{align*}
    This implies that
    \begin{equation*}
        p_i = q_i \exp{\frac{l_i -\mu-\lambda}{\lambda}}, \quad \forall i\in\{ 1,\dots, n\}.
    \end{equation*}
    In addition, we have other KKT conditions as follows
    \begin{align*}
        \sum_{i=1}^n p_i \log(p_i/q_i) - \rho &\leq 0 ,\\
        \sum_{i=1}^n p_i &= 1 ,\\
        \lambda &\geq 0, \\
        \lambda ( \sum_{i=1}^n p_i \log(p_i/q_i) -\rho) &= 0.
    \end{align*}
    From complimentary slackness, we have
    \begin{equation*}
        \sum_{i=1}^n p_i \log(p_i/q_i) = \rho.
    \end{equation*}
    The unconstrained optimum would lie at a vertex far from $q$, thus the best achievable objective under the KL constraint is obtained by pushing the distribution as far as possible, thereby maximizing the utility of the KL budget. Plugging in $p_i = q_i \exp{\lambda^{-1}(l_i -\mu-\lambda)}$, we have
    \begin{equation*}
        \sum_{i=1}^n  q_i \exp{\frac{l_i-\mu-\lambda}{\lambda}} \cdot \bigg( \frac{l_i-\mu-\lambda}{\lambda}\bigg) = \rho.
    \end{equation*}
    Also, we have $\sum_{i=1}^n q_i\exp{\lambda^{-1}(l_i-\mu-\lambda)} = 1$. In addition, it is easy to see that the constraints $p_i\geq 0$, $\forall i$, are satisfied since $q_i\exp{\lambda^{-1}(l_i-\mu-\lambda)}\geq 0$.
\end{proof}
Here, $\mu$ and $\lambda$ are implicitly defined by the constraints (\cref{eq:constr1-of-kl-worst-case-kernel}-\cref{eq:constr3-of-kl-worst-case-kernel}). Now we prove that the dual variables $-\mu-\lambda$ can be upper bounded by $-\sum_{i=1}^n q_i l(z_i;\theta)$.
\begin{proposition}\label{prop:bound-on-dual-var}
    $-\mu-\lambda$ satisfies $-\mu-\lambda \leq -\sum_{i=1}^n \sfP_n^o(i) l(z_i;\theta)$.
\end{proposition}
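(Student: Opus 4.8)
The plan is to read the bound off directly from the KKT characterization in \cref{thm:KL-dual-worst-case-formal}, with convexity of the exponential doing the real work.

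\textbf{Step 1 (Setup).} Invoke \cref{thm:KL-dual-worst-case-formal} to write the worst-case mass as $\sfPunderline(i) = \sfP_n^o(i)\,\exp{(l(z_i;\theta)-\mu-\lambda)/\lambda}$; this is legitimate with $\lambda>0$, the degenerate case $\lambda^*=0$ having been excluded by the discussion around \cref{prop:lambda-zero-equiv}. Introduce $t_i \coloneqq (l(z_i;\theta)-\mu-\lambda)/\lambda$, so that the normalization constraint \cref{eq:constr2-of-kl-worst-case-kernel} reads $\sum_{i=1}^n \sfP_n^o(i)\,\exp{t_i} = 1$.

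\textbf{Step 2 (Jensen).} Since $\sfP_n^o$ is a probability vector and $x\mapsto \exp{x}$ is convex, Jensen's inequality gives $1 = \sum_{i=1}^n \sfP_n^o(i)\,\exp{t_i} \ge \exp{\sum_{i=1}^n \sfP_n^o(i)\,t_i}$, and hence $\sum_{i=1}^n \sfP_n^o(i)\,t_i \le 0$.

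\textbf{Step 3 (Unpack).} Substituting the definition of $t_i$, multiplying through by $\lambda>0$, and using $\sum_{i=1}^n \sfP_n^o(i)=1$ yields $\sum_{i=1}^n \sfP_n^o(i)\,l(z_i;\theta) - \mu - \lambda \le 0$, i.e. $-\mu-\lambda \le -\sum_{i=1}^n \sfP_n^o(i)\,l(z_i;\theta)$, which is the claim.

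The argument has no genuine obstacle; the only point requiring care is the sign, namely that $\lambda>0$ is needed to preserve the direction of the inequality when clearing the denominator — which is exactly why the $\lambda^*=0$ case is dispatched separately upstream. An alternative derivation bypasses Jensen by noting that $\lambda\mapsto \lambda\log\EE_{\sfP_n^o}[\exp{l/\lambda}] \ge \EE_{\sfP_n^o}[l]$ and plugging the optimal $\lambda$ into the dual form of \cref{lem:kldpo-dual-reformulation-complete}, but the Jensen route is the shortest and most self-contained.
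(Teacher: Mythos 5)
Your argument is correct and is essentially identical to the paper's own proof: both start from the normalization constraint $\sum_{i=1}^n \sfP_n^o(i)\exp{(l(z_i;\theta)-\mu-\lambda)/\lambda}=1$, apply Jensen's inequality to the convex exponential, and rearrange using $\lambda>0$. Your added remark on why $\lambda>0$ matters for the sign is a nice touch but does not change the route.
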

\begin{proof}
    Recall the constraint
    \begin{equation*}
        \sum_{i=1}^n q_i \exp{\frac{l(z_i;\theta)-\mu-\lambda}{\lambda}} = 1.
    \end{equation*}
    By applying Jensen's inequality, we have
    \begin{equation*}
        \exp{\sum_{i=1}^n q_i \bigg(\frac{l(z_i;\theta)-\mu-\lambda}{\lambda} \bigg)} \leq 1.
    \end{equation*}
    Some algebra give us
    \begin{equation*}
        \exp{\sum_{i=1}^n q_i \bigg( \frac{l(z_i;\theta)}{\lambda}\bigg)} \leq \exp{\frac{\mu+\lambda}{\lambda}}.
    \end{equation*}
    This implies  that $-\mu-\lambda \leq -\sum_{i=1}^n q_i l(z_i;\theta)$.
\end{proof}

\section{Additional Experiment Results}\label{sec:additional-experiment-results}

\subsection{ArmoRM Multi-objective Alignment}\label{sec:armo-additional-results}

\begin{figure*}[ht]
    \centering
    \includegraphics[width=\linewidth]{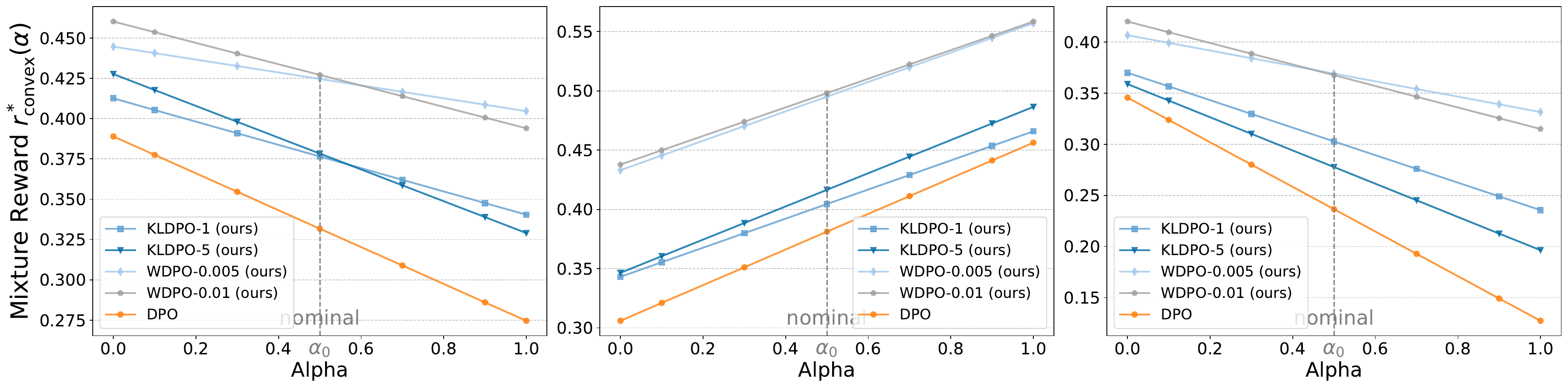}
        \caption{\textit{Evaluation of WDPO, KLDPO and DPO on $r^*_{\mathrm{convex}}(\alpha)$ in ArmoRM Multi-objective Alignment.} We evaluate WDPO with robustness parameter $\rho_o\in\{0.005,0.01\}$ and KLDPO with robustness temperature $\tau\in\{1, 5\}$.}
    \label{fig:ArmoRM-llama-1B-lineplot}
\end{figure*}
Similar to the Emotion Alignment experiments, we generate preference labels according to convex combinations of two reward objectives, i.e., $r^*_{\mathrm{convex}}$ defined as \textit{Mixture Evaluation} in previous section. Specifically, we consider \textbf{three} pairs of objectives: (1) \textit{Ultrafeedback-Honesty} and \textit{Helpsteer-Complexity}, (2) \textit{Ultrafeedback-Helpfulness} and \textit{Helpsteer-Coherence}, and (3) \textit{Ultrafeedback-truthfulness} and \textit{Helpsteer-Complexity}. We generate preference labels according to $\alpha^o=0.5$ for all three cases. All models are trained for 4 epochs. Then we introduce reward shift by evaluating WDPO, KLDPO, and DPO on $r^*_{\mathrm{convex}}(\alpha)$, where $\alpha\in\{0,0.1,0.3,0.5,0.7,0.9,1\}$. In the first plot of \cref{fig:ArmoRM-llama-1B-lineplot}, the training preferences are generated accoding to reward pair (1). We observe that WDPO and KLDPO achieve superior performance compared to DPO. In particular, when reward distribution shift happens in two directions (towards standalone \textit{Ultrafeedback-Honesty} and \textit{Helpsteer-Complexity}), they clearly outperform DPO. In the middle plot, the training preferences are generated according to reward pair (2). We observe that WDPO with both $\rho_o=0.005$ and $\rho_o=0.01$ are particularly robust against reward distribution shift. Lastly, in the third plot, the training preferences are generated according reward pair (3). We observe that both WDPO and KLDPO achieve notable robustness compared to DPO.

\subsection{Leaderboard Alignment}\label{sec:leaderboard-additional-results}
In this section, we include alignment results evaluated on all OpenLLM Leaderboard v2 \citep{open-llm-leaderboard-v2} sub-tasks. We list all sub-task names in \cref{tab:all-subtasks}. 

\begin{table}[ht]
\caption{\textit{All sub-task names in OpenLLM Leaderboard v2.}}
\centering
\resizebox{\linewidth}{!}{%
\begin{tabular}{llll}
     1 & \texttt{bbh-boolean-expressions} & 21 & \texttt{bbh-tracking-shuffled-objects-five-objects} \\
     2 & \texttt{bbh-causal-judgement} & 22 & \texttt{bbh-tracking-shuffled-objects-seven-objects} \\
     3 & \texttt{bbh-date-understanding} & 23 & \texttt{bbh-tracking-shuffled-objects-three-objects}\\
     4 & \texttt{bbh-disambiguation-qa} & 24 & \texttt{bbh-web-of-lies}\\
     5 & \texttt{bbh-formal-fallacies} & 25 & \texttt{gpqa-diamond} \\
     6 & \texttt{bbh-geometric-shapes} & 26 & \texttt{gpqa-extended} \\
     7 & \texttt{bbh-hyperbaton} & 27 & \texttt{gpqa-main}\\
     8 & \texttt{bbh-logical-deduction-five-objects} & 28 & \texttt{ifeval}\\
     9 & \texttt{bbh-logical-deduction-seven-objects} & 29 & \texttt{math-algebra-hard}\\
     10 & \texttt{bbh-logical-deduction-three-objects} & 30 & \texttt{math-counting-and-prob-hard} \\
     11 & \texttt{bbh-movie-recommendation} & 31 & \texttt{math-geometry-hard}\\
     12 & \texttt{bbh-navigate} & 32 & \texttt{math-intermediate-algebra-hard} \\
     13 & \texttt{bbh-object-counting} & 33 & \texttt{math-num-theory-hard}\\
     14 & \texttt{bbh-penguins-in-a-table} & 34 & \texttt{math-prealgebra-hard} \\
     15 & \texttt{bbh-reasoning-about-colored-objects} & 35 & \texttt{math-precalculus-hard}\\
     16 & \texttt{bbh-ruin-names} & 36 & \texttt{mmlu-pro}\\
     17 & \texttt{bbh-salient-translation-error-detection} & 37 & \texttt{musr-murder-mysteries}\\
     18 & \texttt{bbh-snarks} & 38 & \texttt{musr-object-placements}\\
     19 & \texttt{bbh-sports-understanding} & 39 & \texttt{musr-team-allocation}\\
     20 & \texttt{bbh-temporal-sequences} & & \\
\end{tabular}
}
\label{tab:all-subtasks}
\end{table}

\textbf{LLaMA-3.2-1B results:} In \cref{tab:full-leaderboard-sftalgs-1B}, we compare DPO, KLDPO, and WDPO trained using LLaMA-3.2-1B on \textbf{all} 39 sub-tasks of OpenLLM Leaderboard v2. We observe that our WDPO and KLDPO methods achieve superior alignment performance on the majority of subtasks. Although WDPO and KLDPO slightly underperform on few subtasks, their primary strength lies in generalization, precisely because they consistently enhance performance across a diverse range of subtasks.

\textbf{LLaMA-3.1-8B results:} In \cref{tab:full-leaderboard-sftdpokldpo-8B}, we compare DPO and KLDPO, both trained using LLaMA-3.1-8B. Earlier, we demonstrated that WDPO, trained on LLaMA-3.2-1B, outperforms both DPO and KLDPO. However, WDPO's requirement for dual gradient computations increases its computational complexity. Due to resource constraints, we present KLDPO results for the 8B model, as it is more scalable. Following the LLaMA-3.2-1B experiments, we train KLDPO for two epochs, the point where DPO achieved optimal robustness via early stopping. Notably, KLDPO exhibits exceptional performance on math-related tasks.

\begin{table*}[ht]
\centering
\begin{minipage}{\linewidth}
  \centering
  \resizebox{\linewidth}{!}{%
  \begin{tabular}{lccccccccccccc}
\toprule
\textbf{LLaMA-3.2-1B} & 1 & 2 & 3 & 4 & 5 & 6 & 7 & 8 & 9 & 10 & 11 & 12 & 13 \\
\midrule
DPO at Epoch 2 (early stopping) & \cellcolor[HTML]{ccebc5} \textcolor[HTML]{000000} 0.64 & \cellcolor[HTML]{dff3da} \textcolor[HTML]{000000} 0.50 & \cellcolor[HTML]{e8f6e4} \textcolor[HTML]{000000} 0.36 & \cellcolor[HTML]{ceecc8} \textcolor[HTML]{000000} 0.40 & \cellcolor[HTML]{f0f9ed} \textcolor[HTML]{000000} 0.52 & \cellcolor[HTML]{c0e6b9} \textcolor[HTML]{000000} 0.31 & \cellcolor[HTML]{f0f9ed} \textcolor[HTML]{000000} 0.51 & \cellcolor[HTML]{a9dca3} \textcolor[HTML]{000000} 0.22 & \cellcolor[HTML]{d8f0d2} \textcolor[HTML]{000000} 0.16 & \cellcolor[HTML]{e5f5e0} \textcolor[HTML]{000000} 0.32 & \cellcolor[HTML]{d4eecd} \textcolor[HTML]{000000} 0.35 & \cellcolor[HTML]{bbe4b5} \textcolor[HTML]{000000} 0.49 & \cellcolor[HTML]{f0f9ed} \textcolor[HTML]{000000} 0.36 \\
DPO at Epoch 4 (goodfit) & \cellcolor[HTML]{f0f9ed} \textcolor[HTML]{000000} 0.50 & \cellcolor[HTML]{f0f9ed} \textcolor[HTML]{000000} 0.49 & \cellcolor[HTML]{f0f9ed} \textcolor[HTML]{000000} 0.34 & \cellcolor[HTML]{f0f9ed} \textcolor[HTML]{000000} 0.36 & \cellcolor[HTML]{f0f9ed} \textcolor[HTML]{000000} 0.52 & \cellcolor[HTML]{c0e6b9} \textcolor[HTML]{000000} 0.31 & \cellcolor[HTML]{a9dca3} \textcolor[HTML]{000000} 0.52 & \cellcolor[HTML]{a9dca3} \textcolor[HTML]{000000} 0.22 & \cellcolor[HTML]{f0f9ed} \textcolor[HTML]{000000} 0.13 & \cellcolor[HTML]{f0f9ed} \textcolor[HTML]{000000} 0.31 & \cellcolor[HTML]{f0f9ed} \textcolor[HTML]{000000} 0.34 & \cellcolor[HTML]{f0f9ed} \textcolor[HTML]{000000} 0.45 & \cellcolor[HTML]{f0f9ed} \textcolor[HTML]{000000} 0.36 \\
KLDPO $\tau=0.1$ & \cellcolor[HTML]{b8e3b2} \textcolor[HTML]{000000} 0.69 & \cellcolor[HTML]{a9dca3} \textcolor[HTML]{000000} 0.52 & \cellcolor[HTML]{d0edca} \textcolor[HTML]{000000} 0.40 & \cellcolor[HTML]{a9dca3} \textcolor[HTML]{000000} 0.43 & \cellcolor[HTML]{d3eecd} \textcolor[HTML]{000000} 0.53 & \cellcolor[HTML]{a9dca3} \textcolor[HTML]{000000} 0.34 & \cellcolor[HTML]{a9dca3} \textcolor[HTML]{000000} 0.52 & \cellcolor[HTML]{f0f9ed} \textcolor[HTML]{000000} 0.18 & \cellcolor[HTML]{c3e7bc} \textcolor[HTML]{000000} 0.18 & \cellcolor[HTML]{d3eecd} \textcolor[HTML]{000000} 0.33 & \cellcolor[HTML]{a9dca3} \textcolor[HTML]{000000} 0.36 & \cellcolor[HTML]{ccebc6} \textcolor[HTML]{000000} 0.48 & \cellcolor[HTML]{bbe4b5} \textcolor[HTML]{000000} 0.40 \\
KLDPO $\tau=0.05$ & \cellcolor[HTML]{b0e0aa} \textcolor[HTML]{000000} 0.71 & \cellcolor[HTML]{a9dca3} \textcolor[HTML]{000000} 0.52 & \cellcolor[HTML]{ddf2d7} \textcolor[HTML]{000000} 0.38 & \cellcolor[HTML]{e2f4dd} \textcolor[HTML]{000000} 0.38 & \cellcolor[HTML]{d3eecd} \textcolor[HTML]{000000} 0.53 & \cellcolor[HTML]{a9dca3} \textcolor[HTML]{000000} 0.34 & \cellcolor[HTML]{a9dca3} \textcolor[HTML]{000000} 0.52 & \cellcolor[HTML]{e5f5e0} \textcolor[HTML]{000000} 0.19 & \cellcolor[HTML]{ceecc8} \textcolor[HTML]{000000} 0.17 & \cellcolor[HTML]{d3eecd} \textcolor[HTML]{000000} 0.33 & \cellcolor[HTML]{a9dca3} \textcolor[HTML]{000000} 0.36 & \cellcolor[HTML]{ccebc6} \textcolor[HTML]{000000} 0.48 & \cellcolor[HTML]{a9dca3} \textcolor[HTML]{000000} 0.41 \\
WDPO $\rho_o=0.01$ & \cellcolor[HTML]{a9dca3} \textcolor[HTML]{000000} 0.73 & \cellcolor[HTML]{a9dca3} \textcolor[HTML]{000000} 0.52 & \cellcolor[HTML]{a9dca3} \textcolor[HTML]{000000} 0.45 & \cellcolor[HTML]{ceecc8} \textcolor[HTML]{000000} 0.40 & \cellcolor[HTML]{d3eecd} \textcolor[HTML]{000000} 0.53 & \cellcolor[HTML]{f0f9ed} \textcolor[HTML]{000000} 0.22 & \cellcolor[HTML]{a9dca3} \textcolor[HTML]{000000} 0.52 & \cellcolor[HTML]{c0e6b9} \textcolor[HTML]{000000} 0.21 & \cellcolor[HTML]{a9dca3} \textcolor[HTML]{000000} 0.20 & \cellcolor[HTML]{a9dca3} \textcolor[HTML]{000000} 0.35 & \cellcolor[HTML]{f0f9ed} \textcolor[HTML]{000000} 0.34 & \cellcolor[HTML]{e7f6e3} \textcolor[HTML]{000000} 0.46 & \cellcolor[HTML]{daf1d5} \textcolor[HTML]{000000} 0.38 \\
WDPO $\rho_o=0.005$ & \cellcolor[HTML]{b8e3b2} \textcolor[HTML]{000000} 0.69 & \cellcolor[HTML]{c7e9c0} \textcolor[HTML]{000000} 0.51 & \cellcolor[HTML]{caeac3} \textcolor[HTML]{000000} 0.41 & \cellcolor[HTML]{ceecc8} \textcolor[HTML]{000000} 0.40 & \cellcolor[HTML]{a9dca3} \textcolor[HTML]{000000} 0.54 & \cellcolor[HTML]{b8e3b1} \textcolor[HTML]{000000} 0.32 & \cellcolor[HTML]{a9dca3} \textcolor[HTML]{000000} 0.52 & \cellcolor[HTML]{e5f5e0} \textcolor[HTML]{000000} 0.19 & \cellcolor[HTML]{e2f4dd} \textcolor[HTML]{000000} 0.15 & \cellcolor[HTML]{e5f5e0} \textcolor[HTML]{000000} 0.32 & \cellcolor[HTML]{d4eecd} \textcolor[HTML]{000000} 0.35 & \cellcolor[HTML]{a9dca3} \textcolor[HTML]{000000} 0.50 & \cellcolor[HTML]{bbe4b5} \textcolor[HTML]{000000} 0.40 \\
\bottomrule
\end{tabular}
}
\end{minipage}

\begin{minipage}{\linewidth}
  \centering
  \resizebox{\linewidth}{!}{%
  \begin{tabular}{lccccccccccccc}
\toprule
\textbf{LLaMA-3.2-1B} & 14 & 15 & 16 & 17 & 18 & 19 & 20 & 21 & 22 & 23 & 24 & 25 & 26 \\
\midrule
DPO at Epoch 2 (early stopping) & \cellcolor[HTML]{dff3da} \textcolor[HTML]{000000} 0.25 & \cellcolor[HTML]{d3eecd} \textcolor[HTML]{000000} 0.16 & \cellcolor[HTML]{a9dca3} \textcolor[HTML]{000000} 0.14 & \cellcolor[HTML]{f0f9ed} \textcolor[HTML]{000000} 0.22 & \cellcolor[HTML]{f0f9ed} \textcolor[HTML]{000000} 0.53 & \cellcolor[HTML]{a9dca3} \textcolor[HTML]{000000} 0.51 & \cellcolor[HTML]{a9dca3} \textcolor[HTML]{000000} 0.23 & \cellcolor[HTML]{f0f9ed} \textcolor[HTML]{000000} 0.18 & \cellcolor[HTML]{bbe4b5} \textcolor[HTML]{000000} 0.17 & \cellcolor[HTML]{e5f5e0} \textcolor[HTML]{000000} 0.36 & \cellcolor[HTML]{f0f9ed} \textcolor[HTML]{000000} 0.49 & \cellcolor[HTML]{a9dca3} \textcolor[HTML]{000000} 0.30 & \cellcolor[HTML]{a9dca3} \textcolor[HTML]{000000} 0.28 \\
DPO at Epoch 4 (goodfit) & \cellcolor[HTML]{f0f9ed} \textcolor[HTML]{000000} 0.23 & \cellcolor[HTML]{e5f5e0} \textcolor[HTML]{000000} 0.15 & \cellcolor[HTML]{d4eecd} \textcolor[HTML]{000000} 0.12 & \cellcolor[HTML]{f0f9ed} \textcolor[HTML]{000000} 0.22 & \cellcolor[HTML]{f0f9ed} \textcolor[HTML]{000000} 0.53 & \cellcolor[HTML]{f0f9ed} \textcolor[HTML]{000000} 0.49 & \cellcolor[HTML]{a9dca3} \textcolor[HTML]{000000} 0.23 & \cellcolor[HTML]{c7e9c0} \textcolor[HTML]{000000} 0.20 & \cellcolor[HTML]{a9dca3} \textcolor[HTML]{000000} 0.18 & \cellcolor[HTML]{f0f9ed} \textcolor[HTML]{000000} 0.35 & \cellcolor[HTML]{f0f9ed} \textcolor[HTML]{000000} 0.49 & \cellcolor[HTML]{a9dca3} \textcolor[HTML]{000000} 0.30 & \cellcolor[HTML]{e5f5e0} \textcolor[HTML]{000000} 0.25 \\
KLDPO $\tau=0.1$ & \cellcolor[HTML]{a9dca3} \textcolor[HTML]{000000} 0.29 & \cellcolor[HTML]{a9dca3} \textcolor[HTML]{000000} 0.18 & \cellcolor[HTML]{d4eecd} \textcolor[HTML]{000000} 0.12 & \cellcolor[HTML]{f0f9ed} \textcolor[HTML]{000000} 0.22 & \cellcolor[HTML]{a9dca3} \textcolor[HTML]{000000} 0.54 & \cellcolor[HTML]{a9dca3} \textcolor[HTML]{000000} 0.51 & \cellcolor[HTML]{ceecc8} \textcolor[HTML]{000000} 0.20 & \cellcolor[HTML]{a9dca3} \textcolor[HTML]{000000} 0.21 & \cellcolor[HTML]{e7f6e3} \textcolor[HTML]{000000} 0.14 & \cellcolor[HTML]{e5f5e0} \textcolor[HTML]{000000} 0.36 & \cellcolor[HTML]{f0f9ed} \textcolor[HTML]{000000} 0.49 & \cellcolor[HTML]{c7e9c0} \textcolor[HTML]{000000} 0.28 & \cellcolor[HTML]{f0f9ed} \textcolor[HTML]{000000} 0.24 \\
KLDPO $\tau=0.05$ & \cellcolor[HTML]{a9dca3} \textcolor[HTML]{000000} 0.29 & \cellcolor[HTML]{d3eecd} \textcolor[HTML]{000000} 0.16 & \cellcolor[HTML]{d4eecd} \textcolor[HTML]{000000} 0.12 & \cellcolor[HTML]{ebf7e7} \textcolor[HTML]{000000} 0.23 & \cellcolor[HTML]{a9dca3} \textcolor[HTML]{000000} 0.54 & \cellcolor[HTML]{d3eecd} \textcolor[HTML]{000000} 0.50 & \cellcolor[HTML]{ceecc8} \textcolor[HTML]{000000} 0.20 & \cellcolor[HTML]{c7e9c0} \textcolor[HTML]{000000} 0.20 & \cellcolor[HTML]{e7f6e3} \textcolor[HTML]{000000} 0.14 & \cellcolor[HTML]{e5f5e0} \textcolor[HTML]{000000} 0.36 & \cellcolor[HTML]{f0f9ed} \textcolor[HTML]{000000} 0.49 & \cellcolor[HTML]{b8e3b1} \textcolor[HTML]{000000} 0.29 & \cellcolor[HTML]{e5f5e0} \textcolor[HTML]{000000} 0.25 \\
WDPO $\rho_o=0.01$ & \cellcolor[HTML]{d3eecd} \textcolor[HTML]{000000} 0.26 & \cellcolor[HTML]{a9dca3} \textcolor[HTML]{000000} 0.18 & \cellcolor[HTML]{f0f9ed} \textcolor[HTML]{000000} 0.10 & \cellcolor[HTML]{a9dca3} \textcolor[HTML]{000000} 0.30 & \cellcolor[HTML]{a9dca3} \textcolor[HTML]{000000} 0.54 & \cellcolor[HTML]{d3eecd} \textcolor[HTML]{000000} 0.50 & \cellcolor[HTML]{ceecc8} \textcolor[HTML]{000000} 0.20 & \cellcolor[HTML]{f0f9ed} \textcolor[HTML]{000000} 0.18 & \cellcolor[HTML]{f0f9ed} \textcolor[HTML]{000000} 0.13 & \cellcolor[HTML]{a9dca3} \textcolor[HTML]{000000} 0.39 & \cellcolor[HTML]{f0f9ed} \textcolor[HTML]{000000} 0.49 & \cellcolor[HTML]{f0f9ed} \textcolor[HTML]{000000} 0.24 & \cellcolor[HTML]{d3eecd} \textcolor[HTML]{000000} 0.26 \\
WDPO $\rho_o=0.005$ & \cellcolor[HTML]{d3eecd} \textcolor[HTML]{000000} 0.26 & \cellcolor[HTML]{f0f9ed} \textcolor[HTML]{000000} 0.14 & \cellcolor[HTML]{d4eecd} \textcolor[HTML]{000000} 0.12 & \cellcolor[HTML]{d3eecd} \textcolor[HTML]{000000} 0.26 & \cellcolor[HTML]{a9dca3} \textcolor[HTML]{000000} 0.54 & \cellcolor[HTML]{d3eecd} \textcolor[HTML]{000000} 0.50 & \cellcolor[HTML]{f0f9ed} \textcolor[HTML]{000000} 0.16 & \cellcolor[HTML]{c7e9c0} \textcolor[HTML]{000000} 0.20 & \cellcolor[HTML]{ccebc6} \textcolor[HTML]{000000} 0.16 & \cellcolor[HTML]{e5f5e0} \textcolor[HTML]{000000} 0.36 & \cellcolor[HTML]{f0f9ed} \textcolor[HTML]{000000} 0.49 & \cellcolor[HTML]{e9f6e4} \textcolor[HTML]{000000} 0.25 & \cellcolor[HTML]{d3eecd} \textcolor[HTML]{000000} 0.26 \\
\bottomrule
\end{tabular}
}
\end{minipage}

\begin{minipage}{\linewidth}
  \centering
  \resizebox{\linewidth}{!}{%
  \begin{tabular}{lccccccccccccc}
\toprule
\textbf{LLaMA-3.2-1B} & 27 & 28 & 29 & 30 & 31 & 32 & 33 & 34 & 35 & 36 & 37 & 38 & 39 \\
\midrule
DPO at Epoch 2 (early stopping) & \cellcolor[HTML]{f0f9ed} \textcolor[HTML]{000000} 0.22 & \cellcolor[HTML]{f0f9ed} \textcolor[HTML]{000000} 0.48 & \cellcolor[HTML]{f0f9ed} \textcolor[HTML]{000000} 0.14 & \cellcolor[HTML]{a9dca3} \textcolor[HTML]{000000} 0.09 & \cellcolor[HTML]{a9dca3} \textcolor[HTML]{000000} 0.08 & \cellcolor[HTML]{f0f9ed} \textcolor[HTML]{000000} 0.01 & \cellcolor[HTML]{f0f9ed} \textcolor[HTML]{000000} 0.04 & \cellcolor[HTML]{a9dca3} \textcolor[HTML]{000000} 0.19 & \cellcolor[HTML]{f0f9ed} \textcolor[HTML]{000000} 0.01 & \cellcolor[HTML]{f0f9ed} \textcolor[HTML]{000000} 0.17 & \cellcolor[HTML]{dff3da} \textcolor[HTML]{000000} 0.50 & \cellcolor[HTML]{a9dca3} \textcolor[HTML]{000000} 0.26 & \cellcolor[HTML]{a9dca3} \textcolor[HTML]{000000} 0.28 \\
DPO at Epoch 4 (goodfit) & \cellcolor[HTML]{dff3da} \textcolor[HTML]{000000} 0.23 & \cellcolor[HTML]{f0f9ed} \textcolor[HTML]{000000} 0.48 & \cellcolor[HTML]{d7f0d1} \textcolor[HTML]{000000} 0.18 & \cellcolor[HTML]{f0f9ed} \textcolor[HTML]{000000} 0.02 & \cellcolor[HTML]{f0f9ed} \textcolor[HTML]{000000} 0.03 & \cellcolor[HTML]{f0f9ed} \textcolor[HTML]{000000} 0.01 & \cellcolor[HTML]{e5f5e0} \textcolor[HTML]{000000} 0.05 & \cellcolor[HTML]{eaf7e6} \textcolor[HTML]{000000} 0.13 & \cellcolor[HTML]{a9dca3} \textcolor[HTML]{000000} 0.05 & \cellcolor[HTML]{f0f9ed} \textcolor[HTML]{000000} 0.17 & \cellcolor[HTML]{f0f9ed} \textcolor[HTML]{000000} 0.49 & \cellcolor[HTML]{a9dca3} \textcolor[HTML]{000000} 0.26 & \cellcolor[HTML]{f0f9ed} \textcolor[HTML]{000000} 0.23 \\
KLDPO $\tau=0.1$ & \cellcolor[HTML]{c7e9c0} \textcolor[HTML]{000000} 0.24 & \cellcolor[HTML]{caeac3} \textcolor[HTML]{000000} 0.53 & \cellcolor[HTML]{c7e9c0} \textcolor[HTML]{000000} 0.20 & \cellcolor[HTML]{d8f0d2} \textcolor[HTML]{000000} 0.05 & \cellcolor[HTML]{a9dca3} \textcolor[HTML]{000000} 0.08 & \cellcolor[HTML]{f0f9ed} \textcolor[HTML]{000000} 0.01 & \cellcolor[HTML]{a9dca3} \textcolor[HTML]{000000} 0.08 & \cellcolor[HTML]{f0f9ed} \textcolor[HTML]{000000} 0.12 & \cellcolor[HTML]{e5f5e0} \textcolor[HTML]{000000} 0.02 & \cellcolor[HTML]{d4eecd} \textcolor[HTML]{000000} 0.18 & \cellcolor[HTML]{a9dca3} \textcolor[HTML]{000000} 0.52 & \cellcolor[HTML]{daf1d5} \textcolor[HTML]{000000} 0.23 & \cellcolor[HTML]{e7f6e3} \textcolor[HTML]{000000} 0.24 \\
KLDPO $\tau=0.05$ & \cellcolor[HTML]{c7e9c0} \textcolor[HTML]{000000} 0.24 & \cellcolor[HTML]{a9dca3} \textcolor[HTML]{000000} 0.56 & \cellcolor[HTML]{bde5b6} \textcolor[HTML]{000000} 0.21 & \cellcolor[HTML]{d8f0d2} \textcolor[HTML]{000000} 0.05 & \cellcolor[HTML]{daf1d5} \textcolor[HTML]{000000} 0.05 & \cellcolor[HTML]{c7e9c0} \textcolor[HTML]{000000} 0.03 & \cellcolor[HTML]{e5f5e0} \textcolor[HTML]{000000} 0.05 & \cellcolor[HTML]{eaf7e6} \textcolor[HTML]{000000} 0.13 & \cellcolor[HTML]{c0e6b9} \textcolor[HTML]{000000} 0.04 & \cellcolor[HTML]{d4eecd} \textcolor[HTML]{000000} 0.18 & \cellcolor[HTML]{a9dca3} \textcolor[HTML]{000000} 0.52 & \cellcolor[HTML]{f0f9ed} \textcolor[HTML]{000000} 0.21 & \cellcolor[HTML]{e7f6e3} \textcolor[HTML]{000000} 0.24 \\
WDPO $\rho_o=0.01$ & \cellcolor[HTML]{c7e9c0} \textcolor[HTML]{000000} 0.24 & \cellcolor[HTML]{d3eecd} \textcolor[HTML]{000000} 0.52 & \cellcolor[HTML]{a9dca3} \textcolor[HTML]{000000} 0.23 & \cellcolor[HTML]{c3e7bc} \textcolor[HTML]{000000} 0.07 & \cellcolor[HTML]{bbe4b5} \textcolor[HTML]{000000} 0.07 & \cellcolor[HTML]{c7e9c0} \textcolor[HTML]{000000} 0.03 & \cellcolor[HTML]{e5f5e0} \textcolor[HTML]{000000} 0.05 & \cellcolor[HTML]{eaf7e6} \textcolor[HTML]{000000} 0.13 & \cellcolor[HTML]{c0e6b9} \textcolor[HTML]{000000} 0.04 & \cellcolor[HTML]{a9dca3} \textcolor[HTML]{000000} 0.19 & \cellcolor[HTML]{a9dca3} \textcolor[HTML]{000000} 0.52 & \cellcolor[HTML]{ccebc6} \textcolor[HTML]{000000} 0.24 & \cellcolor[HTML]{daf1d5} \textcolor[HTML]{000000} 0.25 \\
WDPO $\rho_o=0.005$ & \cellcolor[HTML]{a9dca3} \textcolor[HTML]{000000} 0.25 & \cellcolor[HTML]{ebf7e7} \textcolor[HTML]{000000} 0.49 & \cellcolor[HTML]{cfecc9} \textcolor[HTML]{000000} 0.19 & \cellcolor[HTML]{c3e7bc} \textcolor[HTML]{000000} 0.07 & \cellcolor[HTML]{a9dca3} \textcolor[HTML]{000000} 0.08 & \cellcolor[HTML]{a9dca3} \textcolor[HTML]{000000} 0.04 & \cellcolor[HTML]{d4eecd} \textcolor[HTML]{000000} 0.06 & \cellcolor[HTML]{a9dca3} \textcolor[HTML]{000000} 0.19 & \cellcolor[HTML]{d4eecd} \textcolor[HTML]{000000} 0.03 & \cellcolor[HTML]{a9dca3} \textcolor[HTML]{000000} 0.19 & \cellcolor[HTML]{c7e9c0} \textcolor[HTML]{000000} 0.51 & \cellcolor[HTML]{bbe4b5} \textcolor[HTML]{000000} 0.25 & \cellcolor[HTML]{e7f6e3} \textcolor[HTML]{000000} 0.24 \\
\bottomrule
\end{tabular}
}
\end{minipage}
\caption{\textit{Evaluation of DPO, KLDPO, and WDPO on all OpenLLM Leaderboard v2 sub-tasks.}}
\label{tab:full-leaderboard-sftalgs-1B}
\end{table*}

\begin{table*}[ht]
\centering

\begin{minipage}{\linewidth}
  \centering
  \resizebox{\linewidth}{!}{%
  \begin{tabular}{lccccccccccccc}
\toprule
\textbf{LLaMA-3.1-8B} & 1 & 2 & 3 & 4 & 5 & 6 & 7 & 8 & 9 & 10 & 11 & 12 & 13 \\
\midrule
DPO at Epoch 2 (early stopping) & \cellcolor[HTML]{e7f6e3} \textcolor[HTML]{000000} 0.72 & \cellcolor[HTML]{a9dca3} \textcolor[HTML]{000000} 0.60 & \cellcolor[HTML]{a9dca3} \textcolor[HTML]{000000} 0.51 & \cellcolor[HTML]{a9dca3} \textcolor[HTML]{000000} 0.64 & \cellcolor[HTML]{a9dca3} \textcolor[HTML]{000000} 0.57 & \cellcolor[HTML]{f0f9ed} \textcolor[HTML]{000000} 0.29 & \cellcolor[HTML]{a9dca3} \textcolor[HTML]{000000} 0.65 & \cellcolor[HTML]{b6e2af} \textcolor[HTML]{000000} 0.41 & \cellcolor[HTML]{c0e6b9} \textcolor[HTML]{000000} 0.39 & \cellcolor[HTML]{d3eecd} \textcolor[HTML]{000000} 0.62 & \cellcolor[HTML]{c7e9c0} \textcolor[HTML]{000000} 0.48 & \cellcolor[HTML]{f0f9ed} \textcolor[HTML]{000000} 0.66 & \cellcolor[HTML]{a9dca3} \textcolor[HTML]{000000} 0.32 \\
DPO at Epoch 4 (goodfit) & \cellcolor[HTML]{f0f9ed} \textcolor[HTML]{000000} 0.70 & \cellcolor[HTML]{d3eecd} \textcolor[HTML]{000000} 0.59 & \cellcolor[HTML]{f0f9ed} \textcolor[HTML]{000000} 0.47 & \cellcolor[HTML]{f0f9ed} \textcolor[HTML]{000000} 0.59 & \cellcolor[HTML]{d3eecd} \textcolor[HTML]{000000} 0.56 & \cellcolor[HTML]{e7f6e3} \textcolor[HTML]{000000} 0.30 & \cellcolor[HTML]{a9dca3} \textcolor[HTML]{000000} 0.65 & \cellcolor[HTML]{a9dca3} \textcolor[HTML]{000000} 0.42 & \cellcolor[HTML]{a9dca3} \textcolor[HTML]{000000} 0.40 & \cellcolor[HTML]{f0f9ed} \textcolor[HTML]{000000} 0.61 & \cellcolor[HTML]{f0f9ed} \textcolor[HTML]{000000} 0.46 & \cellcolor[HTML]{f0f9ed} \textcolor[HTML]{000000} 0.66 & \cellcolor[HTML]{a9dca3} \textcolor[HTML]{000000} 0.32 \\
KLDPO $\tau=0.005$ & \cellcolor[HTML]{b1e0ab} \textcolor[HTML]{000000} 0.79 & \cellcolor[HTML]{f0f9ed} \textcolor[HTML]{000000} 0.58 & \cellcolor[HTML]{a9dca3} \textcolor[HTML]{000000} 0.51 & \cellcolor[HTML]{daf1d5} \textcolor[HTML]{000000} 0.61 & \cellcolor[HTML]{d3eecd} \textcolor[HTML]{000000} 0.56 & \cellcolor[HTML]{bbe4b5} \textcolor[HTML]{000000} 0.33 & \cellcolor[HTML]{f0f9ed} \textcolor[HTML]{000000} 0.62 & \cellcolor[HTML]{f0f9ed} \textcolor[HTML]{000000} 0.35 & \cellcolor[HTML]{f0f9ed} \textcolor[HTML]{000000} 0.36 & \cellcolor[HTML]{a9dca3} \textcolor[HTML]{000000} 0.63 & \cellcolor[HTML]{c7e9c0} \textcolor[HTML]{000000} 0.48 & \cellcolor[HTML]{f0f9ed} \textcolor[HTML]{000000} 0.66 & \cellcolor[HTML]{d3eecd} \textcolor[HTML]{000000} 0.31 \\
KLDPO $\tau=0.01$ & \cellcolor[HTML]{a9dca3} \textcolor[HTML]{000000} 0.80 & \cellcolor[HTML]{d3eecd} \textcolor[HTML]{000000} 0.59 & \cellcolor[HTML]{a9dca3} \textcolor[HTML]{000000} 0.51 & \cellcolor[HTML]{f0f9ed} \textcolor[HTML]{000000} 0.59 & \cellcolor[HTML]{f0f9ed} \textcolor[HTML]{000000} 0.55 & \cellcolor[HTML]{a9dca3} \textcolor[HTML]{000000} 0.34 & \cellcolor[HTML]{f0f9ed} \textcolor[HTML]{000000} 0.62 & \cellcolor[HTML]{eaf7e6} \textcolor[HTML]{000000} 0.36 & \cellcolor[HTML]{e5f5e0} \textcolor[HTML]{000000} 0.37 & \cellcolor[HTML]{a9dca3} \textcolor[HTML]{000000} 0.63 & \cellcolor[HTML]{a9dca3} \textcolor[HTML]{000000} 0.49 & \cellcolor[HTML]{f0f9ed} \textcolor[HTML]{000000} 0.66 & \cellcolor[HTML]{f0f9ed} \textcolor[HTML]{000000} 0.30 \\
\bottomrule
\end{tabular}
}
\end{minipage}

\begin{minipage}{\linewidth}
  \centering
  \resizebox{\linewidth}{!}{%
  \begin{tabular}{lccccccccccccc}
\toprule
\textbf{LLaMA-3.1-8B} & 14 & 15 & 16 & 17 & 18 & 19 & 20 & 21 & 22 & 23 & 24 & 25 & 26 \\
\midrule
DPO at Epoch 2 (early stopping) & \cellcolor[HTML]{f0f9ed} \textcolor[HTML]{000000} 0.46 & \cellcolor[HTML]{a9dca3} \textcolor[HTML]{000000} 0.66 & \cellcolor[HTML]{d3eecd} \textcolor[HTML]{000000} 0.65 & \cellcolor[HTML]{f0f9ed} \textcolor[HTML]{000000} 0.51 & \cellcolor[HTML]{f0f9ed} \textcolor[HTML]{000000} 0.61 & \cellcolor[HTML]{f0f9ed} \textcolor[HTML]{000000} 0.68 & \cellcolor[HTML]{eaf7e6} \textcolor[HTML]{000000} 0.41 & \cellcolor[HTML]{f0f9ed} \textcolor[HTML]{000000} 0.21 & \cellcolor[HTML]{ccebc6} \textcolor[HTML]{000000} 0.23 & \cellcolor[HTML]{a9dca3} \textcolor[HTML]{000000} 0.34 & \cellcolor[HTML]{a9dca3} \textcolor[HTML]{000000} 0.50 & \cellcolor[HTML]{ccebc6} \textcolor[HTML]{000000} 0.30 & \cellcolor[HTML]{d3eecd} \textcolor[HTML]{000000} 0.28 \\
DPO at Epoch 4 (goodfit) & \cellcolor[HTML]{a9dca3} \textcolor[HTML]{000000} 0.47 & \cellcolor[HTML]{f0f9ed} \textcolor[HTML]{000000} 0.59 & \cellcolor[HTML]{a9dca3} \textcolor[HTML]{000000} 0.66 & \cellcolor[HTML]{f0f9ed} \textcolor[HTML]{000000} 0.51 & \cellcolor[HTML]{f0f9ed} \textcolor[HTML]{000000} 0.61 & \cellcolor[HTML]{dff3da} \textcolor[HTML]{000000} 0.70 & \cellcolor[HTML]{f0f9ed} \textcolor[HTML]{000000} 0.40 & \cellcolor[HTML]{f0f9ed} \textcolor[HTML]{000000} 0.21 & \cellcolor[HTML]{f0f9ed} \textcolor[HTML]{000000} 0.20 & \cellcolor[HTML]{f0f9ed} \textcolor[HTML]{000000} 0.32 & \cellcolor[HTML]{a9dca3} \textcolor[HTML]{000000} 0.50 & \cellcolor[HTML]{f0f9ed} \textcolor[HTML]{000000} 0.27 & \cellcolor[HTML]{a9dca3} \textcolor[HTML]{000000} 0.31 \\
KLDPO $\tau=0.005$ & \cellcolor[HTML]{a9dca3} \textcolor[HTML]{000000} 0.47 & \cellcolor[HTML]{a9dca3} \textcolor[HTML]{000000} 0.66 & \cellcolor[HTML]{d3eecd} \textcolor[HTML]{000000} 0.65 & \cellcolor[HTML]{a9dca3} \textcolor[HTML]{000000} 0.54 & \cellcolor[HTML]{d3eecd} \textcolor[HTML]{000000} 0.63 & \cellcolor[HTML]{c7e9c0} \textcolor[HTML]{000000} 0.72 & \cellcolor[HTML]{b6e2af} \textcolor[HTML]{000000} 0.46 & \cellcolor[HTML]{b6e2af} \textcolor[HTML]{000000} 0.27 & \cellcolor[HTML]{a9dca3} \textcolor[HTML]{000000} 0.25 & \cellcolor[HTML]{a9dca3} \textcolor[HTML]{000000} 0.34 & \cellcolor[HTML]{f0f9ed} \textcolor[HTML]{000000} 0.49 & \cellcolor[HTML]{daf1d5} \textcolor[HTML]{000000} 0.29 & \cellcolor[HTML]{f0f9ed} \textcolor[HTML]{000000} 0.25 \\
KLDPO $\tau=0.01$ & \cellcolor[HTML]{a9dca3} \textcolor[HTML]{000000} 0.47 & \cellcolor[HTML]{b6e2af} \textcolor[HTML]{000000} 0.65 & \cellcolor[HTML]{f0f9ed} \textcolor[HTML]{000000} 0.64 & \cellcolor[HTML]{c7e9c0} \textcolor[HTML]{000000} 0.53 & \cellcolor[HTML]{a9dca3} \textcolor[HTML]{000000} 0.65 & \cellcolor[HTML]{a9dca3} \textcolor[HTML]{000000} 0.74 & \cellcolor[HTML]{a9dca3} \textcolor[HTML]{000000} 0.47 & \cellcolor[HTML]{a9dca3} \textcolor[HTML]{000000} 0.28 & \cellcolor[HTML]{a9dca3} \textcolor[HTML]{000000} 0.25 & \cellcolor[HTML]{d3eecd} \textcolor[HTML]{000000} 0.33 & \cellcolor[HTML]{a9dca3} \textcolor[HTML]{000000} 0.50 & \cellcolor[HTML]{a9dca3} \textcolor[HTML]{000000} 0.32 & \cellcolor[HTML]{d3eecd} \textcolor[HTML]{000000} 0.28 \\
\bottomrule
\end{tabular}
}
\end{minipage}
\begin{minipage}{\linewidth}
  \centering
  \resizebox{\linewidth}{!}{%
  \begin{tabular}{lccccccccccccc}
\toprule
\textbf{LLaMA-3.1-8B} & 27 & 28 & 29 & 30 & 31 & 32 & 33 & 34 & 35 & 36 & 37 & 38 & 39 \\
\midrule
DPO at Epoch 2 (early stopping) & \cellcolor[HTML]{f0f9ed} \textcolor[HTML]{000000} 0.29 & \cellcolor[HTML]{daf0d4} \textcolor[HTML]{000000} 0.62 & \cellcolor[HTML]{f0f9ed} \textcolor[HTML]{000000} 0.04 & \cellcolor[HTML]{eef8ea} \textcolor[HTML]{000000} 0.02 & \cellcolor[HTML]{f0f9ed} \textcolor[HTML]{000000} 0.02 & \cellcolor[HTML]{e7f6e3} \textcolor[HTML]{000000} 0.02 & \cellcolor[HTML]{ecf8e8} \textcolor[HTML]{000000} 0.05 & \cellcolor[HTML]{eef8ea} \textcolor[HTML]{000000} 0.05 & \cellcolor[HTML]{e5f5e0} \textcolor[HTML]{000000} 0.04 & \cellcolor[HTML]{f0f9ed} \textcolor[HTML]{000000} 0.33 & \cellcolor[HTML]{c7e9c0} \textcolor[HTML]{000000} 0.56 & \cellcolor[HTML]{a9dca3} \textcolor[HTML]{000000} 0.40 & \cellcolor[HTML]{bce4b5} \textcolor[HTML]{000000} 0.35 \\
DPO at Epoch 4 (goodfit) & \cellcolor[HTML]{c0e6b9} \textcolor[HTML]{000000} 0.32 & \cellcolor[HTML]{f0f9ed} \textcolor[HTML]{000000} 0.53 & \cellcolor[HTML]{f0f9ed} \textcolor[HTML]{000000} 0.04 & \cellcolor[HTML]{f0f9ed} \textcolor[HTML]{000000} 0.01 & \cellcolor[HTML]{f0f9ed} \textcolor[HTML]{000000} 0.02 & \cellcolor[HTML]{f0f9ed} \textcolor[HTML]{000000} 0.01 & \cellcolor[HTML]{f0f9ed} \textcolor[HTML]{000000} 0.03 & \cellcolor[HTML]{f0f9ed} \textcolor[HTML]{000000} 0.03 & \cellcolor[HTML]{f0f9ed} \textcolor[HTML]{000000} 0.02 & \cellcolor[HTML]{f0f9ed} \textcolor[HTML]{000000} 0.33 & \cellcolor[HTML]{a9dca3} \textcolor[HTML]{000000} 0.57 & \cellcolor[HTML]{a9dca3} \textcolor[HTML]{000000} 0.40 & \cellcolor[HTML]{a9dca3} \textcolor[HTML]{000000} 0.38 \\
KLDPO $\tau=0.005$ & \cellcolor[HTML]{a9dca3} \textcolor[HTML]{000000} 0.33 & \cellcolor[HTML]{b5e1af} \textcolor[HTML]{000000} 0.72 & \cellcolor[HTML]{addea7} \textcolor[HTML]{000000} 0.42 & \cellcolor[HTML]{a9dca3} \textcolor[HTML]{000000} 0.18 & \cellcolor[HTML]{a9dca3} \textcolor[HTML]{000000} 0.10 & \cellcolor[HTML]{a9dca3} \textcolor[HTML]{000000} 0.06 & \cellcolor[HTML]{a9dca3} \textcolor[HTML]{000000} 0.25 & \cellcolor[HTML]{a9dca3} \textcolor[HTML]{000000} 0.42 & \cellcolor[HTML]{a9dca3} \textcolor[HTML]{000000} 0.10 & \cellcolor[HTML]{a9dca3} \textcolor[HTML]{000000} 0.37 & \cellcolor[HTML]{f0f9ed} \textcolor[HTML]{000000} 0.54 & \cellcolor[HTML]{f0f9ed} \textcolor[HTML]{000000} 0.26 & \cellcolor[HTML]{f0f9ed} \textcolor[HTML]{000000} 0.24 \\
KLDPO $\tau=0.01$ & \cellcolor[HTML]{a9dca3} \textcolor[HTML]{000000} 0.33 & \cellcolor[HTML]{a9dca3} \textcolor[HTML]{000000} 0.75 & \cellcolor[HTML]{a9dca3} \textcolor[HTML]{000000} 0.44 & \cellcolor[HTML]{b4e1ad} \textcolor[HTML]{000000} 0.16 & \cellcolor[HTML]{b4e1ae} \textcolor[HTML]{000000} 0.09 & \cellcolor[HTML]{ccebc6} \textcolor[HTML]{000000} 0.04 & \cellcolor[HTML]{d3eecd} \textcolor[HTML]{000000} 0.14 & \cellcolor[HTML]{abdda4} \textcolor[HTML]{000000} 0.41 & \cellcolor[HTML]{caeac3} \textcolor[HTML]{000000} 0.07 & \cellcolor[HTML]{a9dca3} \textcolor[HTML]{000000} 0.37 & \cellcolor[HTML]{f0f9ed} \textcolor[HTML]{000000} 0.54 & \cellcolor[HTML]{f0f9ed} \textcolor[HTML]{000000} 0.26 & \cellcolor[HTML]{e2f4dd} \textcolor[HTML]{000000} 0.28 \\
\bottomrule
\end{tabular}
}
\end{minipage}
\caption{\textit{Evaluation of DPO and KLDPO on all OpenLLM Leaderboard v2 sub-tasks.}}
\label{tab:full-leaderboard-sftdpokldpo-8B}
\end{table*}

\section{Additional Experiment Details}\label{sec:additional-experiment-details}

\textbf{Reward Model Training:} 
The raw Emotion dataset \citep{saravia-etal-2018-carer} consists of text samples paired with multi-class labels for six different emotion classes (\textit{joy, sadness, love, anger, fear, and surprise}). This dataset was then transformed into a multi-label dataset, referred to as the Emotion Reward Dataset. To create the multi-label dataset, the \textit{surprise} class was excluded due to its limited representation in the original dataset. Following this, up to three random text samples from the raw dataset were concatenated, and their associated labels were merged. \textbf{This pre-processing step ensured that the reward model encountered text samples representing multiple emotions during training}. 

For the reward model, GPT-2 was employed, augmented with a classification head applied to the last token. The model was trained using a sigmoid activation function and binary cross-entropy loss, adhering to the standard multilabel classification framework. Training was conducted over 8 epochs with a batch size of 64, utilizing the Adam optimizer with a learning rate of $5.0 \times 10^{-5} $ and a weight decay of 0.01. The reward model achieved a test accuracy of 84\% and a test ROC-AUC score of 0.99. The emotion-specific scores predicted by this reward model were treated as the rewards for individual emotions. The ArmoRM setups did not need any reward model training.

\textbf{Supervised Fine-Tuning:}
Before training the WDPO algorithm, it is essential to ensure that the model familiarize with the types of texts present in the dataset. To achieve this, we performed supervised fine-tuning (SFT). We selected GPT-2 as the base language model and trained it to predict the next token based on the text samples in the emotion dataset. The maximum length of each text sample was capped at 68 tokens. The model was trained for 10 epochs with a batch size of 64. The training used the Adam optimizer \citep{kingma2014adam} with a learning rate of $5.0 \times 10^{-7} $ following 12 warmup steps. Additionally, a maximum gradient norm of 10 was applied to stabilize the training. The ArmoRM setups did not need any SFT as we used Intruct models which have already undergone multiple rounds of SFT and alignment.

\textbf{Data Generation:} (1) Emotion Alignment: A preference dataset was created, consisting of a chosen and a rejected completion for each prompt in the dataset. The first four tokens from each text in the emotion dataset were used as prompts. Using the SFT model, two completions were generated for each prompt. These completions were generated with a \texttt{top-k} value of 0, \texttt{top-p} of 1, and up to 64 new tokens. The completions were then evaluated using the reward model, and the chosen and rejected completions were determined based on a mixed metric derived from the predicted rewards. (2) ArmoRM multi-objective Alignment: Similar to the Emotion setup, we generated a preference datset by sampling two completions per prompt from the Helpsteer2 dataset. Each completion was sampled with a temperature of 0.7, \texttt{top-p} of 1 and up to 1024 new tokens. The prompts were also truncated to a miximum of 1024 tokens. We then fed these prompt-completion pairs to ArmoRM and used the scores from the first stage of the model as our multi-objective rewards. The chosen and rejected completions were determined based on a mixed metric derived from the predicted rewards. (3) Leaderboard Alignment: In this setup we sampled 10 completions per prompt in the Helpsteer2 dataset. Each completion was sampled with a temperature of 0.7, \texttt{top-p} of 1 and up to 1024 new tokens. We then fed these prompt-completion pairs to ArmoRM and used the scores from the second stage of the model as our reward, the completion with the maximum reward was our chosen completion while that with the minimum reward was our rejected completion.

\textbf{WDPO Implementation:} (1) In WDPO training, one of the main challenges is calculating the gradient penalty of the DPO loss with respect to the input. However, since the input is tokenized as integers, gradient cannot be directly calculated. To address this, gradient is calculated with respect to the output of the embedding layer, where gradients are tracked. (2) In line 4 of the tractable WDPO algorithm (\cref{algo:WDPO-with-gradient-regularizer}), we compute the gradient regularizer: $\cR(\pi_\theta;\cD) = \rho_o (\EE_{z\sim\cD}\normns{\grad_z l(z; \theta)}_2^2 )^{1/2}$. A key implementation challenge arises in distributed LLM training. A naive approach computes the gradient of the pointwise DPO loss with respect to each input, averages the gradient norms over the micro-batch, and applies this as a regularizer to the batch DPO loss on each worker. However, due to the typically small micro-batch sizes in large-scale LLM training, this averaging is performed over very few samples, resulting in a highly noisy and unstable gradient penalty. To mitigate this, we exploit the inequality $\sqrt{x} \leq x$ for $x \geq 1$, allowing us to upper bound the regularizer as:
\begin{equation*}
    \cR(\pi_\theta;\cD) = \rho_o (\EE_{z\sim\cD}\normns{\grad_z l(z; \theta)}_2^2 )^{1/2} \leq \rho_o (\EE_{z\sim\cD}\normns{\grad_z l(z; \theta)}_2^2).
\end{equation*}
This leads to a tractable approximation of the pointwise WDPO loss:
\begin{equation*}
    l^W(z_i,\rho_o) = l(z_i;\theta) + \rho_o \normns{\grad_z l(z; \theta)}_2^2,
\end{equation*}
where $l(z_i;\theta)$ denotes the standard DPO loss for sample $z_i$.

\textbf{WDPO Training:} (1) Emotion alignment: The model was trained for 40 epochs with an effective batch size of 64. We used Adam optimizer, with a learning rate of $5.0 \times 10^{-7} $ following 12 warmup steps. A maximum gradient norm of 10 was applied to ensure stable training. The DPO beta parameter was set to 0.1 for all training runs. Experiments were conducted on a single 40 GB A100 GPU, requiring gradient accumulation over two steps. (2) LLaMA experiments: The models were trained for 8 epochs with an effective batch size of 128. We used Adam optimizer with a learning rate of $5.0 \times 10^{-7}$ after $10\%$ warmup ratio and then the learning rate was reduced using a cosine scheduler. The DPO beta parameter was set to 0.01 for all training runs. Experiments were conducted on an 8xH100 GPU setup, requiring loading the model in bfloat16 and training with DeepSpeed ZeRO-2 optimizer \citep{rajbhandari2020zero}.

\textbf{KLDPO Implementation:} In line 3 of the tractable KLDPO algorithm (\cref{algo:kldpo-dual-approximation}), we compute the 
approximate worst-case kernel $\sfPunderline(i) \propto \exp{(1/\tau) (l(z_i;\theta) - (1/n)\textstyle\sum\nolimits_{i=1}^n l(z_i;\theta))}$. A key implementation challenge arises in distributed LLM training. A naive approach would calculate the $(1/n)\textstyle\sum\nolimits_{i=1}^n l(z_i;\theta)$ term by averaging $l(z_i;\theta)$ across all samples in the micro-batch of its respective worker. However, because micro-batch sizes are typically small in large-scale LLM training, this results in averaging over only a few samples, making the worst-case kernel highly noisy. To mitigate this, we introduce a synchronization step that performs an all-gather operation to collect $l(z_i; \theta)$ values from all workers. This enables averaging over the full batch across all workers, significantly reducing the noise in the worst-case kernel.

\textbf{KLDPO Training:} (1) Emotion alignment: The model was trained for 40 epochs with an effective batch size of 64. We used Adam optimizer \citep{kingma2014adam}, with a learning rate of $5.0 \times 10^{-7} $ following 12 warmup steps. A maximum gradient norm of 10 was applied to ensure stable training. The DPO beta parameter was set to 0.1 for all training runs. Experiments were conducted on a single 40 GB A100 GPU and gradient was accumulated over two steps to keep training consistent across all algorithms. (2) LLaMA experiments: The models were trained for 8 epochs with an effective batch size of 128. We used Adam optimizer with a learning rate of $5.0 \times 10^{-7}$ after $10\%$ warmup ratio and then the learning rate was reduced using a cosine scheduler. The DPO beta parameter was set to 0.01 for all training runs. Experiments were conducted on an 8xH100 GPU setup, requiring loading the model in bfloat16 and training with DeepSpeed ZeRO-2 optimizer \citep{rajbhandari2020zero}.

\section{Limitations}\label{sec:limitations}
\textbf{Theoretical Limitations: } Our theoretical analysis relies on \cref{assum:uniform-data-cov-assumption}, which ensures sufficient data coverage to guarantee strong convexity conditions. Although such data-coverage assumptions are standard within offline learning or fixed-dataset scenarios, they are moderately restrictive, as they require the training dataset to sufficiently cover the space of feature differences between the preferred and dis-preferred actions. The log-linear policy class assumption, while standard and easily extendable to neural network policies under mild additional conditions, does not constitute a significant limitation.

\textbf{Experimental Limitations: }Empirically, Wasserstein Direct Preference Optimization (WDPO) involves two separate gradient computations during training, one for calculating the gradient penalty and another for updating policy parameters via standard gradient descent. This dual-gradient requirement can increase computational complexity and training difficulty, potentially limiting practical scalability and efficiency compared to methods with a single gradient computation.

\section{Impact Statement}\label{sec:impact-statement}
This paper aims to advance the field of machine learning by improving the robustness of direct preference optimization against preference model shifts. Our theoretical insights and empirical evaluations contribute to the reliability of preference-based learning methods. While our work has broad implications for AI alignment and deployment, we do not foresee any immediate societal concerns that require specific highlighting.

\end{document}